\patchcmd{\algocf@latexcaption}{#3}{#3\endgraf}{}{}
\renewcommand{\algocf@typo}{\KwSty}
\newcommand{\envelope}{(\raisebox{-.5pt}{\scalebox{1.45}{\Letter}})}
\newcommand{\N}{\mathbb{N}}
\newcommand{\R}{\mathbb{R}}
\newcommand{\X}{\mathcal{X}}
\newcommand{\intervalcc}[2]{\left[#1,\ #2\right]}
\newcommand{\intervaloo}[2]{\left(#1,\ #2\right)}
\newcommand{\intervalco}[2]{\left[#1,\ #2\right)}
\newcommand{\abs}[1]{\left\vert #1 \right\vert}
\newcommand{\norm}[1]{\left\Vert#1\right\Vert}
\newcommand{\snorm}[1]{\Vert#1\Vert}
\newcommand{\discint}[2]{\{#1,\dotsc,#2\}}
\newcommand{\inint}[2]{\in\discint{#1}{#2}}
\newcommand{\transp}{^T}
\newcommand{\comp}{^C}
\newcommand{\lag}{\mathscr{L}}
\DeclareMathOperator{\proj}{proj}
\newcommand{\argmin}{\operatornamewithlimits{arg\,min}}
\newcommand{\scndmax}{\operatorname{2nd-max}}
\newcommand{\booleanvalues}{\set{\operatorname{true},\ \operatorname{false}}}
\DeclareMathOperator{\finished}{finished}
\DeclareMathOperator{\solver}{solver}
\DeclareMathOperator{\bisection}{Bisection}
\DeclareMathOperator{\newton}{Newton}
\DeclareMathOperator{\halley}{Halley}
\DeclareMathOperator{\newtonsqr}{NewtonSqr}
\DeclareMathOperator{\lo}{lo}
\DeclareMathOperator{\up}{up}
\DeclareMathOperator{\aux}{auxiliary}
\newcommand{\sy}{\mathrm{sum}_{\tilde{y}}}
\newcommand{\scpy}{\mathrm{scp}_{\tilde{p},\,\tilde{y}}}
\DeclareMathOperator{\EZDL}{EZDL}
\DeclareMathOperator{\vect}{vec}
\DeclareMathOperator{\new}{new}
\DeclareMathOperator{\picksample}{pick\_sample}
\DeclareMathOperator{\epoch}{epoch}
\setlist[enumerate]{label=(\alph*)}
\spnewtheorem*{representationtheorem}{Representation Theorem}{\bfseries}{\itshape}
\begin{document}

\title{Efficient Dictionary Learning with Sparseness-Enforcing Projections}
\author{Markus Thom \and Matthias Rapp \and G\"unther Palm}
\authorrunning{Thom, Rapp, and Palm}
\institute{
Communicated by Julien Mairal, Francis Bach, Michael Elad.\\\\
M. Thom \envelope \at
driveU / Institute of Measurement, Control and Microtechnology\\
Ulm University, 89081 Ulm, Germany\\
\email{markus.thom@uni-ulm.de} \and
M. Rapp \at
driveU / Institute of Measurement, Control and Microtechnology\\
Ulm University, 89081 Ulm, Germany\\
\email{matthias.rapp@uni-ulm.de} \and
G. Palm\\
Institute of Neural Information Processing\\
Ulm University, 89081 Ulm, Germany\\
\email{guenther.palm@uni-ulm.de}}
\date{}

\maketitle

\begin{abstract}
Learning dictionaries suitable for sparse coding instead of using engineered bases has proven effective in a variety of image processing tasks.
This paper studies the optimization of dictionaries on image data where the representation is enforced to be explicitly sparse with respect to a smooth, normalized sparseness measure.
This involves the computation of Euclidean projections onto level sets of the sparseness measure.
While previous algorithms for this optimization problem had at least quasi-linear time complexity, here the first algorithm with linear time complexity and constant space complexity is proposed.
The key for this is the mathematically rigorous derivation of a characterization of the projection's result based on a soft-shrinkage function.
This theory is applied in an original algorithm called Easy Dictionary Learning (EZDL), which learns dictionaries with a simple and fast-to-compute Hebbian-like learning rule.
The new algorithm is efficient, expressive and particularly simple to implement.
It is demonstrated that despite its simplicity, the proposed learning algorithm is able to generate a rich variety of dictionaries, in particular a topographic organization of atoms or separable atoms.
Further, the dictionaries are as expressive as those of benchmark learning algorithms in terms of the reproduction quality on entire images, and result in an equivalent denoising performance.
EZDL learns approximately $30\;\%$ faster than the already very efficient Online Dictionary Learning algorithm, and is therefore eligible for rapid data set analysis and problems with vast quantities of learning samples.
\keywords{Sparse coding \and Sparse representations \and Dictionary learning \and Explicit sparseness constraints \and Sparseness-enforcing projections}
\end{abstract}

\section{Introduction}
In a great variety of classical machine learning problems, sparse solutions are attractive because they provide more efficient representations compared to non-sparse solutions.
There is an overwhelming evidence that mammalian brains respect the sparseness principle \citep{Laughlin2003}, which holds true especially for the mammalian visual cortex \citep{Hubel1959,Olshausen1996,Olshausen1997}.
It suggests itself that sparseness be a fundamental prior to a variety of signal processing tasks.
In particular, this includes low-level image processing since natural images can be represented succinctly using structural primitives \citep{Olshausen1997,Mairal2009}.
Interesting and biologically plausible sparse representations were discovered through computer simulations on natural images \citep{Olshausen1996,Olshausen1997}.
Related representations can be obtained by analysis of temporal image sequences \citep{Hateren1998a,Olshausen2003}, stereo image pairs and images with chromatic information \citep{Hoyer2000}, or by enforcing a topographic organization \citep{Hyvarinen2001,Kavukcuoglu2009}.

Sparseness alleviates the effects of random noise in a natural way since it prevents arbitrary combinations of measured signals \citep{Donoho1995,Hyvarinen1999,Elad2006}.
In fact, methods based on sparse representations were shown to achieve state-of-the-art performance for image denoising \citep{Mairal2009}.
Further notable image processing applications that benefit from the efficiency gained through sparseness are as diverse as deblurring \citep{Dong2011}, super-resolution \citep{Yang2010,Yang2012,Dong2011}, compression \citep{Skretting2011,Horev2012}, and depth estimation \citep{Tosic2011}.

\subsection{Dictionary Learning and Sparseness Measures}
Each of these tasks needs a model capable of reproducing the signals to be processed.
In a \emph{linear generative model for sparse coding,} a sample $x\in\R^d$ with $d$ features should be expressed approximately as a linear combination of only a few atoms of a larger dictionary:
\begin{displaymath}
  x\approx W h\text{, where }W\in\R^{d\times n}\text{ and }h\in\R^n\text{ is sparsely populated.}
\end{displaymath}
Here, $W$ is the \emph{dictionary} which is fixed for all samples, and $h$ is a \emph{sparse code word} that depends on the concrete sample.
The $n$ columns of $W$ represent the \emph{atoms}, which are also called bases or filters.
This sparse coding framework is well-suited for overcomplete representations where $n \gg d$.
The dictionary can be generated by wavelets, for example, or adapted to a specific task by solving an optimization problem on measurement data.
The latter is also called \emph{dictionary learning} in this context.

Sparseness acts as a regularizer.
If $h$ was not constrained to sparseness, then trivial choices of $W$ would suffice for perfect reproduction capabilities.
But when $h$ is sparse, then $x$ can be represented by additive superposition of only a small number of bases, thus preventing trivial solutions.

A fundamental problem when working with sparse representations is how to decide on a function that formally assesses the sparseness of a vector.
The $L_0$ pseudo-norm
\begin{displaymath}
  \norm{\cdot}_0\colon\R^n\to\discint{0}{n}\text{,}\quad x\mapsto\abs{\set{i\inint{1}{n} | x_i\neq 0}}\text{,}
\end{displaymath}
simply counts the nonzero entries of its argument.
It is a poor choice since it is non-continuous, prone to random noise and fails to fulfill desirable properties of meaningful sparseness measures \citep{Hurley2009}.

\begin{figure}[t]
  \centering
  \includegraphics[width=67mm]{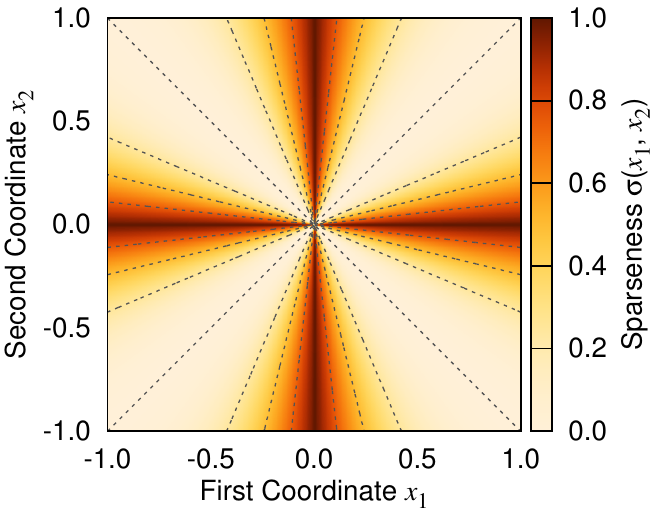}
  \caption{Visualization of Hoyer's sparseness measure $\sigma$. The abscissa and the ordinate specify the entries of a two-dimensional vector, the obtained sparseness degree is color coded. The dashed lines are contour levels at intervals of $0.25$.}
  \label{fig:sparseness_measure_hoyer}
\end{figure}

Throughout this paper, we will use the smooth, normalized sparseness measure $\sigma$ proposed by \citet{Hoyer2004}:
\begin{displaymath}
  \sigma\colon\R^n\setminus\set{0}\to\intervalcc{0}{1}\text{,}\quad x\mapsto\frac{\sqrt{n} - \nicefrac{\norm{x}_1}{\norm{x}_2}}{\sqrt{n}-1}\text{.}
\end{displaymath}
Here, $\norm{\cdot}_1$ and $\norm{\cdot}_2$ denote the Manhattan norm and the Euclidean norm, respectively.
The normalization has been designed such that $\sigma$ attains values between zero and one.
When $x\in\R^n$ satisfies $\sigma(x) = 1$, then all entries of $x$ but one vanish.
Conversely, when $\sigma(x) = 0$, then all the entries of $x$ are equal.
The function $\sigma$ interpolates smoothly between these extremes, see Fig.~\ref{fig:sparseness_measure_hoyer}.
Moreover, it is scale-invariant so that the same sparseness degree is obtained when a vector is multiplied with a nonzero number.
Hence if a quantity is given in other units, for example in millivolts instead of volts, no adjustments whatsoever have to be made.

The sparseness degree with respect to $\sigma$ does not change much if a small amount is added to all entries of a vector, whereas the $L_0$ pseudo-norm would indicate that the new vector is completely non-sparse.
These properties render Hoyer's sparseness measure intuitive, especially for non-experts.
It has been employed successfully for dictionary learning \citep{Hoyer2004,Potluru2013}, and its smoothness results in improved generalization capabilities in classification tasks compared to when the $L_0$ pseudo-norm is used \citep{Thom2013}.

\subsection{Explicit Sparseness Constraints and Projections}
A common approach to dictionary learning is the minimization of the reproduction error between the original samples from a learning set and their approximations provided by a linear generative model under sparseness constraints \citep{Olshausen1996,Olshausen1997,Kreutz-Delgado2003,Mairal2009a}.
It is beneficial for practitioners and end-users to enforce \emph{explicit} sparseness constraints by demanding that all the code words $h$ in a generative model possess a target sparseness degree of $\sigma_H\in\intervaloo{0}{1}$.
This leads to optimization problems of the form
\begin{displaymath}
  \min_{W,\,h}\ \norm{x - Wh}_2^2\text{ so that }\sigma(h) = \sigma_H\text{.}
\end{displaymath}
Here, the objective function is the reproduction error implemented as Euclidean distance.
\emph{Implicit} sparseness constraints, on the other hand, augment the reproduction error with an additive penalty term, yielding optimization problems such as
\begin{displaymath}
  \min_{W,\,h}\ \norm{x - Wh}_2^2 + \lambda\norm{h}_1\text{.}
\end{displaymath}
Here, $\lambda > 0$ is a trade-off constant and the Manhattan norm is used to penalize non-sparse code words as convex relaxation of the $L_0$ pseudo-norm \citep{Donoho2006a}.

Trading off the reproduction error against an additive sparseness penalty is non-trivial since the actual resulting code word sparseness cannot easily be predicted.
Explicit constraints guarantee that the adjusted sparseness degree is met, making tuning of intransparent scale factors such as $\lambda$ in the example above obsolete.
This way, one can concentrate on the actual application of the theory rather than having to develop an intuition of the meaning of each and every parameter.

The mathematical tool to achieve explicit sparseness is a \emph{sparseness-enforcing projection operator}.
This is a vector-valued function which maps any given point in Euclidean space to its nearest point that achieves a pre-set target sparseness degree.
One use case of this theory is projected gradient methods \citep{Bertsekas1999}, where a given objective function should be optimized subject to hard side conditions. 
Replacing the parameters with their best approximations lying in a certain set after each update step ensures that the constraints are satisfied during optimization progress.

\subsection{Contributions of this Paper and Related Work}
This paper studies dictionary learning under explicit sparseness constraints with respect to Hoyer's sparseness measure $\sigma$.
A major part of this work is devoted to the efficient algorithmic computation of the sparseness-enforcing projection operator, which is an integral part in efficient dictionary learning.
Several algorithms were proposed in the past to solve the projection problem \citep{Hoyer2004,Theis2005,Potluru2013,Thom2013}.
Only \citet{Thom2013} provided a complete and mathematically satisfactory proof of correctness for their algorithm.
Moreover, all known algorithms have at least quasi-linear time complexity in the dimensionality of the vector that should be projected.

In this paper, we first derive a characterization of the sparseness projection and demonstrate that its computation is equivalent to finding the root of a real-valued auxiliary function, which constitutes a much simpler problem.
This result is used in the proposition of an algorithm for the projection operator that is asymptotically optimal in the sense of complexity theory, that is the time complexity is linear and the space complexity is constant in the problem dimensionality.
We show through experiments that when run on a real computing machine the newly proposed algorithm is far superior in its computational demands to previously proposed techniques, even for small problem dimensionalities.

Existing approaches to dictionary learning that feature explicit sparseness constraints can be categorized into ones that use Hoyer's $\sigma$ and ones that employ the $L_0$ pseudo-norm.
\citet{Hoyer2004} and \citet{Potluru2013} considered matrix factorization frameworks subject to $\sigma$ constraints, with space requirements linear in the number of learning samples which prevents processing large data sets.
\citet{Thom2013} designed sparse code words as the result of the sparseness-enforcing projection operator applied to the product of the dictionary with the samples.
This requires the computation of the projection's gradient during learning, which is feasible yet difficult to implement and has non-negligible adverse effects on the execution time.

The following approaches consider explicit $L_0$ pseudo-norm constraints:
\citet{Aharon2006}, \citet{Skretting2010} and \citet{Coates2011} infer sparse code words in each iteration compatible to the data and dictionary by employing basis pursuit or matching pursuit algorithms, which has a negative impact on the processing time.
\citet{Zelnik2012} consider block-sparse representations, here the signals are assumed to reside in the union of few subspaces.
\citet{Duarte2009} propose to simultaneously learn the dictionary and the sensing matrix from example image data, which results in improved reconstruction results in compressed sensing scenarios.

In addition to the contributions on sparseness projection computation, this paper proposes the \emph{Easy Dictionary Learning} (EZDL) algorithm.
Our technique aims at dictionary learning under explicit sparseness constraints in terms of Hoyer's sparseness measure $\sigma$ using a simple, fast-to-compute and biologically plausible Hebbian-like learning rule.
For each presented learning sample, the sparseness-enforcing projection operator has to be carried out.
The ability to perform projections efficiently makes the proposed learning algorithm particularly efficient: $30\;\%$ less training time is required in comparison to the optimized Online Dictionary Learning method of \citet{Mairal2009a}.

Extensions of Easy Dictionary Learning facilitate alternative representations such as topographic atom organization or atom sparseness, which includes for example separable filters.
We furthermore demonstrate the competitiveness of the dictionaries learned with our algorithm with those computed with alternative sophisticated dictionary learning algorithms in terms of reproduction and denoising quality of natural images.
Since other tasks, such as deblurring or super-resolution, build upon the same optimization problem in the application phase as reproduction and denoising, it can be expected that EZDL dictionaries will exhibit no performance degradations in those tasks either.

The remainder of this paper is structured as follows.
Section~\ref{sect:projfunc} derives a linear time and constant space algorithm for the computation of sparseness-enforcing projections.
In Sect.~\ref{sect:dictlearn}, the Easy Dictionary Learning algorithm for explicitly sparseness-constrained dictionary learning is proposed.
Section~\ref{sect:experimental_results} reports experimental results on the performance of the newly proposed sparseness projection algorithm and the Easy Dictionary Learning algorithm.
Section~\ref{sect:conclusions} concludes the paper with a discussion, and the appendix contains technical details of the mathematical statements from Sect.~\ref{sect:projfunc}.

\section{Efficient Sparseness-Enforcing Projections}
\label{sect:projfunc}
This section proposes a linear time and constant space algorithm for computation of projections onto level sets of Hoyer's $\sigma$.
Formally, if $\sigma^*\in\intervaloo{0}{1}$ denotes a \emph{target sparseness degree} and $x\in\R^n$ is an arbitrary point, the point from the level set $S := \Set{s\in\R^n | \sigma(s) = \sigma^*}$ that minimizes the Euclidean distance to $x$ is sought.
The function that computes $\arg\min_{s\in S}\norm{x - s}_2$ is also called \emph{sparseness-enforcing projection operator} since the situation where $\sigma(x) < \sigma^*$ is of particular interest.

Due to symmetries of $\sigma$, the above described optimization problem can be reduced to finding the projection of a vector $x\in\R_{\geq 0}^n$ with non-negative coordinates onto the set
\begin{displaymath}
  T := \Set{s\in\R_{\geq 0}^n | \norm{s}_1 = \lambda_1\text{ and }\norm{s}_2 = \lambda_2}\subseteq S\text{,}
\end{displaymath}
where $\lambda_1$ and $\lambda_2$ are target norms that should be chosen such that $\sigma^* = \left(\sqrt{n} - \nicefrac{\lambda_1}{\lambda_2}\right) / \left(\sqrt{n}-1\right)$ \citep{Hoyer2004,Thom2013}.
With this choice, $\sigma$ constantly attains the value of $\sigma^*$ on the entire set $T$, hence $T$ is a subset of $S$.

\citet{Thom2013} proved that $T$ is the intersection of a scaled canonical simplex and a hypercircle.
They further demonstrated that projections onto $T$ can be computed with a finite number of alternating projections onto these two geometric structures.
The proof of correctness of this method could not rely on the classical alternating projection method \citep[see for example][]{Deutsch2001} due to the lacking convexity of $T$, rendering the proof arguments quite complicated.

The remainder of this section proceeds as follows.
First, a succinct characterization of the sparseness projection is given.
It is then shown that computing sparseness projections is equivalent to finding the zero of a monotone real-valued function.
We prove that this can be achieved with optimal asymptotic complexity by proposing a new algorithm that solves the projection problem.

\subsection{Representation of the Projection}
The main theoretical result of this paper is a representation theorem that characterizes the projection onto $T$.
This was gained through an analysis of the intermediate points that emerge from the alternating projections onto a simplex and a hypercircle.
A closed-form expression can then be provided by showing that the intermediate points in the projection algorithm satisfy a loop-invariant, a certain property fulfilled after each step of alternating projections.

Since a mathematical rigorous treatment of this result is very technical, it is deferred to the appendix.
We assume here that the vector $x$ to be projected is given so that its projection is unique.
Since it is guaranteed that for all points except for a null set there is exactly one projection \citep[Theorem~2.6]{Theis2005}, we can exclude points with non-unique projections in our considerations, which is no restriction in practice.
We may now state a characterization of the projection outcome:
\begin{representationtheorem}
Let $x\in\R_{\geq 0}^n$ and let $p\in T$ denote the unique projection of $x$ onto $T$.
Then there is exactly one real number $\alpha^*$ with
\begin{displaymath}
  p = \frac{\lambda_2}{\norm{q}_2}\cdot q\text{ where }q := \max\left(x - \alpha^*\cdot e,\ 0\right)\in\R^n\text{.}
\end{displaymath}
Here, $e\in\set{1}^n$ is the $n$-dimensional vector where all entries are unity.
If the indices of the positive coordinates of $p$ are known, then $\alpha^*$ can be computed directly from $x$ with an explicit expression.
\end{representationtheorem}

In words, the projection $p$ is the point $q$ rescaled to obtain an $L_2$ norm of $\lambda_2$.
The vector $q$ is computed from the input vector $x$ by subtracting the scalar $\alpha^*$ from all the entries, and afterwards setting negative values to zero.
It is remarkable that the projection admits such a simple representation although the target set for the projection is non-convex and geometrically quite complicated.

The function that maps a scalar $\xi\in\R$ to $\max(\xi - t,\ 0)$ for a constant offset $t\in\R$ is called the \emph{soft-shrinkage function}.
If $t = 0$, it is also called the \emph{rectifier function}.
Because the central element of the projection representation is a soft-shrinkage operation applied entry-wise to the input vector, carrying out projections onto level sets of Hoyer's sparseness measure can be interpreted as denoising operation \citep{Donoho1995,Hyvarinen1999,Elad2006}.

\subsection{Auxiliary Function for the Sparseness Projection}
\label{sect:auxfunction}
The projection problem can hence be reduced to determining the soft-shrinkage offset $\alpha^*$, which is a one-dimensional problem on the real line.
Further, it is reasonable that $\alpha^*$ must be smaller than the maximum entry of $x$ since otherwise $q$ would be the null vector, which is absurd.
In the usual case where $\sigma(x) < \sigma^*$ we can further conclude that $\alpha^*$ must be non-negative.
Otherwise, the result of the projection using the representation theorem would be less sparse than the input, which is impossible.
Therefore, the projection problem can be further reduced to finding a number from the bounded interval $\intervalco{0}{x_{\max}}$ with $x_{\max}:= \max_{i\inint{1}{n}}x_i$.

Next, a method for efficiently deciding whether the correct offset has been found is required.
Similar to the projection onto a canonical simplex \citep{Liu2009a}, we can design a real-valued function that vanishes exactly at the wanted offset.
The properties of Hoyer's $\sigma$ allow us to formulate this function in an intuitive way.
We call
\begin{displaymath}
  \Psi\colon\intervalco{0}{x_{\max}}\to\R\text{,}\quad \alpha\mapsto\frac{\norm{\max\left(x - \alpha\cdot e,\ 0\right)}_1}{\norm{\max\left(x - \alpha\cdot e,\ 0\right)}_2} - \frac{\lambda_1}{\lambda_2}\text{,}
\end{displaymath}
the \emph{auxiliary function} for the projection onto $T$.

The rationale for the concrete definition of $\Psi$ is as follows.
Due to the representation theorem we know that the projection $p$ onto $T$ is merely a scaled version of the point $q := \max\left(x - \alpha^*\cdot e,\ 0\right)$.
Moreover $\sigma(p) = \sigma^*$, and due to the scale-invariance of Hoyer's $\sigma$ follows $\sigma(q) = \sigma^*$.
The essence of $\sigma$ is the ratio of the $L_1$ norm to the $L_2$ norm of its argument.
Hence here the scale constants that make $\sigma$ normalized to the interval $\intervalcc{0}{1}$ are omitted and the target norms are used instead.
We therefore have that $\nicefrac{\norm{q}_1}{\norm{q}_2} = \nicefrac{\lambda_1}{\lambda_2}$, and thus $\Psi(\alpha^*) = 0$.
As $\alpha^*$ is unique, we can conclude that no other offset of the soft-shrinkage operation leads to a correct solution.
In fact, if $\alpha\neq\alpha^*$ and when we write $\tilde{q} := \max\left(x - \alpha\cdot e,\ 0\right)$, then we have that $\sigma(\tilde{q})\neq\sigma^*$ and therefore $\Psi(\alpha)\neq 0$.

\begin{figure}[t]
  \centering
  \includegraphics[width=84mm]{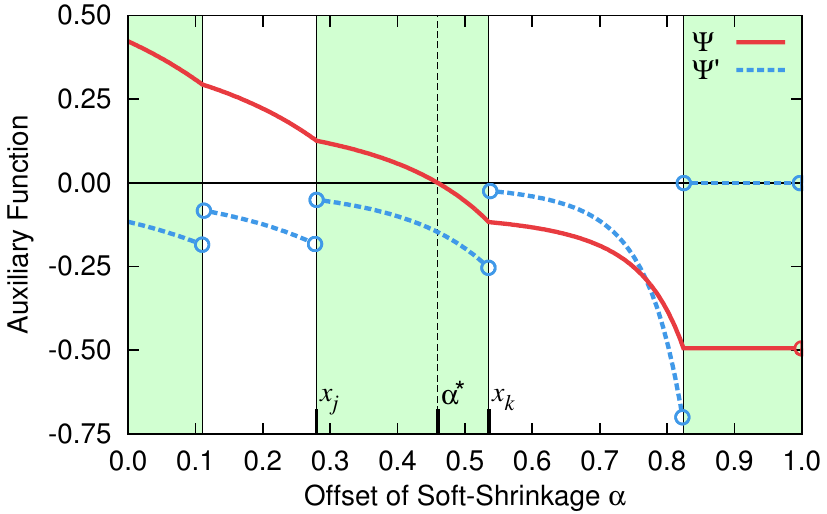}
  \caption{Plot of the auxiliary function $\Psi$ and its derivative for a random vector $x$. The derivative $\Psi'$ was scaled using a positive number for improved visibility. The steps in $\Psi'$ are exactly the places where $\alpha$ coincides with an entry of $x$. It is enough to find an $\alpha$ with $\Psi(x_j) \geq 0$ and $\Psi(x_k) < 0$ for the neighboring entries $x_j$ and $x_k$ in $x$, because then the exact solution $\alpha^*$ can be computed with a closed-form expression.}
  \label{fig:projfunclin-toy}
\end{figure}

An exemplary plot of $\Psi$ is depicted in Fig.~\ref{fig:projfunclin-toy}.
Clearly, the auxiliary function is continuous, differentiable except for isolated points and strictly decreasing except for its final part where it is constant.
Since the constant part is always negative and starts at the offset equal to the second-largest entry $x_{\scndmax} := \max\set{x_i | i\inint{1}{n}\text{ and }x_i\neq x_{\max}}$ of $x$, the feasible interval can be reduced even more.
The step discontinuities in $\Psi'$ coincide exactly with the entries of the input vector $x$.
These appealing analytical properties greatly simplify computation of the zero of $\Psi$, since standard root-finding algorithms such as bisection or Newton's method \citep[see for example][]{Traub1964} can be employed to numerically find $\alpha^*$.

\subsection{Linear Time and Constant Space Projection Algorithm}
\label{sect:lintimeconstspacealg}
We can improve on merely a numerical solution by exploiting the special structure of $\Psi$ to yield an analytical solution to the projection problem.
This is due to the closed-form expression for $\alpha^*$ which requires the indices of the coordinates in which the result of the projection is positive to be known.
The other way around, when $\alpha^*$ is known, these indices are exactly the ones of the coordinates in which $x$ is greater than $\alpha^*$.
When an offset sufficiently close to $\alpha^*$ can be determined numerically, the appropriate index set can be determined and thus the exact value of $\alpha^*$.

The decision if a candidate offset $\alpha$ is close enough to the true $\alpha^*$ can be made quite efficiently, since $\alpha^*$ is coupled via index sets to individual entries of the input vector $x$.
Hence, it suffices to find the right-most left neighbor $x_j$ of $\alpha$ in the entries of $x$, and analogously the left-most right neighbor $x_k$ (see Fig.~\ref{fig:projfunclin-toy} for an example).
Whenever $\Psi(x_j) \geq 0$ and $\Psi(x_k) < 0$, the zero $\alpha^*$ of $\Psi$ must be located between $x_j$ and $x_k$ for continuity reasons.
But then all values in $x$ greater than $\alpha^*$ are exactly the values greater than or equal to $x_k$, which can be determined by simply scanning through the entries of $x$.

\begin{figure}[t]
  \centering
  \includegraphics{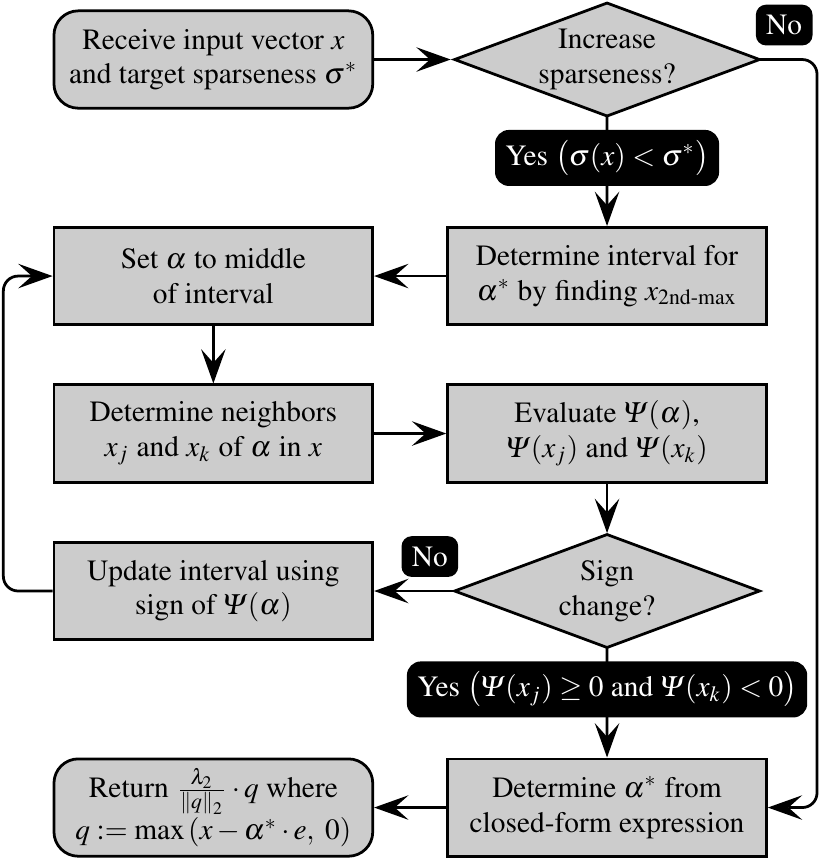}
  \caption{Flowchart of the proposed algorithm for computing sparseness-enforcing projections. The algorithm starts with the box at the upper left and terminates with the box at the lower left.}
  \label{fig:projfunc-flowchart}
\end{figure}

Based upon these considerations, a flowchart of our proposed method for computing sparseness-enforcing projections is depicted in Fig.~\ref{fig:projfunc-flowchart}.
The algorithm performs bisection and continuously checks for sign changes in the auxiliary function $\Psi$.
As soon as this is fulfilled, $\alpha^*$ is computed and the result of the projection is determined in-place.
A complete formal presentation and discussion of our proposed algorithm is given in the appendix.

Each intermediate step of the algorithm (that is, determination of the second-largest entry of $x$, evaluation of the auxiliary function, projection result computation by application of soft-shrinkage and scaling) can be carried out in a time complexity linear in the problem dimensionality $n$ and a space complexity completely independent of $n$.

Therefore, to show the overall algorithm possesses the same asymptotic complexity, it has to be proved that the number of bisection iterations required for finding $\alpha^*$ is independent of $n$.
The length of the initial interval is upper-bounded by $x_{\scndmax}$ as discussed earlier.
Bisection is moreover guaranteed to stop at the latest once the interval length is smaller than the minimum pairwise distance between distinct entries of $x$,
\begin{displaymath}
  \delta := \min\Set{x_k - x_j | x_j,x_k\in\X\text{ and }x_j < x_k} > 0
\end{displaymath}
where $\X := \set{x_i | i\inint{1}{n}}$.
This is due to the fact that it is enough to find a sufficiently small range to deduce an analytical solution.
The required number of bisection iterations is then less than $\lceil\log_2({x_{\scndmax}} / {\delta})\rceil$, see \citet{Liu2009a}.
This number is bounded from above regardless of the dimensionality of the input vector since $x_{\scndmax}$ is upper-bounded and $\delta$ is lower-bounded due to finite machine precision \citep{Goldberg1991}.

Hence our sparseness-enforcing projection algorithm is asymptotically optimal in the sense of complexity theory.
There may still be hidden constants in the asymptotic notation that render the proposed algorithm less efficient than previously known algorithms for a small input dimensionality.
Experiments in Sect.~\ref{sect:experimental_results} demonstrate that this is not the case.

\section{Explicitly Sparseness-Constrained Dictionary~Learning}
\label{sect:dictlearn}
This section proposes the \emph{Easy Dictionary Learning} (EZDL) algorithm for dictionary learning under explicit sparseness constraints.
First, we introduce an ordinary formulation of the learning algorithm.
Sparse code word inference with the sparseness projection algorithm proposed in Sect.~\ref{sect:projfunc} renders EZDL efficient and particularly simple to implement.
We further discuss extensions that allow concentrating on different aspects of the data set under investigation, such as topographic organization of the atoms and atom sparseness.
Only little implementation effort is required for these extensions and their computational demands are low.
A description of the comprehensive EZDL learning algorithm is accompanied with several strategies that improve the optimization performance.

\subsection{EZDL---Easy Dictionary Learning}
Conventional dictionary learning algorithms operate in an alternating fashion, where code words are inferred with a costly optimization procedure before updating the dictionary.
EZDL learns a dictionary by first yielding sparse code words through simple inference models and then tuning the dictionary with a simple update step.
An \emph{inference model} is a function $I\colon\R^n\to\R^n$ which accepts \emph{filter responses} in the form of the product of the dictionary $W\in\R^{d\times n}$ with a learning sample $x\in\R^d$, $u := W\transp x\in\R^n$, and produces a representation $h := I(u)$ with certain desirable properties.
Here, the combination with the sparseness-enforcing projection operator is particularly interesting since it provides a natural method to fulfill explicit sparseness constraints.

We call the choice $I(u) := \Pi_{\sigma_H}(u)$ the \emph{ordinary inference model}, where $\Pi_{\sigma_H}$ denotes the Euclidean projection onto the set of all vectors that achieve a sparseness degree of $\sigma_H\in\intervaloo{0}{1}$ with respect to Hoyer's sparseness measure $\sigma$.
This computation can also be interpreted as the trivial first iteration of a projected Landweber procedure for sparse code word inference \citep{Bredies2008}.

The dictionary $W$ is adapted to new data by minimizing the deviation between a learning sample $x$ and its approximation $Wh$ through the linear generative model.
The goodness of the approximation is here assessed with a differentiable similarity measure $\rho\colon\R^d\times\R^d\to\R$, so that the EZDL optimization problem becomes:
\begin{displaymath}
  \min_{W}\ \rho(Wh,\ x)\text{.}
\end{displaymath}
Note that $h$ is not a variable of the optimization problem since it is defined as output of an inference model.
For the same reason, $h$ does not need to be constrained further as it inherently satisfies explicit sparseness constraints.

Although a wide range of similarity measures is possible, we decided to use the (Pearson product-moment) correlation coefficient since it is invariant to affine-linear transformations \citep{Rodgers1988}.
In the context of visual data set analysis, this corresponds to invariance with respect to DC components and gain factors.
Moreover, differentiability of this similarity measure facilitates gradient-based learning \citep{Thom2013}.

If $\rho'\colon\R^d\times\R^d\to\R^d$ is the transposed derivative of $\rho$ with respect to its first argument and $g := \rho'(Wh,\ x)\in\R^d$ denotes its value in $Wh$, the gradient of $E_{\EZDL}$ for the dictionary is given by
\begin{displaymath}
  \left(\frac{\partial E_{\EZDL}}{\partial W}\right)\transp = gh\transp\in\R^{d\times n}\text{.}
\end{displaymath}
Therefore, when $W$ is tuned with gradient descent a simple and biologically plausible Hebbian-like learning rule \citep{Hyvaerinen2009} results:
\begin{displaymath}
  W^{\new} := W - \eta\cdot g h\transp\text{, where }\eta > 0\text{ denotes the step size.}
\end{displaymath}
This update step can be implemented efficiently with simple rank-1 updates \citep{Blackford2002}.

We here assumed $h = I(W\transp x)$ to be constant and neglected that it actually depends on $W$.
Without this assumption, the gradient would have to be extended with an additive term which comprises the gradient of the inference model \citep[Proposition~39]{Thom2013}.
This, in turn, requires the computation of the sparseness-enforcing projection operator's gradient.
This gradient can be computed with simple vector operations as we show in the appendix.
However, this constitutes a significant computational burden compared to the simple and fast-to-compute EZDL update step.
Section~\ref{sect:experimental_results} demonstrates through experiments that this simplification is still perfectly capable of learning dictionaries.

\subsection{Topographic Atom Organization and Atom Sparseness}
Topographic organization of the dictionary's atoms similar to Self-organizing Maps \citep{Kohonen1990} or Topographic Independent Component Analysis \citep{Hyvarinen2001} can be achieved in a straightforward way with EZDL using alternative inference models proposed in this section.
For this, the dictionary atoms are interpreted to be arranged on a two-dimensional grid.
A spatial pooling operator subject to circular boundary conditions can then be used to incorporate interactions between atoms located adjacent on the grid.
This can, for example, be realized by averaging each entry of the filter responses in a $3\times 3$ neighborhood on the grid.
This convolution operation can be expressed as linear operator applied to the vector $u\in\R^n$, represented by a sparsely populated matrix $G\in\R^{n\times n}$.
With $G$ containing all atom interactions, we call $I(u) := \Pi_{\sigma_H}(Gu)$ the \emph{topographic inference model}.

An alternative realization of topographic organization is enforcing \emph{structure sparseness} on the filter responses reshaped as matrix to account for the grid layout.
Structure sparseness can be assessed by application of a sparseness measure to the vector with the singular values of a matrix.
When the $L_0$ pseudo-norm is used, then this is the rank of a matrix.
The matrix rank can also be measured more robustly through the ratio of the Schatten $1$-norm to the Schatten $2$-norm, which is essentially Hoyer's sparseness measure $\sigma$ applied to the singular values \citep{Lopes2013}.

Clearly, any matrix $M\in\R^{a\times b}$ where $a,b\in\N$ can be reshaped to a vector $m:= \vect(M)\in\R^{ab}$ with the vectorization operator that stacks all columns of $M$ on top of another \citep{Neudecker1969}.
This linear operation can be inverted provided the shape of the original matrix is known, $M = \vect_{a\times b}^{-1}(m)$.
In the following, let $\Pi_{\kappa^*}$ denote the projection onto the set of all matrices that possess a target rank $\kappa^*\inint{1}{\min\set{a,b}}$.
A classical result states that any matrix can be projected onto the set of low-rank matrices by computation of its Singular Value Decomposition, applying an $L_0$ pseudo-norm projection to the vector of singular values thus retaining only the largest ones, and finally computing the reconstruction using the modified singular values \citep{Eckart1936}.

Based on these considerations, let $r$ and $c$ with $n = rc$ be natural numbers describing the topographic grid layout dimensionalities.
The \emph{rank-$\kappa_H$ topographic inference model} is then defined by the composition
\begin{displaymath}
  I(u) := \left(\vect\circ\Pi_{\kappa_H}\!\circ\vect_{r\times c}^{-1}\circ\Pi_{\sigma_H}\!\right)\!(u)\text{.}
\end{displaymath}
In words, this inference model operates as follows.
It first approximates the filter responses $u$ with a sparsely populated vector that attains a sparseness of $\sigma_H$ with respect to Hoyer's $\sigma$.
These sparsified filter responses are then laid out on a grid and replaced with those best approximating filter responses that meet a lower rank of $\kappa_H\inint{1}{\min\set{r,c}}$.
Finally, the grid layout is reshaped to a vector since this should be the output type of each inference model.

If $\kappa_H = 1$, then there are two vectors $y\in\R^r$ and $z\in\R^c$ such that $\vect_{r\times c}^{-1}\left(I(u)\right) = yz\transp$, that is the filter responses can be expressed as a dyadic product when reshaped to a grid.
We will demonstrate with experiments that this results in an interesting topographic atom organization similar to Independent Subspace Analysis \citep{Hyvaerinen2000a}.

Analogous to Non-negative Matrix Factorization with Sparseness Constraints \citep{Hoyer2004}, the atoms can be enforced to be sparse as well.
To achieve this, it is sufficient to apply the sparseness projection to each column of the dictionary after each learning epoch:
\begin{displaymath}
  W^{\new}e_i := \Pi_{\sigma_W}(We_i)\text{ for all }i\inint{1}{n}\text{,}
\end{displaymath}
where $e_i\in\R^n$ is the $i$-th canonical basis vector that selects the $i$-th column from $W$, and $\sigma_W\in\intervaloo{0}{1}$ is the target degree of atom sparseness.

In the situation where the learning samples resemble image patches, the atoms can also be restricted to fulfill structure sparseness using a low-rank approximation.
Suppose the image patches possess $p_h\times p_w$ pixels, then the following projection can be carried out after each learning epoch:
\begin{displaymath}
  W^{\new}e_i := \left(\vect\circ\Pi_{\kappa_W}\!\circ\vect_{p_h\times p_w}^{-1}\!\right)\!(We_i)\text{ for all }i\inint{1}{n}\text{.}
\end{displaymath}
Here, $\kappa_W\inint{1}{\min\set{p_h,p_w}}$ denotes the target rank of each atom when reshaped to $p_h\times p_w$ pixels.
When the dictionary is used to process images through convolution, small values of $\kappa_W$ are beneficial since computations can then be speeded up considerably.
For example, if $\kappa_W = 1$ each atom can be expressed as outer product of vectors from $\R^{p_h}$ and $\R^{p_w}$, respectively.
The convolutional kernel is separable in this case, and two one-dimensional convolutions lead to the same result as one two-dimensional convolution, but require only a fraction of operations \citep{Hoggar2006}.

\subsection{Learning Algorithm}
In the previous sections, a simple Hebbian-like learning rule was derived which depends on abstract inference models.
The core of the inference models is the sparseness-enforcing projection operator discussed in Sect.~\ref{sect:projfunc}, which guarantees that the code words always attain a sparseness degree easily controllable by the user.
On presentation of a learning sample $x$, an update step of the dictionary $W$ hence consists of
\begin{enumerate}
  \item determining the filter responses ($u := W\transp x$),
  \item feeding the filter responses through the inference model involving a sparseness projection ($h := I(u)$),
  \item computation of the approximation to the original learning sample ($Wh$) using the linear generative model,
  \item assessing the similarity between the input sample and its approximation ($\rho(Wh,\ x)$) and the similarity measure's gradient ($g := \rho'(Wh,\ x)$), and eventually
  \item adapting the current dictionary using a rank-1 update ($ W^{\new} := W - \eta\cdot g h\transp$).
\end{enumerate}
All these intermediate steps can be carried out efficiently.
The sparseness projection can be computed using few resources, and neither its gradient nor the gradient of the entire inference model is required.
After each learning epoch it is possible to impose additional constraints on the atoms with atom-wise projections.
In doing so, several aspects of the data set structure can be made more explicit since the atoms are then forced to become more simple feature detectors.

An example of an individual learning epoch is given in Algorithm~\ref{alg:ezdl}, where the topographic inference model and low-rank structure sparseness are used.
In addition, the explicit expressions for the correlation coefficient and its gradient are given \citep[Proposition~40]{Thom2013}.
The outcome of this parameterization applied to patches from natural images is depicted in Fig.~\ref{fig:dictionaries_rank1filters_topographic} and will be discussed in Sect.~\ref{sect:rank1_ezdl_experiments}.

\begin{algorithm}[t]
  \caption{One epoch of the Easy Dictionary Learning algorithm, illustrated for the case of a topographic inference model and low-rank structure sparseness.}
  \label{alg:ezdl}
  \SetAlgoLined
  \KwIn{%
The algorithm accepts the following parameters:

\parbox{\hsize}{\raggedright
\begin{itemize}
  \item An existing dictionary $W\in\R^{d\times n}$ with $n$ atoms.
  \item Random access to a learning set with samples from $\R^d$ through a function "$\picksample$".
  \item Step size for update steps $\eta\in\R_{> 0}$.
  \item Number of samples to be presented $M_{\epoch}\in\N$.
  \item For the topographic inference model:
  \begin{itemize}
    \item Target degree of dictionary sparseness $\sigma_H\in\intervaloo{0}{1}$.
    \item Topography matrix $G\in\R^{n\times n}$ describing the interaction between adjacent atoms.
  \end{itemize}
  \item To obtain low-rank structure sparseness:
  \begin{itemize}
    \item Number of pixels $p_h\times p_w$ so that $d = p_h p_w$.
    \item Target rank $\kappa_W\inint{1}{\min\set{p_h,p_w}}$.
  \end{itemize}
\end{itemize}
}%
}
  \KwOut{Updated dictionary $W\in\R^{d\times n}$.}
  \BlankLine

  \tcp{Present samples and adapt dictionary.}
  \SetKwBlock{RepeatMepoch}{repeat $M_{\epoch}$ times}{end}
  \RepeatMepoch
  {%
    \tcp{Pick a random sample.}
    $x := \picksample()\in\R^d$\;
    \tcp{(a) Compute filter responses.}
    $u := W\transp x\in\R^n$\;
    \tcp{(b) Evaluate topographic inference model.}
    $h := \Pi_{\sigma_H}(Gu)\in\R^n$\;
    \tcp{(c) Compute approximation to $x$.}
    $\tilde{x} := Wh\in\R^d$\;
    \BlankLine
    \tcp{(d) Compute correlation coefficient $\rho_{\tilde{x},\;x} := \rho(Wh,\ x)$ and its gradient $g$. Here, $e\in\set{1}^d$ is the all-ones vector.}
    $\lambda := \norm{\tilde{x}}_2^2 - \nicefrac{1}{d}\cdot\left(e\transp\tilde{x}\right)^2\in\R$\;
    $\mu := \norm{x}_2^2 - \nicefrac{1}{d}\cdot\left(e\transp x\right)^2\in\R$\;
    $\rho_{\tilde{x},\;x} := \left(\tilde{x}\transp x - \nicefrac{1}{d}\cdot\left(e\transp\tilde{x}\right)\left(e\transp x\right)\right) / \sqrt{\lambda\mu}\in\R$\;
    $g := \frac{1}{\sqrt{\lambda\mu}}\left(x - \nicefrac{e\transp x}{d}\cdot e\right) - \frac{\rho_{\tilde{x},\;x}}{\lambda}\left(\tilde{x} - \nicefrac{e\transp \tilde{x}}{d}\cdot e\right)\in\R^d$\;
    \BlankLine
    \tcp{(e) Adapt dictionary with rank-1 update.}
    $W := W - \eta\cdot gh\transp\in\R^{d\times n}$\;
  }
  \BlankLine

  \tcp{Atom-wise projection for low-rank filters.}
  \For{$i := 1$ \KwTo $n$}
  {%
    \tcp{Extract $i$-th atom and normalize.}
    $w := \nicefrac{We_i}{\norm{We_i}_2}\in\R^d$\;
    \tcp{Ensure atom has rank $\kappa_W$ when reshaped to $p_h\times p_w$ pixels.}
    $w := \left(\vect\circ\Pi_{\kappa_W}\!\circ\vect_{p_h\times p_w}^{-1}\!\right)\!(w)\in\R^d$\;
    \tcp{Store modified atom back in dictionary.}
    $We_i := w$\;
  }

\end{algorithm}

The efficiency of the learning algorithm can be improved with simple modifications described in the following.
To reduce the learning time compared to when a randomly initialized dictionary was used, the dictionary should be initialized by samples randomly chosen from the learning set \citep{Mairal2009a,Skretting2010,Coates2011,Thom2013}.
When large learning sets with millions of samples are available, stochastic gradient descent has been proven to result in significantly faster optimization progress compared to when the degrees of freedom are updated only once for each learning epoch \citep{Wilson2003,Bottou2004}.
The step size schedule $\eta(\nu) := \nicefrac{\eta_0}{\nu}$, where $\eta_0 > 0$ denotes the initial step size and $\nu\in\N\setminus\set{0}$ is the learning epoch counter, is beneficial when the true gradient is not available but rather an erroneous estimate \citep{Bertsekas1999}.
After each learning epoch, all the atoms of the dictionary are normalized to unit scale.
Since the learning rule is Hebbian-like, this prevents the atoms from growing arbitrarily large or becoming arbitrarily small \citep{Hyvaerinen2009}.
This normalization step is also common in a multitude of alternative dictionary learning algorithms \citep{Kreutz-Delgado2003,Hoyer2004,Mairal2009a,Skretting2010}.

Since the dictionary is modified with a rank-1 update where one of the factors is the result of an inference model and hence sparse, only the atoms that induce significant filter responses are updated.
This may result in atoms that are never updated when the target sparseness degree $\sigma_H$ is large.
This behavior can be alleviated by adding random noise to the inference model's output prior to updating the dictionary, which forces all the atoms to be updated.
We used random numbers sampled from a zero-mean normal distribution, where the standard deviation was multiplicatively annealed after each learning epoch.
As optimization proceeds, the atoms are well-distributed in sample space such that the lifetime sparseness is approximately equal to the population sparseness \citep{Willmore2001}, and randomness is not needed anymore.

\section{Experimental Results}
\label{sect:experimental_results}
This section reports experimental results on the techniques proposed in this paper.
We first evaluate alternative solvers for the sparseness projection's auxiliary function and show that our algorithm for the sparseness-enforcing projection operator is significantly faster than previously known methods.
We then turn to the application of the projection in the Easy Dictionary Learning algorithm.
First, the morphology of dictionaries learned on natural image patches is analyzed and put in context with previous methods.
We further show that the resulting dictionaries are well-suited for the reproduction of entire images.
They achieve a reproduction quality equivalent to dictionaries trained with alternative, significantly slower algorithms.
Eventually, we analyze the performance of the dictionaries when employed for the image denoising task and find that, analogous to the reproduction experiments, no performance degradation can be observed.

\subsection{Performance of Sparseness-Enforcing Projections}
\label{sect:projfuncexperiments}
Our proposed algorithm for sparseness projections was implemented as C++ program using the GNU Scientific Library \citep{Galassi2009}.
We first analyzed whether solvers other than bisection for locating the zero of the auxiliary function would result in an improved performance.
Since $\Psi$ is differentiable except for isolated points and its derivatives can be computed quite efficiently, Newton's method and Halley's method are straightforward to apply.
We further verified whether Newton's method applied to a slightly transformed variant of the auxiliary function,
\begin{displaymath}
  \tilde{\Psi}\colon\intervalco{0}{x_{\max}}\to\R\text{,}\quad \alpha\mapsto\frac{\norm{\max\left(x - \alpha\cdot e,\ 0\right)}_1^2}{\norm{\max\left(x - \alpha\cdot e,\ 0\right)}_2^2} - \frac{\lambda_1^2}{\lambda_2^2}\text{,}
\end{displaymath}
where the minuend and the subtrahend were squared, would behave more efficiently.
The methods based on derivatives were additionally safeguarded with bisection to guarantee new positions are always located within well-defined intervals \citep{Press2007}.
This impairs the theoretical property that only a constant number of steps is required to find a solution, but in practice a significantly smaller number of steps needs to be made compared to plain bisection.

To evaluate which solver would provide maximum efficiency in practical scenarios, one thousand random vectors for each problem dimensionality $n\inint{2^2}{2^{30}}$ were sampled and the corresponding sparseness-enforcing projections for $\sigma^* := 0.90$ were computed using all four solvers.
We have used the very same random vectors as input for all solvers, and counted the number of times the auxiliary function needed to be evaluated until the solution was found.
The results of this experiment are depicted in Fig.~\ref{fig:solverplot}.

\begin{figure}[t]
  \centering
  \includegraphics[width=84mm]{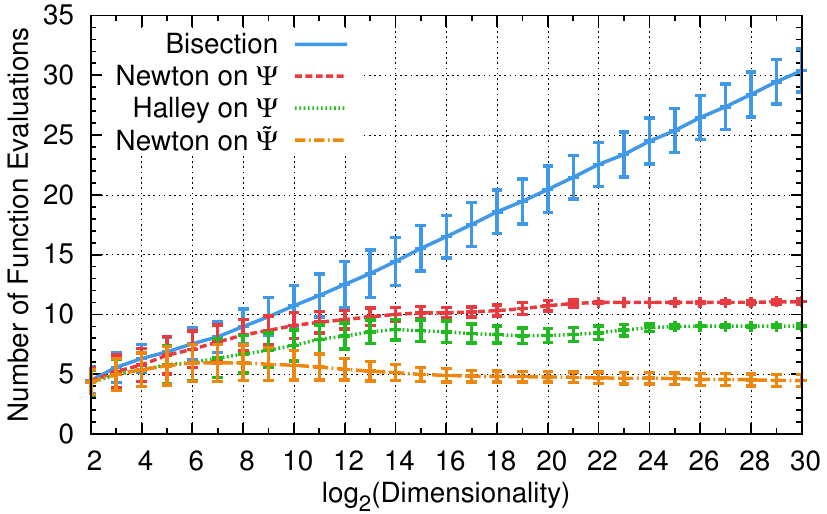}
  \caption{Number of auxiliary function evaluations needed to find the final interval for the projection onto $T$ using four different solvers. The error bars indicate $\pm 1$ standard deviation distance from the mean value. Since Newton's method applied to $\tilde{\Psi}$ is consistently outperforming the other solvers, it is the method of choice for all practical applications.}
  \label{fig:solverplot}
\end{figure}

The number of function evaluations required by plain bisection grew about linearly in $\log(n)$.
Because the expected minimum difference of two distinct entries from a random vector gets smaller when the dimensionality of the random vector is increased, the expected number of function evaluations bisection needs increases with problem dimensionality.
In either case, the length of the interval that has to be found is always bounded from below by the machine precision such that the number of function evaluations with bisection is bounded from above regardless of $n$.

The solvers based on the derivative of $\Psi$ or $\tilde{\Psi}$, respectively, always required less function evaluations than bisection.
They exhibited a growth clearly sublinear in $\log(n)$.
For $n = 2^{30}\approx 10^{9}$, Newton's method required $11$~function evaluations in the mean, Halley's method needed $9$~iterations, and Newton's method applied to $\tilde{\Psi}$ found the solution in only $4.5$~iterations. Therefore, in practice always the latter solver should be employed.

In another experiment, the time competing algorithms require for computing sparseness projections on a real computing machine was measured for a comparison.
For this, the algorithms proposed by \citet{Hoyer2004}, \citet{Potluru2013}, and \citet{Thom2013} were implemented using the GNU Scientific Library \citep{Galassi2009} by means of C++ programs.
An Intel Core i7-4960X processor was used and all algorithms were run in a single-threaded environment.
Random vectors were sampled for each problem dimensionality $n\inint{2^2}{2^{30}}$ and initially projected to attain a sparseness of $0.50$ with respect to Hoyer's $\sigma$.
This initial projection better reflects the situation in practice where not completely random vectors have to be processed.
Next, all four algorithms were used to compute projections with a target sparseness degree of $\sigma^* := 0.90$ and their run time was measured.
The original algorithm of \citet{Hoyer2004} was the slowest, so by taking the ratio of the run times of the other algorithms to the run time of that slowest algorithm the relative speed-up was obtained.

\begin{figure}[t]
  \centering
  \includegraphics[width=84mm]{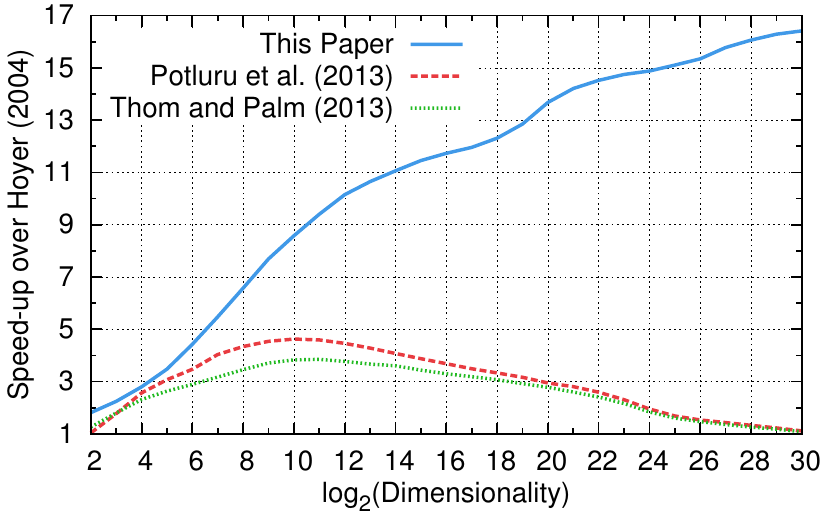}
  \caption{Speed-ups relative to the original algorithm of \citet{Hoyer2004} obtained using alternative algorithms. The linear time algorithm proposed in this paper is far superior in terms of execution time. The time complexity of the competing methods is at least quasi-linear, which becomes noticeable especially for large problem dimensionalities.}
  \label{fig:projfunc-timings}
\end{figure}

Figure~\ref{fig:projfunc-timings} visualizes the results of this experiment.
For all tested problem dimensionalities, the here proposed linear time algorithm dominated all previously described methods.
While the speed-up of the algorithms of \citet{Potluru2013} and \citet{Thom2013} relative to the original algorithm of \citet{Hoyer2004} were already significant, especially for small to medium dimensionalities, they got relatively slow for very large vectors.
This is not surprising, because both methods start with sorting the input vector and have to store that permutation to be undone in the end.

The algorithm proposed in this paper is based on root-finding of a monotone function and requires no sorting.
Only the left and right neighbors of a scalar in a vector have to be found.
This can be achieved by scanning linearly through the input vector, which is particularly efficient when huge vectors have to be processed.
For $n = 2^{30}$, the proposed algorithm was more than $15$~times faster than the methods of \citet{Hoyer2004}, \citet{Potluru2013} and \citet{Thom2013}.
Because of this appealing asymptotic behavior, there is now no further obstacle to applying smooth sparseness methods to large-scale problems.

\subsection{Data Set Analysis with EZDL}
\label{sect:data_analysis}
We used the Easy Dictionary Learning algorithm as a data analysis tool to visualize the structure of patches from natural images under different aspects, which facilitates qualitative statements on the algorithm performance.
For our experiments, we used the McGill calibrated color image database \citep{Olmos2004}, where the images had either $768\times 576$ pixels or $576\times 768$ pixels.
The images were desaturated \citepalias[Annex~B.4]{RP177-1993} and quantized to $8$-bit precision.
We extracted $10\;000$~patches with $16\times 16$~pixels from each of the $314$~images of the "Foliage" collection.
The patches were extracted at random positions.
Patches with vanishing variance were omitted since they carry no information.
In total, we obtained $3.1$~million samples for learning.

We learned dictionaries on the raw pixel values of this learning set and on whitened image patches.
The learning samples were normalized to zero mean and unit variance as the only pre-processing for the raw pixel experiments.
The whitened data was obtained using the canonical pre-processing from Section~5.4 of \citet{Hyvaerinen2009} with $128$~principal components.
EZDL was applied using each proposed combination of inference model and atom constraints.
We presented $30\;000$ randomly selected learning samples in each of one thousand epochs, which corresponds to about ten sweeps through the entire learning set.
The initial step size was set to $\eta_0 := 1$.

A noisy version of the final dictionary could already be observed after the first epoch, demonstrating the effectiveness of EZDL for quick analysis.
Since the optimization procedure is probabilistic in the initialization and selection of training samples, we repeated the training five times for each parameter set.
We observed only minor variations between the five dictionaries for each parameter set.

In the following, we present details of dictionaries optimized on raw pixel values and analyze their filter characteristics.
Then, we consider dictionaries learned with whitened data and dictionaries with separable atoms trained on raw pixel values and discuss the underlying reasons for their special morphology.

\subsubsection{Raw Pixel Values}
\label{sect:gabor_ezdl_experiments}
Using the ordinary inference model, we trained two-times overcomplete dictionaries with $n := 256$ atoms on the normalized pixel values, where the dictionary sparseness degree was varied between $0.700$ and $0.999$.
This resulted in the familiar appearance of dictionaries with Gabor-like filters, which resemble the optimal stimulus of visual neurons \citep{Olshausen1996,Olshausen1997}.
While for small values of $\sigma_H$ the filters were mostly small and sharply bounded, high sparseness resulted in holistic and blurry filters.

We used the methods developed by \citet{Jones1987} and \citet{Ringach2002} for a more detailed analysis.
In doing so, a two-dimensional Gabor function as defined in Equation~(1) of \citet{Ringach2002}, that is a Gaussian envelope multiplied with a cosine carrier wave, was fit to each dictionary atom using the algorithm of \citet{Nelder1965}.
We verified that these Gabor fits accurately described the atoms, which confirmed the Gabor-like nature of the filters.

The differences in the dictionaries due to varying sparseness degrees became apparent through analysis of the parameters of the fitted Gabor functions.
Figure~\ref{fig:gabor-params} shows the mean values of three influential Gabor parameters in dependence on $\sigma_H$.
All parameters change continuously and monotonically with increasing sparseness.
The spatial frequency, a factor in the cosine wave of the Gabor function, constantly decreases for increasing dictionary sparseness.
The width of the Gaussian envelope, here broken down into the standard deviation in both principal orientations $x'$ and $y'$, is monotonically increasing.
The envelope width in $y'$ direction increases earlier, but in the end the envelope width in $x'$ direction is larger.

\begin{figure}[t]
  \centering
  \includegraphics[width=84mm]{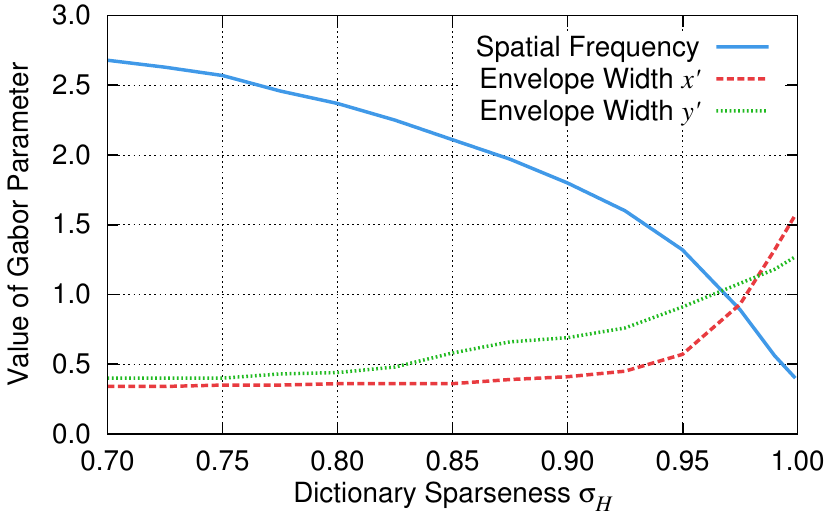}
  \caption{Mean values of three influential parameters of Gabor functions fitted to the atoms of the dictionaries learned with EZDL, in dependence on the target degree of dictionary sparseness $\sigma_H$. An increase in sparseness results in filters with a reduced spatial frequency, but with a significant increase in the width of the Gaussian envelope.}
  \label{fig:gabor-params}
\end{figure}

It can be concluded that more sparse code words result in filters with lower frequency but larger envelope.
Since an increased sparseness reduces the model's effective number of degrees of freedom, it prevents the constrained dictionaries from adapting precisely to the learning data.
Similar to Principal Component Analysis, low-frequency atoms here minimize the reproduction error best when only very few effective atoms are allowed \citep{Hyvaerinen2009,Olshausen1996}.

Histograms of the spatial phase of the filters, an additive term in the cosine wave of the Gabor function, are depicted in Fig.~\ref{fig:phase-histogram}.
For $\sigma_H = 0.75$, there is a peak at $\nicefrac{\pi}{2}$ which corresponds to odd-symmetric filters \citep{Ringach2002}.
The distribution is clearly bimodal for $\sigma_H = 0.99$, with peaks at $0$ and $\nicefrac{\pi}{2}$ corresponding to even-symmetric and odd-symmetric filters, respectively.
While the case of small $\sigma_H$ matches the result of ordinary sparse coding, the higher dictionary sparseness results in filters with the same characteristics as the optimal stimulus of macaque visual neurons \citep{Ringach2002}.

\begin{figure}[t]
  \centering
  \includegraphics[width=84mm]{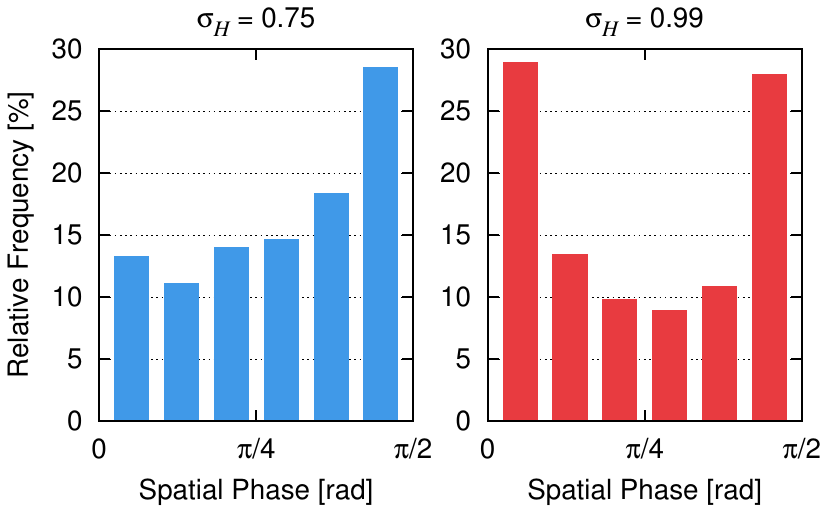}
  \caption{Histograms of the spatial phase of fitted Gabor functions for $\sigma_H = 0.75$ (left) and $\sigma_H = 0.99$ (right). Low sparseness yields dictionaries where odd-symmetric filters dominate, whereas high dictionary sparseness results in a significant amount of both even-symmetric and odd-symmetric filters.}
  \label{fig:phase-histogram}
\end{figure}

This analysis proves that EZDL's minimalistic learning rule is capable of generating biologically plausible dictionaries, which constitute a particularly efficient image coding scheme.
To obtain dictionaries with diverse characteristics, it is enough to adjust the target degree of dictionary sparseness on a normalized scale.
A major component in the learning algorithm is the sparseness projection, enforcing local competition among the atoms \citep{Rozell2008} due to its absolute order-preservation property \citep[Lemma~12]{Thom2013}.

\subsubsection{Whitened Image Patches}
Whitening as a pre-processing step helps to reduce sampling artifacts and decorrelates the input data \citep{Hyvaerinen2009}.
It also changes the intuition of the similarity measure in EZDL's objective function, linear features rather than single pixels are considered where each feature captures a multitude of pixels in the raw patches.
This results in differences in the filter structure, particularly in the emergence of more low-frequency filters.

The dictionary depicted in Fig.~\ref{fig:dictionaries_whitened_topographic} was learned using the topographic inference model with average-pooling of $3\times 3$ neighborhoods.
We set the dictionary sparseness degree to $\sigma_H := 0.85$ and the number of atoms to $n := 256$, arranged on a $16\times 16$ grid.
The dictionary closely resembles those gained through Topographic Independent Component Analysis \citep{Hyvarinen2001,Hyvaerinen2009} and Invariant Predictive Sparse Decomposition \citep{Kavukcuoglu2009}.
It should be noted that here the representation is two times overcomplete due to the whitening procedure.
Overcomplete representations are not inherently possible with plain Independent Component Analysis \citep{Bell1997,Hyvaerinen2009} which limits its expressiveness, a restriction that does not hold for EZDL.

The emergence of topographic organization can be explained by the special design of the topographic inference model.
The pooling operator acts as spatial low-pass filter on the filter responses, so that each smoothed filter response carries information on the neighboring filter responses.
Filter response locality is retained after the sparseness projection, hence adjacent atoms receive similar updates with the Hebbian-like learning rule.
Hence, there are only small differences between filters within the same vicinity.
The learning process is similar to that of Self-organizing Maps \citep{Kohonen1990}, where only the atom with the maximum filter response and the atoms in its direct surrounding are updated.
However, EZDL simultaneously updates multiple clusters since the sparse code words laid out on the grid are multimodal.

Further, it is notable that we achieved a topographic organization merely through a linear operator $G$ by simple average-pooling.
This stands in contrast to the discussion from \citet{Bauer2013} where the necessity of a nonlinearity such as multiplicative interaction or pooling with respect to the Euclidean norm was assumed.
Our result that a simple linear operator plugged into the ordinary inference model already produces smooth topographies proves that linear interactions between the atoms are sufficient despite their minimality.

\newcommand{\dictimgwidth}{67mm}

\begin{figure}[t]
  \centering
  \subfloat[Topographic inference model.]
  {
    \label{fig:dictionaries_whitened_topographic}
    \includegraphics[width=\dictimgwidth]{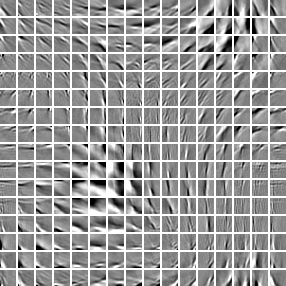}
  }\\
  \subfloat[Rank-1 topographic inference model.]
  {
    \label{fig:dictionaries_whitened_rank_topographic}
    \includegraphics[width=\dictimgwidth]{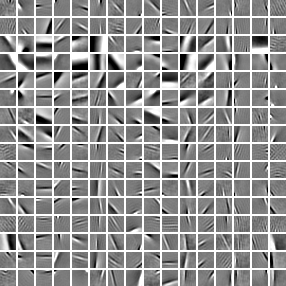}
  }
  \caption{Two times overcomplete dictionaries trained with EZDL on whitened natural image patches.}
  \label{fig:dictionaries_whitened}
\end{figure}

Figure~\ref{fig:dictionaries_whitened_rank_topographic} shows a dictionary obtained with the rank-1 topographic inference model, using a sparseness degree of $\sigma_H := 0.85$ and $n := 256$ filters on a $16\times 16$ grid.
Here, the sparse code words reshaped to a matrix are required to have unit rank which results in a specially organized filter layout.
For example, rows three and four almost exclusively contain low-frequency filters, and the filters in the sixth column are all oriented vertically.
The grouping of similar atoms into the same rows and columns is related to Independent Subspace Analysis, which yields groups of Gabor-like filters where all filters in a group have approximately the same frequency and orientation \citep{Hyvaerinen2000a}.

The rank-1 topographic inference model guarantees that code words can be expressed as the dyadic product of two vectors, where the factors themselves are sparsely populated because the code words are sparse.
This causes the code words to possess a sparse rectangular structure when reshaped to account for the grid layout, that is non-vanishing activity is always concentrated in a few rows and columns.
The Hebbian-like learning rule therefore induces similar updates to atoms located in common rows or columns, which explains the obtained group layout.

\begin{figure}[t]
  \centering
  \subfloat[Ordinary inference model.]
  {
    \label{fig:dictionaries_rank1filters_ordinary}
    \includegraphics[width=\dictimgwidth]{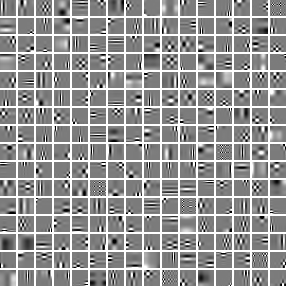}
  }\\
  \subfloat[Topographic inference model.]
  {
    \label{fig:dictionaries_rank1filters_topographic}
    \includegraphics[width=\dictimgwidth]{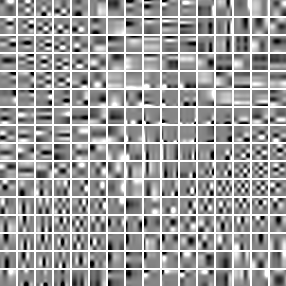}
  }
  \caption{Dictionaries trained with EZDL on raw pixels, the atoms were replaced by their best rank-1 approximation after each epoch.}
  \label{fig:dictionaries_rank1filters}
\end{figure}

\subsubsection{Rank-1 Filters on Raw Pixel Values}
\label{sect:rank1_ezdl_experiments}
Enforcing the filters themselves to have rank one by setting $\kappa_W := 1$ resulted in bases similar to Discrete Cosine Transform \citep{Watson1994}.
This was also observed recently by \citet{Hawe2013} who considered a tensor decomposition of the dictionary, and by \citet{Rigamonti2013} who minimized the Schatten $1$-norm of the atoms.
Note that EZDL merely replaces the atoms after each learning epoch by their best rank-1 approximation.
The computational complexity of this operation is negligible compared to an individual learning epoch using tens of thousands of samples and hence does not slow down the actual optimization.

If no ordering between the filters was demanded, a multitude of checkerboard-like filters and vertical and horizontal contrast bars was obtained, see Fig.~\ref{fig:dictionaries_rank1filters_ordinary} for a dictionary with $n := 256$ atoms and a sparseness degree of $\sigma_H := 0.85$.
Atoms with diagonal structure cannot be present at all in such constrained dictionaries because diagonality requires a filter rank greater than one.
Although all filters were paraxial due to the rank-1 constraint, they still resembled contrast fields because of their grating-like appearance.
This is due to the representation's sparseness, which is known to induce this appearance.

A similar filter morphology was obtained with a topographic filter ordering using a $16\times 16$ grid, though the filters were blurrier, as shown in Fig.~\ref{fig:dictionaries_rank1filters_topographic}.
Here, checkerboard-like filters are located in clusters, and filters with vertical and horizontal orientation are spatially adjacent.
This is as expected, because with a sparse topography there are only a few local blobs of active entries in each code word, causing similar atoms to be grouped together and dissimilar atoms to be distant from each other.
In the space of rank-1 filters, there can be either vertical or horizontal structures, or checkerboards combining both principal orientations.

The lack of fine details can be explained by the restriction of topographic organization, which reduces the effective number of degrees of freedom.
Analogously to the situation in Sect.~\ref{sect:gabor_ezdl_experiments} where the dictionary sparseness was increased, the reproduction error can be reduced by a large amount using few filters with low frequencies.
For the same reason, there are few such checkerboard-like filters:
Minimization of the reproduction error is first achieved with principal orientations before checkerboards can be used to encode details in the image patches.

In conclusion, these results show that the variety of the dictionaries produced by EZDL can be vast simply by adapting easily interpretable parameters.
The algorithm covers and reproduces well-known phenomena from the literature, and allows a precise visualization of a data set's structure.
A first impression of the final dictionaries can already be obtained after one learning epoch, which takes no more than one second on a modern computer.

\subsection{Experiments on the Reproduction of Entire Images}
\label{sect:reproduction_experiments}
\begin{figure*}[t]
  \centering
  \subfloat[Reproduction quality in dependence on the number of Landweber iterations carried out for inference. The four curves show different combinations of dictionary sparseness $\sigma_H$ and inference sparseness $\sigma_I$.]
  {
    \label{fig:landweber_iteration_plot_256}
    \includegraphics[width=84mm]{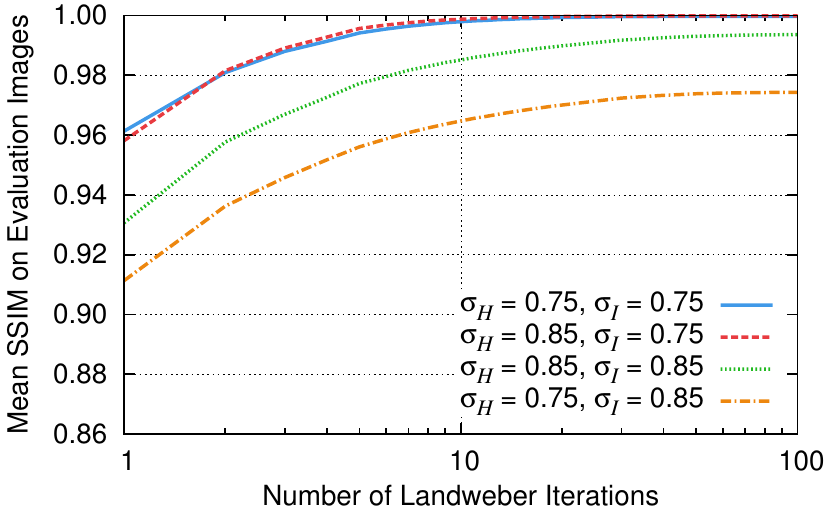}
  }\hfill
  \subfloat[Achieved reproduction quality in dependence on the inference sparseness $\sigma_I$ for four different dictionary sparseness degrees $\sigma_H$. Large values of $\sigma_H$ yield the best performance for large $\sigma_I$.]
  {
    \label{fig:inference_sparseness_plot_256}
    \includegraphics[width=84mm]{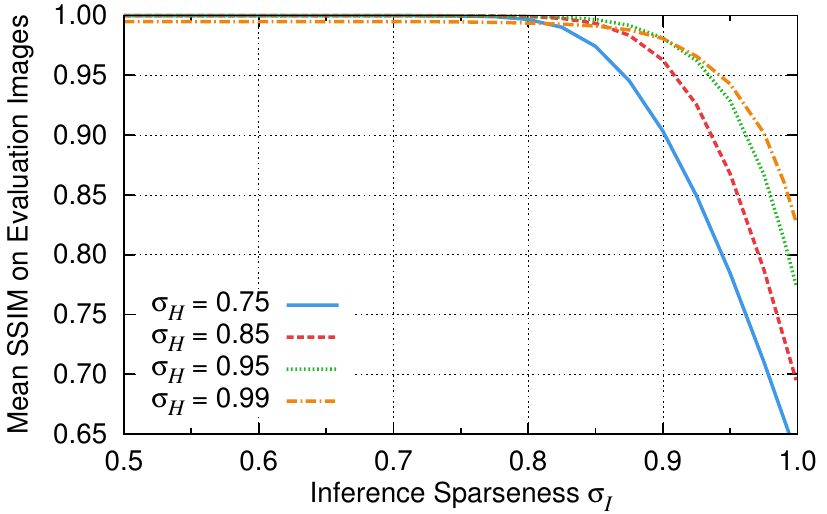}
  }
  \caption{Results on the reproduction of entire images using Easy Dictionary Learning dictionaries. \protect\subref{fig:landweber_iteration_plot_256}~While one Landweber iteration suffices for learning, better reproduction performance is yielded if the Landweber procedure is run until convergence. \protect\subref{fig:inference_sparseness_plot_256}~The relation between the dictionary sparseness $\sigma_H$ and the inference sparseness $\sigma_I$ is influential. Hardly any information loss can be observed for $\sigma_I < \sigma_H$, while the case $\sigma_I > \sigma_H$ shows dictionaries should be trained for very sparse code words when demanding such a sparseness for reproduction.}
  \label{fig:inference_plots}
\end{figure*}

It was demonstrated in Sect.~\ref{sect:data_analysis} that the Easy Dictionary Learning algorithm produces dictionaries that correspond to efficient image coding schemes.
We now analyze the suitability of EZDL dictionaries trained on raw pixel values for the reproduction of entire images.
This is a prerequisite for several image processing tasks such as image enhancement and compression, since here essentially the same optimization problem should be solved as during reproduction with a given dictionary.

Our analysis allows quantitative statements as the original images and their reproduction through sparse coding can be numerically compared on the signal level.
Further, the EZDL dictionaries are compared with the results of the Online Dictionary Learning algorithm by \citet{Mairal2009a} and the Recursive Least Squares Dictionary Learning Algorithm by \citet{Skretting2010} in terms of reproduction quality and learning speed.

The evaluation methodology was as follows.
First, dictionaries with different parameter sets were trained on $3.1$ million $8\times 8$ pixels natural image patches extracted from the "Foliage" collection of the McGill calibrated color image database \citep{Olmos2004}.
As in Sect.~\ref{sect:data_analysis}, $10\;000$ patches were picked from random positions from each of the desaturated and quantized images.
All patches were normalized to zero mean and unit variance before training.
The dictionaries were designed for a four times overcomplete representation, they had $n := 256$ atoms for samples with $64$~pixels.
After training, dictionary quality was measured using the $98$~images of the "Flowers" collection from the same image database.
This evaluation set was disjoint from the learning set.

Each single image was subsequently divided into non-overlapping blocks with $8\times 8$ pixels.
All blocks were normalized to zero mean and unit variance, and then the optimized dictionaries were used to infer a sparse code word for each block.
The resulting code word was then fed through a linear generative model using the currently investigated dictionary, and the mean value and variance were restored to match that of the original block.

Sparse code word inference was achieved with a projected Landweber procedure \citep{Bredies2008}, essentially the same as projected gradient descent starting with the null vector.
Using the sparseness-enforcing projection operator after each iteration, the code words were tuned to attain an \emph{inference sparseness} $\sigma_I$, which need not necessarily be equal to the sparseness degree used for dictionary learning.
The correlation coefficient was used as the similarity measure for inference to benefit from invariance to shifting and scaling.
Automatic step size control was carried out with the bold driver heuristic \citep{Bishop1995}.
We note that due to the representation theorem on the sparseness projection this process can also be understood as Iterative Soft-Thresholding \citep{Bredies2008}.

A reproduced image is yielded by applying this procedure to all distinct blocks of the original image using a single dictionary.
The deviation between this output image and the original image was assessed with the SSIM index \citep{Wang2009}, which yielded qualitatively similar results as the peak signal-to-noise ratio.
The SSIM index is, however, normalized to values between zero and one, and it respects the local structure of images since it examines $11\times 11$ pixels neighborhoods through convolution.
Because it measures the visual similarity between images, it is more intuitive than measures based on the pixel-wise squared error \citep{Wang2009}.
This evaluation method yielded one number from the interval $\intervalcc{0}{1}$ for each image and dictionary.
Each parameterization for dictionary learning was used to train five dictionaries to compensate probabilistic effects, and here the mean of the resulting $490$~SSIMs is reported.

\subsubsection{EZDL Results}
Figure~\ref{fig:inference_plots} visualizes the results obtained for dictionaries produced by Easy Dictionary Learning using dictionary sparseness degrees $\sigma_H\in\set{0.75,\ 0.85,\ 0.95,\, 0.99}$.
One thousand learning epochs were carried out, presenting $30\;000$ samples in each epoch, and using an initial step size of $\eta_0 := 1$.
During dictionary learning, only the first trivial iteration of a Landweber procedure is carried out for sparse code word inference by computing $h := \Pi_{\sigma_H}(W\transp x)$ for the ordinary inference model.

\begin{figure*}[t]
  \centering
  \subfloat[Results of the Online Dictionary Learning (ODL) algorithm by \citet{Mairal2009a}, where $\lambda$ trades off between sparseness and reproduction capabilities in a model with implicit sparseness constraints.]
  {
    \label{fig:inference_sparseness_plot_256_spams}
    \includegraphics[width=84mm]{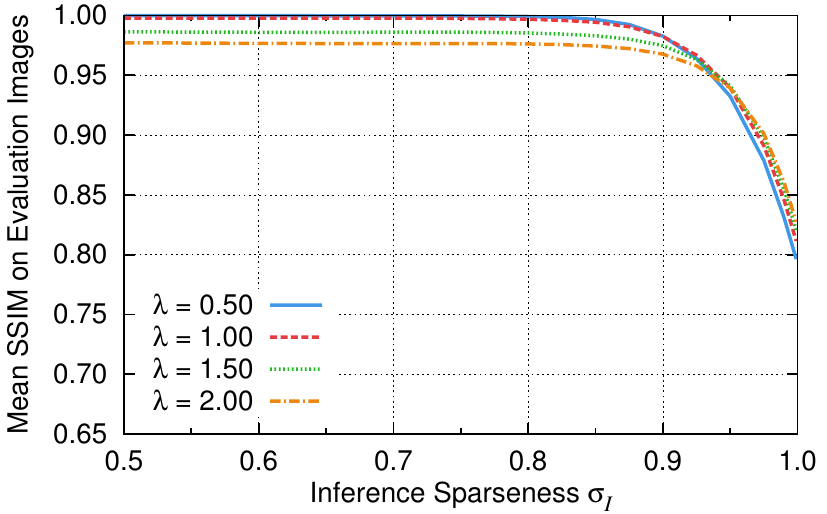}
  }\hfill
  \subfloat[Results of the Recursive Least Squares Dictionary Learning Algorithm (RLS-DLA) by \citet{Skretting2010}. Here, $\zeta$ denotes the target $L_0$ pseudo-norm of the code words for inference.]
  {
    \label{fig:inference_sparseness_plot_256_karlsk}
    \includegraphics[width=84mm]{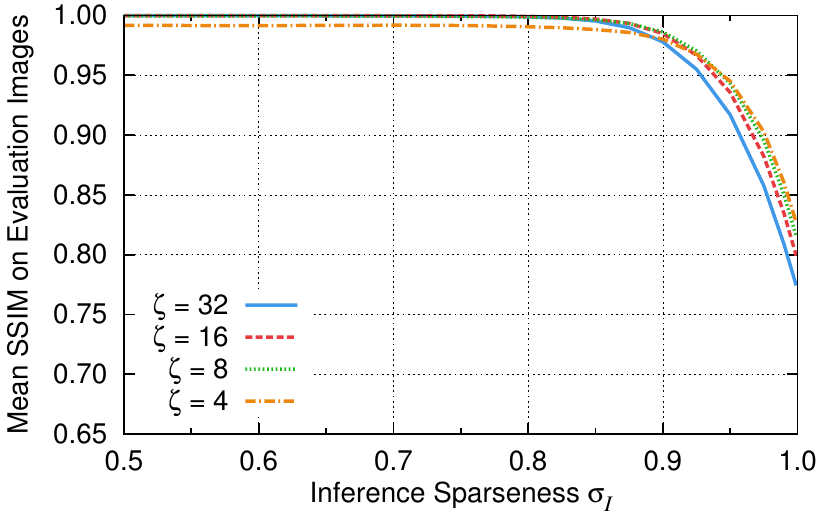}
  }
  \caption{Experimental results on image reproduction where alternative dictionary learning algorithms were used. The dictionaries behave similarly to Easy Dictionary Learning dictionaries if the dictionary sparseness parameter and the inference sparseness are varied.}
  \label{fig:inference_plots_comparison}
\end{figure*}

We first analyzed the impact of carrying out more Landweber iterations during the image reproduction phase, and the effect of varying the inference sparseness degree $\sigma_I$.
A huge performance increase can be obtained by using more than one iteration for inference (Fig.~\ref{fig:landweber_iteration_plot_256}).
Almost the optimal performance is achieved after ten iterations, and after one hundred iterations the method has converged.
For $\sigma_I = 0.75$, the performance of dictionaries with $\sigma_H = 0.75$ and $\sigma_H = 0.85$ is about equal yielding the maximum SSIM of one.
This value indicates that there is no visual difference between the reproduced images and the original images.

When the inference sparseness $\sigma_I$ is increased to $0.85$, a difference in the choice of the dictionary sparseness $\sigma_H$ becomes noticeable.
For $\sigma_H = 0.85$, performance already degrades, and the degradation is substantial if $\sigma_H = 0.75$.
It is more intuitive when $\sigma_H$ and $\sigma_I$ are put in relation.
Almost no information is lost in the reproduction by enforcing a lower inference sparseness than dictionary sparseness ($\sigma_I < \sigma_H$).
Performance is worse if $\sigma_I > \sigma_H$ which is plausible because the dictionary was not adapted for a higher sparseness degree.
In the natural case $\sigma_I = \sigma_H$ the performance mainly depends on their concrete value, where higher sparseness results in worse reproduction capabilities.

To further investigate this behavior, we set the number of Landweber iterations to $100$ and varied $\sigma_I$ smoothly in the interval $\intervalco{0.50}{1.0}$ for the four different dictionary sparseness degrees.
The results of this experiment are depicted in Fig.~\ref{fig:inference_sparseness_plot_256}.
For $\sigma_I \leq 0.80$ there is hardly any difference in the reproduction quality irrespective of the value of $\sigma_H$.
The difference when training dictionaries with different sparseness degrees first becomes visible for large values of $\sigma_I$.
Performance is here better using dictionaries where very sparse code words were demanded during learning.
Hence, tasks that require high code word sparseness should use dictionaries trained with high values of the dictionary sparseness.

\subsubsection{Comparison with Alternative Dictionary Learning Algorithms}
For a comparison, we conducted experiments using the Online Dictionary Learning (ODL) algorithm of \citet{Mairal2009a} and the Recursive Least Squares Dictionary Learning Algorithm (RLS-DLA) by \citet{Skretting2010}.
For inference of sparse code words, ODL minimizes the reproduction error under implicit sparseness constraints.
RLS-DLA uses an external vector selection algorithm for inference, hence explicit constraints such as a target $L_0$ pseudo-norm can be demanded easily.
Both algorithms update the dictionary after each sample presentation.

ODL does not require any step sizes to be adjusted.
The crucial parameter for dictionary sparseness here is a number $\lambda\in\R_{>0}$, which controls the trade-off between reproduction capabilities and code word sparseness.
We trained five dictionaries for each $\lambda\in\set{0.50,\ 1.00,\ 1.50,\ 2.00}$ using $n := 256$ atoms and presenting $30\;000$ learning samples in each of one thousand epochs.
Then, the same evaluation methodology as before with $\sigma_I\in\intervalco{0.50}{1.0}$ was used to assess the reproduction capabilities.
The results are shown in Fig.~\ref{fig:inference_sparseness_plot_256_spams}.
The choice of $\lambda$ is not as influential compared to $\sigma_H$ in EZDL.
There are only small performance differences for $\sigma_I < 0.925$, and for $\sigma_I \geq 0.925$ hardly any difference can be observed.
Similar to the EZDL experiments, large values of $\lambda$ result in better performance for large $\sigma_I$ and worse performance for small $\sigma_I$.

RLS-DLA provides a forgetting factor parameter which behaves analogously to the step size in gradient descent.
We used the default forgetting factor schedule which interpolates between $0.99$ and $1$ using a cubic function over one thousand learning epochs.
For inference, we chose an optimized Orthogonal Matching Pursuit variant \citep{Gharavi1998} where a parameter $\zeta\in\N$ controls the number of non-vanishing coordinates in the code words.
We trained five dictionaries with $n := 256$ atoms for each $\zeta\in\set{4,\ 8,\ 16,\ 32}$.
Thirty thousand randomly drawn learning samples were presented in each learning epoch.
The resulting reproduction performance is shown in Fig.~\ref{fig:inference_sparseness_plot_256_karlsk}.
Again, for large values of $\sigma_I$ the dictionaries with the most sparse code words during learning perform best, that is those trained with small values of $\zeta$.

To compare all three dictionary learning algorithms independent of the concrete dictionary sparseness, we took the mean SSIM value that belonged to the best performing dictionaries for each feasible value of $\sigma_I$.
This can also be interpreted as using the convex hull of the results shown in Fig.~\ref{fig:inference_sparseness_plot_256} and Fig.~\ref{fig:inference_plots_comparison}.
This yielded one curve for each dictionary learning algorithm, depicted in Fig.~\ref{fig:inference_plots_maxoverdicts}.
There is only a minor difference between the curves over the entire range of $\sigma_I$.
It can hence be concluded that the algorithms learn dictionaries which are equally well-suited for the reproduction of entire images.
Although EZDL uses a very simple learning rule, this is sufficient enough to achieve a performance competitive with that of two state-of-the-art dictionary learning algorithms.

\subsubsection{Comparison of Learning Speed}
We moreover compared the learning speed of the methods and investigated the influence of the initial step size on the EZDL dictionaries.
In doing so, we carried out reproduction experiments on entire images using dictionaries provided by the learning algorithms after certain numbers of learning epochs.
In each of one thousand overall epochs, $30\;000$~learning samples where input to all three algorithms.
The dictionary sparseness parameters were set to $\sigma_H := 0.99$ for EZDL, to $\lambda := 2.00$ for Online Dictionary Learning, and to $\zeta := 4$ for the Recursive Least Squares Dictionary Learning Algorithm.

\begin{figure}[t]
  \centering
  \includegraphics[width=84mm]{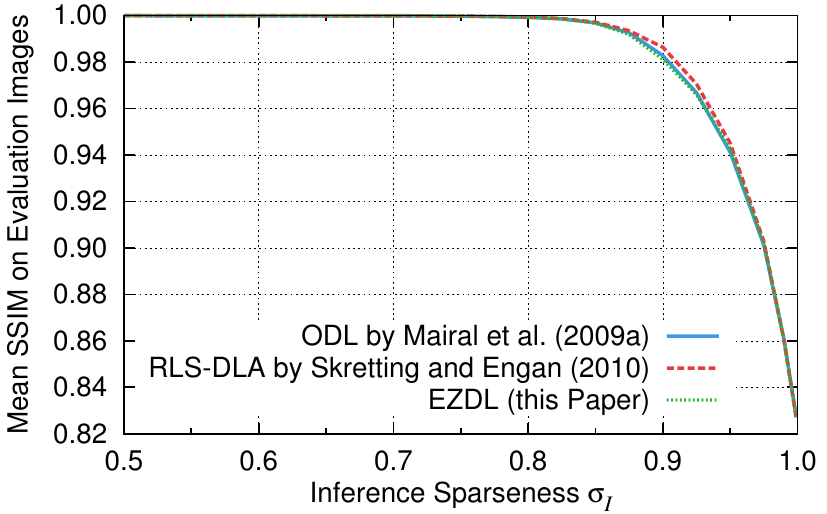}
  \caption{Comparison of dictionary performance when the dependency on the dictionary sparseness degrees $\lambda$, $\zeta$ and $\sigma_H$ was eliminated, for ODL, RLS-DLA and EZDL, respectively. All three algorithms produce dictionaries equally well-suited to the reproduction of entire images.}
  \label{fig:inference_plots_maxoverdicts}
\end{figure}

\begin{figure}[t]
  \centering
  \includegraphics[width=84mm]{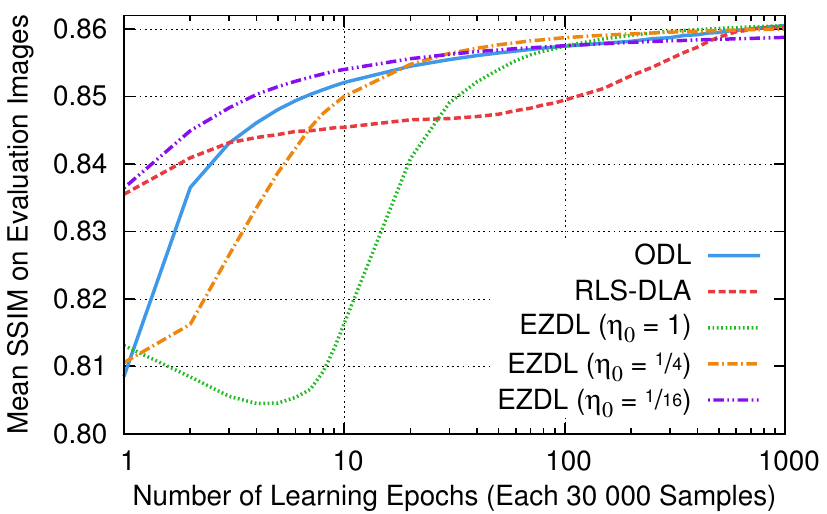}
  \caption{Reproduction quality in dependence on the number of samples presented to Online Dictionary Learning, the Recursive Least Squares Dictionary Learning Algorithm, and Easy Dictionary Learning, where the initial step size $\eta_0$ was varied for the latter. All approaches achieved the same performance after $1000$~learning epochs.}
  \label{fig:learning-history}
\end{figure}

Timing measurements on an Intel Core i7-4960X processor have shown that one EZDL learning epoch takes approximately $30\;\%$ less time than a learning epoch of ODL.
RLS-DLA was roughly $25$~times slower than EZDL.
Although the employed vector selection method was present as part of an optimized software library, only a fairly unoptimized implementation of the actual learning algorithm was available.
Therefore, a comparison of this algorithm with regard to execution speed would be inequitable.

The inference sparseness was set to $\sigma_I := 0.99$ and one hundred Landweber iterations were carried out for sparse code word inference.
The SSIM index was eventually used to assess the reproduction performance.
Figure~\ref{fig:learning-history} visualizes the results of the learning speed analysis, averaged over all images from the evaluation set and five dictionaries trained for each parameterization to compensate probabilistic effects.
Online Dictionary Learning is free of step sizes, dictionary performance increased instantly and constantly with each learning epoch.
Only small performance gains could be observed after $100$~learning epochs.

The performance of dictionaries trained with RLS-DLA was initially better than that of the ODL dictionaries, but improved more slowly.
After a hundred epochs, however, it was possible to observe a significant performance gain.
Although tweaking the forgetting factor schedule may have led to better early reproduction capabilities, RLS-DLA achieved a performance equivalent to that of ODL after $1000$~epochs.

If EZDL's initial step size was set to unity, performance first degraded, but started to improve after the fifth epoch.
After $100$~epochs, the performance was identical to ODL's, and all three methods obtained equal reproduction capabilities after $1000$~epochs.
Reduction of the initial step size caused the dictionaries to perform better after few sample presentations.
The quality of Online Dictionary Learning was achieved after $20$~epochs for $\eta_0 = \nicefrac{1}{4}$, and for $\eta_0 = \nicefrac{1}{16}$ the EZDL dictionaries were always better in the mean until the $100$th epoch.
There is hardly any quality difference after $1000$~epochs, a very small initial step size resulted however in a slightly worse performance.

The choice of the initial step size for EZDL is hence not very influential.
Although $\eta_0 = 1$ caused small overfitting during the very first epochs, the dictionaries quickly recovered so that no significant difference in reproduction capabilities could be observed after $100$~epochs.
Since an EZDL epoch is $30\;\%$ faster than an Online Dictionary Learning epoch, our proposed method produces better results earlier on an absolute time scale.

\subsection{Image Denoising Experiments}
We have demonstrated in Sect.~\ref{sect:reproduction_experiments} that dictionaries learned with Easy Dictionary Learning are as good as those obtained from the Online Dictionary Learning algorithm of \citet{Mairal2009a} and the Recursive Least Squares Dictionary Learning Algorithm by \citet{Skretting2010} in terms of the reproduction quality of entire images.
In a final experiment, we investigated the suitability of EZDL dictionaries for image denoising using the image enhancement procedure proposed by \citet{Mairal2009}.

This method carries out semi-local block matching and finds sparse code words by imposing a group sparseness penalty on the Euclidean reproduction error.
In doing so, a pre-learned dictionary is used which explains the appearance of uncorrupted images and further helps to resolve ambiguities if block matching fails to provide large enough groups.
A linear generative model is finally used to find estimators of denoised image patches from the sparse code words.
This procedure is quite robust if the input data is noisy, since sparseness provides a strong prior which well regularizes this ill-posed inverse problem \citep{Kreutz-Delgado2003,Foucart2013}.

The denoising approach of \citet{Mairal2009} also provides the possibility of dictionary adaptation while denoising concrete input data.
We did not use this option as it would hinder resilient statements on a dictionary's eligibility if it would be modified with another dictionary learning algorithm during denoising.

The methodology of the experiment was as follows.
We used the four-times overcomplete dictionaries trained on the $8\times 8$ pixels image patches from the "Foliage" collection of the McGill calibrated color image database \citep{Olmos2004} as models for uncorrupted images.
The dictionary sparseness parameters were $\sigma_H := 0.99$ for EZDL, $\lambda := 2.00$ for ODL and $\zeta := 4$ for RLS-DLA.
For evaluation, we used the $81$~images of the "Animals" collection from the McGill database, and converted them to $8$-bit grayscales as in the previous experiments.
The images were synthetically corrupted with additive white Gaussian noise.
Five noisy images were generated for each original image and each standard deviation from $\set{2.5,\ 5.0,\ \dotsc,\ 47.5,\ 50.0}$.

These images were then denoised using the candidate dictionaries, where the window size for block matching was set to $32$~pixels.
Then, the similarity between the reconstructions and the original images was measured with the peak signal-to-noise ratio.
We also evaluated the SSIM index, which led to qualitatively similar results.

\begin{figure}[t]
  \centering
  \includegraphics[width=84mm]{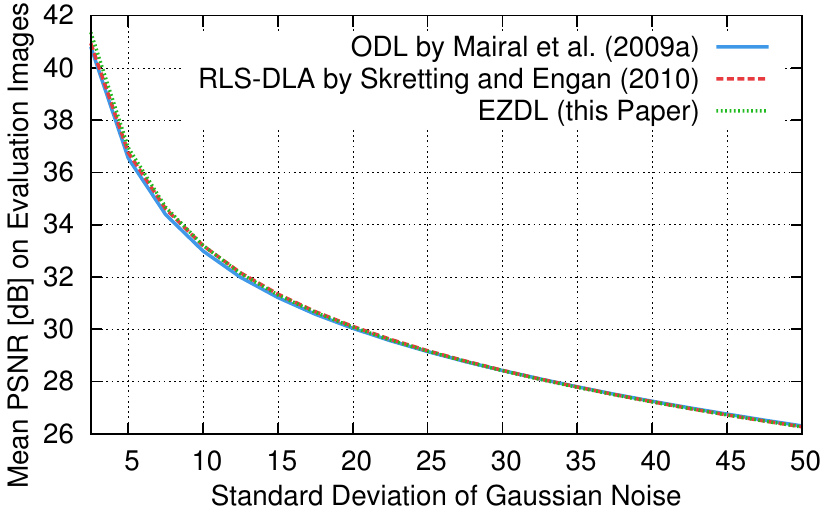}
  \caption{Denoising performance in terms of the peak signal-to-noise ratio (PSNR) using the denoising method proposed by \citet{Mairal2009}. There is only a small performance difference between dictionaries trained with ODL, RLS-DLA and EZDL.}
  \label{fig:denoise-results}
\end{figure}

The results are depicted in Fig.~\ref{fig:denoise-results}.
Denoising performance degrades if the noise strength is increased.
There is hardly any difference between the dictionaries trained with the three algorithms.
The RLS-DLA and EZDL dictionaries perform slightly better for small synthetic noise levels, but this improvement is visually imperceptible.
This result is not surprising, since Sect.~\ref{sect:reproduction_experiments} demonstrated that all three learning algorithms produce dictionaries equally well-suited for the reproduction of entire images.

The denoising procedure of \citet{Mairal2009} aims at reproduction capabilities as well, with the modification of employing noisy samples as input.
Image enhancement and compression applications such as those proposed by \citet{Yang2010,Yang2012}, \citet{Dong2011}, \citet{Skretting2011} and \citet{Horev2012} which also use problem formulations based on the reproduction error can hence be expected to benefit from more efficiently learned dictionaries as well.

\section{Conclusions}
\label{sect:conclusions}
This paper proposed the EZDL algorithm which features explicit sparseness constraints with respect to Hoyer's smooth sparseness measure $\sigma$.
Pre-defined sparseness degrees are ensured to always be attained using a sparseness-enforcing projection operator.
Building upon a succinct representation of the projection, we proved that the projection problem can be formulated as a root-finding problem.
We presented a linear time and constant space algorithm for the projection which is superior to previous approaches in terms of theoretical computational complexity and execution time on real computing machines.

EZDL adapts dictionaries to measurement data through simple rank-1 updates.
The sparseness projection serves as foundation for sparse code word inference.
Due to the projection efficiency and since no complicated gradients are required, our proposed learning algorithm is significantly faster than even the optimized ODL algorithm.
Topographic atom organization and atom sparseness can be realized with very simple extensions, allowing for versatile sparse representations of data sets.
Its simplicity and efficiency does not hinder EZDL from producing dictionaries competitive with those generated by ODL and RLS-DLA in terms of the reproduction and denoising performance on natural images.
Alternative image processing methods based on sparse representations rely on dictionaries subject to the same criteria, and can thus be expected to benefit from EZDL's advantages as well.

\begin{acknowledgements}
The authors are grateful to Heiko Neumann, Florian Sch\"ule, and Michael Gabb for helpful discussions.
We would like to thank Julien Mairal and Karl Skretting for making implementations of their algorithms available.
Parts of this work were performed on the computational resource bwUniCluster funded by the Ministry of Science, Research and Arts and the Universities of the State of Baden-W\"urttemberg, Germany, within the framework program bwHPC.
This work was supported by Daimler~AG, Germany.
\end{acknowledgements}

\appendix
\section*{Appendix: Technical Details and Proofs for Section~\ref{sect:projfunc}}
This appendix studies the algorithmic computation of Euclidean projections onto level sets of Hoyer's $\sigma$ in greater detail, and in particular proves the correctness of the algorithm proposed in Sect.~\ref{sect:projfunc}.

For a non-empty subset $M\subseteq\R^n$ of the Euclidean space and a point $x\in\R^n$, we call
\begin{displaymath}
  \proj_M(x) := \set{y\in M | \norm{y - x}_2 = \inf\nolimits_{z\in M}\norm{z - x}_2}
\end{displaymath}
the set of Euclidean projections of $x$ onto $M$ \citep{Deutsch2001}.
Since we only consider situations in which $\proj_M(x) = \set{y}$ is a singleton, we may also write $y = \proj_M(x)$.

Without loss of generality, we can compute $\proj_T(x)$ for a vector $x\in\R_{\geq 0}^n$ within the non-negative orthant instead of $\proj_S(x)$ for an arbitrary point $x\in\R^n$ to yield sparseness-enforcing projections, where $T$ and $S$ are as defined in Sect.~\ref{sect:projfunc}.
First, the actual scale is irrelevant as we can simply re-scale the result of the projection \citep[Remark~5]{Thom2013}.
Second, the constraint that the projection lies in the non-negative orthant $\R_{\geq 0}^n$ can easily be handled by flipping the signs of certain coordinates \citep[Lemma~11]{Thom2013}.
Finally, all entries of $x$ can be assumed non-negative with Corollary~19 from \citet{Thom2013}.

We note that $T$ is non-convex because of the $\norm{s}_2 = \lambda_2$ constraint.
Moreover, $T\neq\emptyset$ for all target sparseness degrees $\sigma^*\in\intervaloo{0}{1}$ which we show here by construction (see also Remark~18 in \citet{Thom2013} for further details):
Let $\psi := \big(\lambda_1 - \nicefrac{\sqrt{n\lambda_2^2 - \lambda_1^2}}{\sqrt{n - 1}}\big)/n > 0$ and $\omega := \lambda_1 - (n - 1)\psi > 0$, then the point $q := \sum_{i = 1}^{n - 1} \psi e_i + \omega e_n\in\R^n$ lies in $T$, where $e_i\in\R^n$ denotes the $i$-th canonical basis vector.
Since all coordinates of $q$ are positive, $T$ always contains points with an $L_0$ pseudo-norm of $n$.
If we had used the $L_0$ pseudo-norm to measure sparseness, then $q$ would have the same sparseness degree as, for example, the vector with all entries equal to unity.
If, however, $\sigma^*$ is close to one, then there is only one large value $\omega$ in $q$ and all the other entries equaling $\psi$ are very small but positive.
This simple example demonstrates that in situations where the presence of noise cannot be eliminated, Hoyer's $\sigma$ is a much more robust sparseness measure than the $L_0$ pseudo-norm.

\subsection*{Representation Theorem}
Before proving a theorem on the characterization of the projection onto $T$, we first fix some notation.
As above, let $e_i\in\R^n$ denote the $i$-th canonical basis vector and let furthermore $e := \sum_{i=1}^ne_i\in\R^n$ be the vector where all entries are one.
We note that if a point $x$ resides in the non-negative orthant, then $\norm{x}_1 = e\transp x$.
Subscripts to vectors denote the corresponding coordinate, except for $e$ and $e_i$.
For example, we have that $x_i = e_i\transp x$.
We abbreviate $\xi\in\R_{\geq 0}$ with $\xi\geq 0$ when it is clear that $\xi$ is a real number.
When $I\subseteq\discint{1}{n}$ is an index set with $d := \abs{I}$ elements, say $i_1 < \cdots < i_d$, then the unique matrix $V_I\in\set{0, 1}^{d\times n}$ with $V_I x = \left(x_{i_1},\ \dots,\ x_{i_d}\right)\transp\in\R^d$ for all $x\in\R^n$ is called the \emph{slicing operator}.
A useful relation between the $L_0$ pseudo-norm, the Manhattan norm and the Euclidean norm is $\norm{x}_2 \leq \norm{x}_1 \leq \smash{\norm{x}_0^{\nicefrac{1}{2}}\norm{x}_2} \leq \sqrt{n}\norm{x}_2$ for all points $x\in\R^n$.

We are now in a position to formalize the representation theorem:
\begin{theorem}
\label{thm:representation}
Let $x\in\R_{\geq 0}^n\setminus T$ and $p := \proj_T(x)$ be unique.
Then there is exactly one number $\alpha^*\in\R$ such that
\begin{displaymath}
  p = \beta^*\cdot\max\left(x - \alpha^*\cdot e,\ 0\right)\text{,}
\end{displaymath}
where $\beta^* := \nicefrac{\lambda_2}{\norm{\max\left(x - \alpha^*\cdot e,\ 0\right)}_2} > 0$ is a scaling constant.
Moreover, if $I := \set{i\inint{1}{n} | p_i > 0} = \discint{i_1}{i_d}$, $d := \abs{I}$ and $i_1 < \cdots < i_d$, denotes the set of the $d$ coordinates in which $p$ does not vanish, then
\begin{displaymath}
  \alpha^* = \frac{1}{d}\left(\norm{V_I x}_1 - \lambda_1\sqrt{\frac{d\norm{V_I x}_2^2 - \norm{V_I x}_1^2}{d\lambda_2^2 - \lambda_1^2}}\right)\text{.}
\end{displaymath}
\end{theorem}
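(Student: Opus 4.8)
The plan is to treat the projection as the solution of a constrained optimization problem and to extract the soft-shrinkage form from its first-order optimality conditions. Writing $p = \proj_T(x)$ as the minimizer of $\tfrac{1}{2}\norm{x - s}_2^2$ over all $s\in\R_{\geq 0}^n$ subject to the two equality constraints $e\transp s = \lambda_1$ and $\norm{s}_2^2 = \lambda_2^2$, I would form the Lagrangian with a multiplier $\alpha$ for the $L_1$ constraint, a multiplier $\gamma$ for the $L_2$ constraint, and multipliers $\mu_i \geq 0$ for the sign constraints $s_i \geq 0$. Stationarity reads $(p - x) + \alpha e + 2\gamma p - \mu = 0$, which rearranges to $(1 + 2\gamma)\,p = x - \alpha e + \mu$. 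Setting $\beta := 1/(1+2\gamma)$, complementary slackness $\mu_i p_i = 0$ then forces a clean dichotomy: on the support $I = \set{i | p_i > 0}$ we have $\mu_i = 0$ and hence $p_i = \beta(x_i - \alpha)$, while off the support $p_i = 0$ with $\mu_i = \alpha - x_i \geq 0$. Since $\beta > 0$ and $p_i > 0$ on $I$, this is precisely the entry-wise soft-shrinkage statement $p = \beta\cdot\max(x - \alpha e,\ 0)$, with the off-support coordinates automatically clipped to zero.

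It then remains to identify $\alpha$ and $\beta$ from the two norm constraints. Normalizing $\norm{p}_2 = \lambda_2$ immediately yields $\beta = \beta^* = \lambda_2/\norm{\max(x - \alpha e,\ 0)}_2$, matching the claimed scaling constant. Writing the two constraints restricted to the support as $\lambda_1 = \beta(\norm{V_I x}_1 - d\alpha)$ and $\lambda_2^2 = \beta^2(\norm{V_I x}_2^2 - 2\alpha\norm{V_I x}_1 + d\alpha^2)$ and eliminating $\beta$ through the ratio $\lambda_1^2/\lambda_2^2$, I obtain a single quadratic equation in $\alpha$. A routine computation shows its discriminant simplifies to a constant multiple of $(d\lambda_2^2 - \lambda_1^2)\,\lambda_1^2\,(d\norm{V_I x}_2^2 - \norm{V_I x}_1^2)$, so the two roots are $\tfrac{1}{d}\big(\norm{V_I x}_1 \pm \lambda_1\sqrt{(d\norm{V_I x}_2^2 - \norm{V_I x}_1^2)/(d\lambda_2^2 - \lambda_1^2)}\big)$. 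The positivity requirement $\beta > 0$, equivalently $\norm{V_I x}_1 - d\alpha > 0$, selects exactly the root with the minus sign, which is the asserted expression for $\alpha^*$; the plus root would force $\beta < 0$ and is discarded.

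For uniqueness of $\alpha^*$ I would argue that once the support $I$ is fixed, the pair of norm constraints pins $(\alpha,\beta)$ to the two quadratic roots, only one of which is admissible; and since $p$ is the unique projection its support is fixed, so no second admissible offset can reproduce the same $p$. The main obstacle is justifying that the KKT stationarity conditions are genuinely necessary here: the set $T$ is non-convex because of the sphere constraint $\norm{s}_2 = \lambda_2$, so I must verify a constraint qualification at $p$, namely that the active-constraint gradients $e$, $2p$, and $\set{e_i | i\notin I}$ are linearly independent, which I expect to hold away from the null set on which the projection is non-unique, the very set excluded by hypothesis. Establishing this regularity, and separately confirming that the support $I$ coincides with the indices of the largest entries of $x$ so that the clipping in $\max(x - \alpha e,\ 0)$ is consistent with $\alpha < x_i$ on $I$ and $\alpha \geq x_i$ off $I$, is the delicate part; everything downstream is the elementary algebra of solving the quadratic and selecting its admissible root.
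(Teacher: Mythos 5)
Your strategy coincides with the paper's own second proof of existence: the paper proves this theorem twice, once constructively through the alternating-projections analysis of \citet{Thom2013} and once implicitly through the method of Lagrange multipliers, and your KKT stationarity, the on/off-support dichotomy from complementary slackness, the quadratic in $\alpha$ obtained by eliminating $\beta$ through the ratio $\nicefrac{\lambda_1^2}{\lambda_2^2}$, and the discriminant computation reproduce that implicit proof almost step for step. The difficulty is that the steps you defer or merely assert are exactly where the work lies, and all of them rest on a single lemma your proposal never establishes: \emph{the positive coordinates of $p$ cannot all be equal}. The paper proves this first --- if $p_i = \gamma$ for all $i \in I$, it constructs a competing point $s \in T$ with $\norm{x - s}_2 \leq \norm{x - p}_2$, contradicting the assumed uniqueness of the projection --- and then uses it three times: to verify the constraint qualification you leave open (linear independence of $e$, $2p$ and $\set{e_i | i \notin I}$ is proved precisely by playing two distinct values $p_i \neq p_j$ against each other to force the multiplier attached to $2p$ to vanish); to exclude the degenerate sphere multiplier $\gamma = -\nicefrac{1}{2}$, in which case your identity $(1+2\gamma)\,p = x - \alpha e + \mu$ cannot be divided through and your $\beta$ does not exist; and, most importantly, to select the correct root of the quadratic.

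That root selection is the serious gap. You discard the ``$+$'' root because it ``would force $\beta < 0$'', but positivity of the scaling is part of what the theorem asserts, so invoking it as a requirement is circular: nothing in the first-order conditions forces the true multiplier at $p$ to satisfy $1 + 2\gamma > 0$ (already for the sphere constraint alone, the point of $T$ farthest from $x$ is also a stationary point, and it is the one with negative scaling, so the sign distinguishes the minimizer from other critical points and has to be earned). The paper earns it with an argument you would need to import: if $\beta^* < 0$, take $i,j \in I$ with $p_i > p_j$ (two distinct values again); then $0 < p_i - p_j = \beta^*(x_i - x_j)$ forces $x_i < x_j$, contradicting the order-preservation property of projections onto permutation-invariant sets, Lemma~9(a) of \citet{Thom2013}. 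Your uniqueness paragraph inherits the same hole: ``only one of the two roots is admissible'' presupposes the two roots are distinct, but when the entries of $V_I x$ are all equal the elimination degenerates ($d\lambda_2^2 = \lambda_1^2$ and the quadratic vanishes identically), and then every offset $\alpha$ between the largest off-support entry of $x$ and the common on-support value produces the same rescaled point $p$, which is exactly a failure of uniqueness of $\alpha^*$. So uniqueness, too, must be pulled out of the uniqueness of the projection via the competing-point construction, not out of the fixedness of the support alone. With the two-distinct-values lemma and the order-preservation argument supplied, your outline closes; without them, the existence half is incomplete and the uniqueness half breaks in the degenerate case.
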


\begin{proof}
It is possible to prove this claim either constructively or implicitly, where both variants differ in whether the set $I$ of all positive coordinates in the projection can be computed from $x$ or must be assumed to be known.
We first present a constructive proof based on a geometric analysis conducted in \citet{Thom2013}, which contributes to deepening our insight into the involved computations.
As an alternative, we also provide a rigorous proof using the method of Lagrange multipliers which greatly enhances the unproven analysis of \citet[Section~3.1]{Potluru2013}.

We first note that when there are $\alpha^*\in\R$ and $\beta^* > 0 $ so that we have $p = \beta^*\cdot\max\left(x - \alpha^*\cdot e,\ 0\right)$, then $\beta^*$ is determined already through $\alpha^*$ because it holds that $\lambda_2 = \norm{p}_2 = \beta^*\cdot\norm{\max\left(x - \alpha^*\cdot e,\ 0\right)}_2$.
We now show that the claimed representation is unique, and then present the two different proofs for the existence of the representation.

\emph{Uniqueness:}
It is $d\geq 2$, since $d = 0$ would violate that $\norm{p}_1 > 0$ and $d = 1$ is impossible because $\sigma^* \neq 1$.
We first show that there are two distinct indices $i,j\in I$ with $p_i\neq p_j$.
Assume this was not the case, then $p_i =: \gamma$, say, for all $i\in I$.
Let $j := \argmin_{i\in I} x_i$ be the index of the smallest coordinate of $x$ which has its index in $I$.
Let $\psi := \big(\lambda_1 - \nicefrac{\sqrt{d\lambda_2^2 - \lambda_1^2}}{\sqrt{d - 1}}\big)/d\in\R$ and $\omega := \lambda_1 - (d - 1)\psi\in\R$ be numbers and define $s := \sum_{i\in I\setminus\set{j}}\psi e_i + \omega e_j\in\R^n$.
Then $s\in T$ like in the argument where we have shown that $T$ is non-empty.
Because of $\norm{p}_1 = \norm{s}_1$ and $\norm{p}_2 = \norm{s}_2$, it follows that
$\norm{x - p}_2^2 - \norm{x - s}_2^2 = 2x\transp\left(s - p\right) = 2\sum_{i\in I\setminus\set{j}} x_i(\psi - \gamma) + 2x_j (\omega - \gamma) \geq 2x_j((d-1)\psi + \omega - d\gamma) = 2x_j\left(\norm{s}_1 - \norm{p}_1\right) = 0$.
Hence $s$ is at least as good an approximation to $x$ as $p$, violating the uniqueness of $p$.
Therefore, it is impossible that the set $\set{p_i | i\in I}$ is a singleton.

Now let $i,j\in I$ with $p_i\neq p_j$ and $\alpha_1^*,\alpha_2^*,\beta_1^*,\beta_2^*\in\R$ such that $p = \beta_1^*\cdot\max\left(x - \alpha_1^*\cdot e,\ 0\right) = \beta_2^*\cdot\max\left(x - \alpha_2^*\cdot e,\ 0\right)$.
Clearly $\beta_1^*\neq 0$ and $\beta_2^*\neq 0$ as $d\neq 0$.
It is $0\neq p_i - p_j = \beta_1^*(x_i - \alpha_1^*) - \beta_1^*(x_j - \alpha_1^*) = \beta_1^*(x_i - x_j)$, thus $x_i\neq x_j$ holds.
Moreover, $0 = p_i - p_j + p_j - p_i = (\beta_1^* - \beta_2^*)(x_i - x_j)$, hence $\beta_1^* = \beta_2^*$.
Finally, we have that $0 = p_i - p_i = \beta_1^*(x_i - \alpha_1^*) - \beta_2^*(x_i - \alpha_2^*) = \beta_1^*(\alpha_2^* - \alpha_1^*)$, which yields $\alpha_1^* = \alpha_2^*$, and hence the representation is unique.

\emph{Existence (constructive):}
Let $H := \set{a\in\R^n | e\transp a = \lambda_1}$ be the hyperplane on which all points in the non-negative orthant have an $L_1$ norm of $\lambda_1$ and let $C := \R_{\geq 0}^n\cap H$ be a scaled canonical simplex.
Further, let $L := \set{q\in H | \norm{q}_2 = \lambda_2}$ be a circle on $H$, and for an arbitrary index set $I\subseteq\discint{1}{n}$ let $L_I := \set{a\in L | a_i = 0\text{ for all }i\not\in I}$ be a subset of $L$ where all coordinates not indexed by $I$ vanish.
With Theorem~2 and Appendix~D from \citet{Thom2013} there exists a finite sequence of index sets $I_1,\dotsc,I_h\subseteq\discint{1}{n}$ with $I_j\supsetneq I_{j+1}$ for $j\inint{1}{h-1}$ such that $\proj_T(x)$ is the result of the finite sequence
\begin{align*}
  r(0) &:= \proj_H(x)\text{, }& s(0) &:= \proj_L(r(0))\text{,}\\
  r(1) &:= \proj_C(s(0))\text{, }& s(1) &:= \proj_{L_{I_1}}\!(r(1))\text{,}\enspace\dots\\
  r(j) &:= \proj_C(s(j-1))\text{, }& s(j) &:= \proj_{L_{I_j}}\!(r(j))\text{,}\enspace\dots\\
  r(h) &:= \proj_C(s(h-1))\text{, }& s(h) &:= \proj_{L_{I_h}}\!(r(h)) = p\text{.}
\end{align*}
All intermediate projections yield unique results because $p$ was restricted to be unique.
The index sets contain the indices of the entries that survive the projections onto $C$, $I_j := \set{i\inint{1}{n} | r_i(j) \neq 0}$ for $j\inint{1}{h}$.
In other words, $p$ can be computed from $x$ by alternating projections, where the sets $L$ and $L_{I_j}$ are non-convex for all $j\inint{1}{h}$.
The expressions for the individual projections are given in Lemma~13, Lemma~17, Proposition~24, and Lemma~30, respectively, in \citet{Thom2013}.

Let $I_0 := \discint{1}{n}$ for completeness, then we can define the following constants for $j\inint{0}{h}$.
Let $d_j := \abs{I_j}$ be the number of relevant coordinates in each iteration, and let
\begin{displaymath}
  \beta_j := \sqrt{\frac{d_j\lambda_2^2 - \lambda_1^2}{d_j\snorm{V_{I_j}x}_2^2 - \snorm{V_{I_j}x}_1^2}}\text{ and }\alpha_j :=  \frac{1}{d_j}\left(\snorm{V_{I_j}x}_1 - \tfrac{\lambda_1}{\beta_j}\right)
\end{displaymath}
be real numbers.
We have that $d_j\lambda_2^2 - \lambda_1^2 \geq d_h\lambda_2^2 - \lambda_1^2 \geq 0$ by construction which implies $\beta_j > 0$ for all $j\inint{0}{h}$.
We now claim that the following holds:
\begin{enumerate}
  \item \label{thm:representation_a} $s_i(j) = \beta_j\cdot(x_i - \alpha_j)$ for all $i\in I_j$ for all $j\inint{0}{h}$.
  \item \label{thm:representation_b} $\alpha_0 \leq \cdots \leq \alpha_h$ and $\beta_0 \leq \cdots \leq \beta_h$.
  \item \label{thm:representation_c} $x_i\leq \alpha_j$ for all $i\not\in I_j$ for all $j\inint{0}{h}$.
  \item \label{thm:representation_d} $p = \beta_h\cdot\max\left(x - \alpha_h\cdot e,\ 0\right)$.
\end{enumerate}

We start by showing~\ref{thm:representation_a} with induction. For $j = 0$, we have $r(0) = x + \nicefrac{1}{n}\cdot\left(\lambda_1 - \snorm{x}_1\right)e$ using Lemma~13 from \citet{Thom2013}.
With Lemma~17 stated in \citet{Thom2013}, we have $s(0) = \delta r(0) + (1 - \delta) m$ with $m = \nicefrac{\lambda_1}{n}\cdot e$ and $\delta^2 = \left(\lambda_2^2 - \nicefrac{\lambda_1^2}{n}\right) \big/ \snorm{r(0) - m}_2^2$.
We see that $\snorm{r(0) - m}_2^2 = \snorm{x - \nicefrac{\norm{x}_1}{n}\cdot e}_2^2 = \snorm{x}_2^2 - \nicefrac{\snorm{x}_1^2}{n}$ and therefore $\delta = \beta_0$, and thus $s(0) = \beta_0\cdot\left(x - \nicefrac{1}{n}\cdot\left(\snorm{x}_1 - \nicefrac{\lambda_1}{\beta_0}\right)e\right)$, so the claim holds for the base case.

Suppose that~\ref{thm:representation_a} holds for $j$ and we want to show it also holds for $j + 1$.
It is $r(j+1) = \proj_C(s(j))$ by definition, and Proposition~31 in \citet{Thom2013} implies $r(j+1) = \max\left(s(j) - \hat{t}\cdot e,\ 0\right)$ where $\hat{t}\in\R$ can be expressed explicitly as $\hat{t} = \nicefrac{1}{d_{j+1}}\cdot\big(\sum_{i\in I_{j+1}} s_i(j) - \lambda_1\big)$, which is the mean value of the entries that survive the simplex projection up to an additive constant.
We note that $\hat{t}$ is here always non-negative, see Lemma~28(a) in \citet{Thom2013}, which we will need to show~\ref{thm:representation_b}.
Since $I_{j+1}\subsetneq I_j$ we yield $s_i(j) = \beta_j\cdot(x_i - \alpha_j)$ for all $i\in I_{j+1}$ with the induction hypothesis, and therefore we have that $\hat{t} = \nicefrac{1}{d_{j+1}}\cdot\big(\beta_j\snorm{V_{I_{j+1}}x}_1 - d_{j+1}\beta_j\alpha_j - \lambda_1\big)$.
We find that $r_i(j+1) > 0$ for $i\in I_{j+1}$ by definition, and we can omit the rectifier so that $r_i(j+1) = s_i(j) - \hat{t}$.
Using the induction hypothesis and the expression for $\hat{t}$ we have $r_i(j+1) = \beta_j x_i - \nicefrac{\beta_j}{d_{j+1}}\cdot\snorm{V_{I_{j+1}}x}_1 + \nicefrac{\lambda_1}{d_{j+1}}$.
For projecting onto $L_{I_{j+1}}$, the distance between $r(j+1)$ and $m_{I_{j + 1}} = \nicefrac{\lambda_1}{d_{j+1}}\cdot\sum_{i\in I_{j+1}}e_i$ is required for computation of $\delta^2 = \left(\lambda_2^2 - \nicefrac{\lambda_1^2}{d_{j+1}}\right) \big/ \snorm{r(j+1) - m_{I_{j + 1}}}_2^2$, so that Lemma~30 from \citet{Thom2013} can be applied.
We have that $\snorm{r(j+1) - m_{I_{j + 1}}}_2^2 = \sum_{i\in I_{j+1}} \big(\beta_j x_i - \nicefrac{\beta_j}{d_{j+1}}\cdot\snorm{V_{I_{j+1}}x}_1\big){}^2 = \beta_j^2\cdot\big(\snorm{V_{I_{j+1}}x}_2^2 - \nicefrac{\snorm{V_{I_{j+1}}x}_1^2}{d_{j+1}}\big)$, and further $\delta = \nicefrac{\beta_{j+1}}{\beta_j}$.
Now let $i\in I_{j+1}$ be an index, then we have $s_i(j+1) = \delta r_i(j+1) + (1 - \delta)\cdot\nicefrac{\lambda_1}{d_{j+1}} = \beta_{j+1}\cdot\big(x_i - \nicefrac{1}{d_{j+1}}\cdot\big(\snorm{V_{I_{j+1}}x}_1 - \nicefrac{\lambda_1}{\beta_{j+1}}\big)\big)$ using Lemma~30 from \citet{Thom2013}.
Therefore~\ref{thm:representation_a} holds for all $j\inint{0}{h}$.

Let us now turn to~\ref{thm:representation_b}.
From the last paragraph, we know that $\delta = \nicefrac{\beta_{j+1}}{\beta_j}$ for all $j\inint{0}{h-1}$ for the projections onto $L_{I_{j+1}}$.
On the other hand, we have that $\snorm{r(j+1) - m_{I_{j + 1}}}_2^2 = \snorm{r(j+1)}_2^2 - \nicefrac{\lambda_1^2}{d_{j+1}}$ from the proof of Lemma~30(a) from \citet{Thom2013}, and $\snorm{r(j+1)}_2^2 \leq\lambda_2^2$ holds from the proof of Lemma~28(f) in \citet{Thom2013}, so $\delta\geq 1$ which implies $\beta_0 \leq \cdots \leq \beta_h$.
As noted above, the separator for projecting onto $C$ satisfies $\hat{t}\geq 0$ for all $j\inint{0}{h-1}$.
By rearranging this inequality and using $\beta_j\leq \beta_{j+1}$, we conclude that $\alpha_j \leq \nicefrac{1}{d_{j+1}}\cdot\big(\snorm{V_{I_{j+1}}x}_1 - \nicefrac{\lambda_1}{\beta_j}\big) \leq \alpha_{j+1}$, hence $\alpha_0 \leq \cdots \leq \alpha_h$.

For~\ref{thm:representation_c} we want to show that the coordinates in the original vector $x$ which will vanish in some iteration when projecting onto $C$ are already small.
The base case $j = 0$ of an induction for $j$ is trivial since the complement of $I_0$ is empty.
In the induction step, we note that the complement of $I_{j+1}$ can be partitioned into $I_{j+1}\comp = I_j\comp\cup\big(I_j\cap I_{j+1}\comp\big)$ since $I_{j+1}\subsetneq I_j$.
For $i\in I_j\comp$ we already know that $x_i\leq \alpha_j\leq \alpha_{j+1}$ by the induction hypothesis and~\ref{thm:representation_b}.
We have shown in~\ref{thm:representation_a} that $s_i(j) = \beta_j\cdot(x_i - \alpha_j)$ for all $i\in I_j$, and if $i\in I_j\setminus I_{j+1}$ then $s_i(j) \leq\hat{t}$ since $0 = r_i(j+1) = \max\left(s_i(j) - \hat{t},\ 0\right)$.
By substituting the explicit expression for $\hat{t}$ and solving for $x_i$ we yield $x_i \leq \nicefrac{1}{d_{j+1}}\cdot\big(\snorm{V_{I_{j+1}}x}_1 - \nicefrac{\lambda_1}{\beta_j}\big) \leq \alpha_{j+1}$, and hence the claim holds for all $i\not\in I_{j+1}$.

If we can now show that~\ref{thm:representation_d} holds, then the claim of the theorem follows by setting $\alpha^* := \alpha_h$ and $\beta^* := \beta_h$.
We note that by construction $p = s(h)$ and $s_i(h) \geq 0$ for all coordinates $i\inint{1}{n}$.
When $i\in I_h$, then $s_i(h) = \beta_h\cdot(x_i - \alpha_h)$ with~\ref{thm:representation_a}, which is positive by requirement, so when the rectifier is applied nothing changes.
If $i\not\in I_h$ then $x_i - \alpha_h \leq 0$ by~\ref{thm:representation_c}, and indeed $\beta_h\cdot\max\left(x_i - \alpha_h,\ 0\right) = 0 = s_i(h)$.
The expression therefore holds for all $i\inint{1}{n}$, which completes the constructive proof of existence.

\emph{Existence (implicit):}
Existence of the projection is guaranteed by the {Weierstra\ss} extreme value theorem since $T$ is compact.
Now let $f\colon\R^n\to\R$, $s\mapsto \snorm{s - x}_2^2$, be the objective function, and let the constraints be represented by the functions  $h_1\colon\R^n\to\R$, $s\mapsto e\transp s - \lambda_1$, $h_2\colon\R^n\to\R$, $s\mapsto\snorm{s}_2^2 - \lambda_2^2$, and $g_i\colon\R^n\to\R$, $s\mapsto -s_i$, for all indices $i\inint{1}{n}$.
All these functions are continuously differentiable.
If $p = \proj_T(x)$, then $p$ is a local minimum of $f$ subject to $h_1(p) = 0$, $h_2(p) = 0$ and $g_1(p) \leq 0$, $\dots$, $g_n(p) \leq 0$.

For application of the method of Lagrange multipliers we first have to show that $p$ is regular, which means that the gradients of $h_1$, $h_2$ and $g_i$ for $i\not\in I$ evaluated in $p$ must be linearly independent \citep[Section~3.3.1]{Bertsekas1999}.
Let $J := I\comp = \discint{j_1}{j_{n - d}}$, say, then $\abs{J} \leq n - 2$ since $d\geq 2$.
Hence we have at most $n$ vectors from $\R^n$ for which we have to show linear independence.
Clearly $h_1'(s) = e$, $h_2'(s) = 2s$ and $g_i'(s) = -e_i$ for all $i\inint{1}{n}$.
Now let $u_1,u_2\in\R$ and $v\in\R^{n-d}$ with $u_1 e + 2u_2p - \sum_{\mu=1}^{n - d} v_\mu e_{j_\mu} = 0\in\R^n$.
Then, let $\mu\inint{1}{n-d}$, then $p_{j_\mu} = 0$ by definition of $I$ and hence by pre-multiplication of the equation above with $e_{j_\mu}\transp$ we yield $u_1 - v_\mu = 0\in\R$.
Therefore $u_1 = v_\mu$ for all $\mu\inint{1}{n-d}$.
On the other hand, if $i\in I$ then $p_i > 0$ and $e_i\transp e_{j_\mu} = 0$ for all $\mu\inint{1}{n-d}$.
Hence $u_1 + 2u_2 p_i = 0\in\R$ for all $i\in I$.
In the first paragraph of the uniqueness proof we have shown there are two distinct indices $i,j\in I$ with $p_i\neq p_j$.
Since $u_1 + 2u_2 p_i = 0 = u_1 + 2u_2 p_j$ and thus $0 = 2u_2(p_i - p_j)$ we can conclude that $u_2 = 0$, which implies $u_1 = 0$.
Then $v_1 = \dots = v_{n-d} = 0$ as well, which shows that $p$ is regular.

The Lagrangian is $\lag\colon\R^n\times\R\times\R\times\R^n\to\R$, $(s,\ \alpha,\ \beta,\ \gamma)\mapsto f(s) + \alpha h_1(s) + \beta h_2(s) + \sum_{i=1}^n \gamma_i g_i(s)$, and its derivative with respect to its first argument $s$ is given by $\lag'(s,\ \alpha,\ \beta,\ \gamma) := \nicefrac{\partial}{\partial s}\ \lag(s,\ \alpha,\ \beta,\ \gamma) = 2(s - x) + \alpha\cdot e + 2\beta\cdot s - \gamma\in\R^n$.
Now, Proposition~3.3.1 from \citet{Bertsekas1999} guarantees the existence of Lagrange multipliers $\tilde{\alpha},\tilde{\beta}\in \R$ and $\tilde{\gamma}\in\R^n$ with $\lag'(p,\ \tilde{\alpha},\ \tilde{\beta},\ \tilde{\gamma}) = 0$, $\tilde{\gamma}_i \geq 0$ for all $i\inint{1}{n}$ and $\tilde{\gamma}_i = 0$ for $i\in I$.
Assume $\tilde{\beta} = -1$, then $2x = \tilde{\alpha}\cdot e - \tilde{\gamma}$ since the derivative of $\lag$ must vanish.
Hence $x_i = \nicefrac{\tilde{\alpha}}{2}$ for all $i\in I$, and therefore $\set{p_i | i\in I}$ is a singleton with Remark~10 from \citet{Thom2013} as $p$ was assumed unique and $T$ is permutation-invariant. 
We have seen earlier that this is absurd, so $\tilde{\beta} \neq -1$ must hold.

Write $\alpha^* := \nicefrac{\tilde{\alpha}}{2}$, $\beta^* := \nicefrac{1}{(\tilde{\beta} + 1)}$ and $\gamma^* := \nicefrac{\tilde{\gamma}}{2}$ for notational convenience.
We then obtain $p = \beta^*(x - \alpha^*\cdot e + \gamma^*)$ because $\lag'$ vanishes.
Then $h_1(p) = 0$ implies that $\lambda_1 = \sum_{i\in I}p_i = \sum_{i\in I}\beta^*(x_i - \alpha^*) = \beta^*(\snorm{V_I x}_1 - d\alpha^*)$, and with $h_2(p) = 0$ follows that $\lambda_2^2 = \sum_{i\in I}p_i^2 = (\beta^*)^2\cdot (\snorm{V_I x}_2^2 - 2\alpha^*\snorm{V_I x}_1 + d\cdot (\alpha^*)^2)$.
By taking the ratio $\nicefrac{\lambda_1^2}{\lambda_2^2}$ and after elementary algebraic transformations we arrive at the quadratic equation
$a\cdot (\alpha^*)^2 + b\cdot \alpha^* + c = 0$, where $a := d\cdot\left(d - \nicefrac{\lambda_1^2}{\lambda_2^2}\right)$, $b := 2\snorm{V_I x}_1\cdot\left(\nicefrac{\lambda_1^2}{\lambda_2^2} - d\right)$ and $c := \snorm{V_I x}_1^2 - \snorm{V_I x}_2^2\cdot \nicefrac{\lambda_1^2}{\lambda_2^2}$ are reals.
The discriminant is $\Delta := b^2 - 4ac = 4\cdot \nicefrac{\lambda_1^2}{\lambda_2^2}\cdot\left(d - \nicefrac{\lambda_1^2}{\lambda_2^2}\right)\cdot\big(d\snorm{V_I x}_2^2 - \snorm{V_I x}_1^2\big)$.
Since $V_I x\in\R^d$ we have that $d\snorm{V_I x}_2^2 - \snorm{V_I x}_1^2 \geq 0$.
Moreover, the number $d$ is not arbitrary.
As $p$ exists by the {Weierstra\ss} extreme value theorem with $\snorm{p}_0 = d$, $\snorm{p}_1 = \lambda_1$ and $\snorm{p}_2 = \lambda_2$, we find that $\lambda_1 \leq \sqrt{d}\lambda_2$ and hence $\Delta \geq 0$, so there must be a real solution to the equation above.
Solving the equation shows that
\begin{displaymath}
  \alpha^*\in\Set{\frac{1}{d}\left(\snorm{V_I x}_1 \pm \lambda_1\sqrt{\frac{d\snorm{V_I x}_2^2 - \snorm{V_I x}_1^2}{d\lambda_2^2 - \lambda_1^2}}\right)}\text{,}
\end{displaymath}
hence from $h_1(p) = 0$ we obtain
\begin{displaymath}
  \beta^*\in\Set{\mp \sqrt{d\lambda_2^2 - \lambda_1^2} \Big/ \sqrt{d\snorm{V_I x}_2^2 - \snorm{V_I x}_1^2}}\text{.}
\end{displaymath}
Suppose $\alpha^*$ is the number that arises from the "$+$" before the square root, then $\beta^*$ is the number with the "$-$" sign, thus $\beta^* < 0$.
We have seen earlier that there are two distinct indices $i,j\in I$ with $p_i\neq p_j$.
We can assume $p_i > p_j$, then $0 < p_i - p_j = \beta^*(x_i - x_j)$ which implies that $x_i < x_j$.
This is not possible as it violates the order-preservation property of projections onto permutation-invariant sets \citep[Lemma~9(a) from][]{Thom2013}.
Thus our choice of $\alpha^*$ was not correct in the first place, and $\alpha^*$ must be as stated in the claim.

It remains to be shown that $p$ is the result of a soft-shrinkage function.
If $i\in I$, then $0 < p_i = \beta^*(x_i - \alpha^*)$, and $\beta^* > 0$ shows $x_i > \alpha^*$ such that $p_i = \beta^*\cdot\max\left(x_i - \alpha^*,\ 0\right)$.
When $i\not\in I$, we have $0 = p_i = \beta^*(x_i - \alpha^* + \gamma_i^*)$ where $\gamma_i^* \geq 0$ and still $\beta^* > 0$, thus $x_i \leq \alpha^*$ and $p_i = \beta^*\cdot\max\left(x_i - \alpha^*,\ 0\right)$ holds.
Therefore, the representation holds for all entries.
\qed
\end{proof}

Finding the set $I$ containing the indices of the positive coordinates of the projection result is the key for algorithmic computation of the projection.
Based on the constructive proof this could, for example, be achieved by carrying out alternating projections whose run-time complexity is between quasi-linear and quadratic in the problem dimensionality $n$ and whose space complexity is linear.
An alternative is the method proposed by \citet{Potluru2013}, where the input vector is sorted and then each possible candidate for $I$ is checked.
Due to the sorting, $I$ must be of the form $I = \discint{1}{d}$, where now only $d$ is unknown \citep[see also the proof of Theorem~3 from][]{Thom2013}.
Here, the run-time complexity is quasi-linear and the space complexity is linear in the problem dimensionality since also the sorting permutation has to be stored.
When $n$ gets large, algorithms with a smaller computational complexity are mandatory.

\subsection*{Properties of the Auxiliary Function}
We have already informally introduced the auxiliary function in Sect. \ref{sect:auxfunction}.
Here is a satisfactory definition:
\begin{definition}
Let $x\in\R_{\geq 0}^n\setminus T$ be a point such that $\proj_T(x)$ is unique and $\sigma(x) < \sigma^*$.
Let $x_{\max}$ denote the maximum entry of $x$, then
\begin{displaymath}
  \Psi\colon\intervalco{0}{x_{\max}}\to\R\text{,}\quad \alpha\mapsto\frac{\norm{\max\left(x - \alpha\cdot e,\ 0\right)}_1}{\norm{\max\left(x - \alpha\cdot e,\ 0\right)}_2} - \frac{\lambda_1}{\lambda_2}\text{,}
\end{displaymath}
is called \emph{auxiliary function} for the projection onto $T$.
\end{definition}

We call $\Psi$ \emph{well-defined} if all requirements from the definition are met.
Note that the situation where $\sigma(x) \geq \sigma^*$ is trivial, because in this sparseness-decreasing setup we have that all coordinates of the projection must be positive.
Hence $I = \discint{1}{n}$ in Theorem~\ref{thm:representation}, and $\alpha^*$ can be computed with the there provided formula.

We need more notation to describe the properties of $\Psi$.
Let $x\in\R^n$ be a point.
We write $\X := \set{x_i | i\inint{1}{n}\!}$ for the set that contains the entries of $x$.
Let $x_{\min} := \min\X$ be short for the smallest entry of $x$, and $x_{\max} := \max\X$ and $x_{\scndmax} := \max\X\setminus \set{x_{\max}}$ denote the two largest entries of $x$.
Further, $q\colon\R\to\R^n$, $\alpha\mapsto \max\left(x - \alpha\cdot e,\ 0\right)$, denotes the curve that evolves from application of the soft-shrinkage function to $x$.
The Manhattan norm and Euclidean norm of points from $q$ is given by $\ell_1\colon\R\to\R$, $\alpha\mapsto\norm{q(\alpha)}_1$, and $\ell_2\colon\R\to\R$, $\alpha\mapsto\norm{q(\alpha)}_2$, respectively.
Therefore, $\Psi = \nicefrac{\ell_1}{\ell_2} - \nicefrac{\lambda_1}{\lambda_2}$ and $\tilde{\Psi}$ from Sect.~\ref{sect:projfuncexperiments} can be written as $\nicefrac{\ell_1^2}{\ell_2^2} - \nicefrac{\lambda_1^2}{\lambda_2^2}$, such that its derivative can be expressed in terms of $\Psi'$ using the chain rule.

The next result provides statements on the auxiliary function's analytical nature and links its zero with the projection onto $T$:
\begin{lemma}
\label{lem:auxfunction}
Let $x\in\R_{\geq 0}^n\setminus T$ be given such that the auxiliary function $\Psi$ is well-defined.
Then the following holds:
\begin{enumerate}
  \item \label{lem:auxfunction_a} $\Psi$ is continuous on $\intervalco{0}{x_{\max}}$.
  \item \label{lem:auxfunction_b} $\Psi$ is differentiable on $\intervalco{0}{x_{\max}}\setminus\X$.
  \item \label{lem:auxfunction_c} $\Psi$ is strictly decreasing on $\intervalco{0}{x_{\scndmax}}$, and on $\intervalco{x_{\scndmax}}{x_{\max}}$ it is constant.
  \item \label{lem:auxfunction_d} There is exactly one $\alpha^*\in\intervaloo{0}{x_{\scndmax}}$ with $\Psi(\alpha^*) = 0$. It is then $\proj_T(x) = \left(\nicefrac{\lambda_2}{\ell_2(\alpha^*)}\right)\cdot q(\alpha^*)$.
\end{enumerate}
\end{lemma}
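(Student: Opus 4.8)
The plan is to treat the four assertions in order, deriving \ref{lem:auxfunction_c} and \ref{lem:auxfunction_d} from an explicit computation of the derivative of $\Psi$ on the intervals where the soft-shrinkage keeps the same set of coordinates alive, together with the Representation Theorem for the final identification of the zero.

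For \ref{lem:auxfunction_a} and \ref{lem:auxfunction_b} I would note that $\ell_1$ and $\ell_2$ are continuous on all of $\R$, being compositions of the continuous curve $q(\alpha)=\max(x-\alpha\cdot e,\ 0)$ with the Manhattan and Euclidean norms. On $\intervalco{0}{x_{\max}}$ the coordinate attaining $x_{\max}$ satisfies $x_{\max}-\alpha>0$, so $q(\alpha)\neq 0$ and hence $\ell_2(\alpha)>0$; therefore $\Psi=\ell_1/\ell_2-\lambda_1/\lambda_2$ is continuous, which is \ref{lem:auxfunction_a}. Differentiability off $\X$ follows because for $\alpha\notin\X$ the active set $A(\alpha):=\set{i | x_i>\alpha}$ is locally constant, so there $\ell_1(\alpha)=\sum_{i\in A}(x_i-\alpha)$ is affine and $\ell_2(\alpha)^2=\sum_{i\in A}(x_i-\alpha)^2$ is a positive polynomial, making $\Psi$ a quotient of smooth functions with nonvanishing denominator.

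The core is \ref{lem:auxfunction_c}. On an interval on which $A$ is constant with $k:=\abs{A}$, the relations $\ell_1'=-k$ and $(\ell_2^2)'=-2\ell_1$ give, after the quotient rule, $\Psi'(\alpha)=\bigl(\ell_1(\alpha)^2-k\,\ell_2(\alpha)^2\bigr)/\ell_2(\alpha)^3$. By the Cauchy--Schwarz inequality $\ell_1^2=\bigl(\sum_{i\in A}(x_i-\alpha)\bigr)^2\leq k\sum_{i\in A}(x_i-\alpha)^2=k\,\ell_2^2$, with equality exactly when all surviving entries $x_i-\alpha$ coincide. For $\alpha\in\intervalco{0}{x_{\scndmax}}$ both $x_{\max}$ and $x_{\scndmax}$ are active and distinct, so the inequality is strict and $\Psi'<0$ there; since $\Psi$ is continuous by \ref{lem:auxfunction_a} and its only non-differentiability points are the finitely many elements of $\X$, a negative derivative off $\X$ upgrades to strict monotonicity on the whole interval. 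For $\alpha\in\intervalco{x_{\scndmax}}{x_{\max}}$ only the $m$ coordinates equal to $x_{\max}$ survive, whence $\ell_1=m(x_{\max}-\alpha)$ and $\ell_2=\sqrt{m}(x_{\max}-\alpha)$, so $\ell_1/\ell_2=\sqrt{m}$ and $\Psi$ is constant. I expect this step to be the main obstacle: getting the sign of $\Psi'$ right requires the Cauchy--Schwarz computation, and passing from ``negative off $\X$'' to genuine strict monotonicity needs the continuity of \ref{lem:auxfunction_a} to bridge the kink points.

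Finally, for \ref{lem:auxfunction_d} I would invoke the Representation Theorem to obtain the unique $\alpha^*\in\R$ with $p=\proj_T(x)=\beta^*\cdot q(\alpha^*)$ and $\beta^*=\lambda_2/\ell_2(\alpha^*)>0$, then locate $\alpha^*$ inside $\intervaloo{0}{x_{\scndmax}}$. Since $\norm{p}_2=\lambda_2>0$ we must have $\alpha^*<x_{\max}$ (otherwise $q(\alpha^*)=0$), and the uniqueness argument in the proof of the Representation Theorem supplies two indices $i,j\in I$ with $p_i\neq p_j$, i.e.\ two distinct surviving values of $q(\alpha^*)$, which forces the second-largest entry to be active and hence $\alpha^*<x_{\scndmax}$. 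Because $p\in T$ and $\beta^*>0$, scale invariance gives $\ell_1(\alpha^*)/\ell_2(\alpha^*)=\norm{p}_1/\norm{p}_2=\lambda_1/\lambda_2$, so $\Psi(\alpha^*)=0$. Moreover $\Psi(0)=\norm{x}_1/\norm{x}_2-\lambda_1/\lambda_2>0$ as $\sigma(x)<\sigma^*$, and the same derivative estimate extended to $\alpha<0$ (all coordinates active and not all equal, since $x_{\scndmax}$ exists) shows $\ell_1/\ell_2>\norm{x}_1/\norm{x}_2>\lambda_1/\lambda_2$ for $\alpha<0$, ruling out a negative zero; hence $\alpha^*>0$. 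Thus $\alpha^*\in\intervaloo{0}{x_{\scndmax}}$ is a zero, and the strict monotonicity on $\intervalco{0}{x_{\scndmax}}$ from \ref{lem:auxfunction_c} makes it the only one there, the constant tail being negative because $\Psi(x_{\scndmax})<\Psi(\alpha^*)=0$. The claimed identity $\proj_T(x)=\bigl(\lambda_2/\ell_2(\alpha^*)\bigr)\cdot q(\alpha^*)$ is then the Representation Theorem rewritten with $\beta^*=\lambda_2/\ell_2(\alpha^*)$.
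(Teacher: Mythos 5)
Your proof is correct, and for parts (a)--(c) it is essentially the paper's own argument: continuity and differentiability of $\Psi$ from the soft-shrinkage composition, the explicit derivative $\Psi'(\alpha) = \bigl(\ell_1(\alpha)^2 - d\,\ell_2(\alpha)^2\bigr)/\ell_2(\alpha)^3$ on intervals where the active set is constant, strictness of the Cauchy--Schwarz bound because two distinct entries survive for $\alpha < x_{\scndmax}$, and continuity to bridge the kinks at the points of $\X$. In part (d), however, you take a genuinely different route. The paper first proves existence of the zero by the intermediate value theorem --- $\Psi(0) > 0$ from $\sigma(x) < \sigma^*$, and the constant tail is negative because there $d = 1$, so $\Psi \equiv 1 - \nicefrac{\lambda_1}{\lambda_2} < 0$ --- and only afterwards invokes the representation theorem to identify the projection's offset with that zero. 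You argue in the opposite direction: the offset $\alpha^*$ supplied by the representation theorem is itself shown to lie in $\intervaloo{0}{x_{\scndmax}}$, using the two-distinct-positive-coordinates fact from that theorem's uniqueness proof to force $\alpha^* < x_{\scndmax}$, and the monotonicity estimate extended to $\alpha < 0$ to exclude a negative offset; tail negativity then falls out of monotonicity via $\Psi(x_{\scndmax}) < \Psi(\alpha^*) = 0$ rather than from $\lambda_1 > \lambda_2$. Your variant buys robustness: on $\intervalco{x_{\scndmax}}{x_{\max}}$ you allow the maximal entry to have multiplicity $m$, where $\Psi \equiv \sqrt{m} - \nicefrac{\lambda_1}{\lambda_2}$ and $\lambda_1 > \lambda_2$ alone no longer determines the sign; the paper's tail argument tacitly assumes a simple maximum ($d = 1$), a case your derivation covers since repeated maxima are compatible with uniqueness of the projection. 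The price is a heavier reliance on the representation theorem --- including an observation internal to its uniqueness proof --- where the paper needs only its statement, plus a short extension of $\Psi$ and its derivative estimate to $\alpha < 0$, outside the lemma's stated domain; that extension is harmless since $q(\alpha) \neq 0$ there, but it deserves to be stated explicitly.
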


\begin{proof}
In addition to the original claim, we also give explicit expressions for the derivative of $\Psi$ and higher derivatives thereof in part~\ref{lem:auxfunction_c}.
These are necessary to show that $\Psi$ is strictly decreasing and constant, respectively, on the claimed intervals and for the explicit implementation of Algorithm~\ref{alg:auxiliary}.

\ref{lem:auxfunction_a}
The function $q$ is continuous because so is the soft-shrinkage function.
Hence $\ell_1$, $\ell_2$ and $\Psi$ are continuous as compositions of continuous functions.

\ref{lem:auxfunction_b}
The soft-shrinkage function is differentiable everywhere except at its offset.
Therefore, $\Psi$ is differentiable everywhere except for when its argument coincides with an entry of $x$, that is on $\intervalco{0}{x_{\max}}\setminus\X$.

\ref{lem:auxfunction_c}
We start with deducing the first and second derivative of $\Psi$.
Let $x_j,x_k\in\X\cup\set{0}$, $x_j < x_k$, such that there is no element from $\X$ between them.
We here allow $x_j = 0$ and $x_k = x_{\min}$ when $0\not\in\X$ for completeness.
Then the index set $I := \set{i\inint{1}{n} | x_i > \alpha}$ of non-vanishing coordinates in $q$ is constant for $\alpha\in\intervaloo{x_j}{x_k}$, and the derivative of $\Psi$ can be computed using a closed-form expression.
For this, let $d := \abs{I}$ denote the number of non-vanishing coordinates in $q$ on that interval.
With $\ell_1(\alpha) = \sum_{i\in I}\left(x_i - \alpha\right) = \sum_{i\in I}x_i - d\alpha$ we obtain $\ell_1'(\alpha) = -d$.
Analogously, it is $\nicefrac{\partial}{\partial\alpha}\ \ell_2(\alpha)^2 = \nicefrac{\partial}{\partial\alpha}\ \sum_{i\in I}\left(x_i - \alpha\right)^2 = -2\ell_1(\alpha)$, and the chain rule yields $\ell_2'(\alpha) = \frac{\partial}{\partial\alpha}\sqrt{\ell_2(\alpha)^2} = \nicefrac{-\ell_1(\alpha)}{\ell_2(\alpha)}$.
Application of the quotient rule gives $\Psi'(\alpha) = \left(\nicefrac{\ell_1(\alpha)^2}{\ell_2(\alpha)^2} - d\right) / \ell_2(\alpha)$.
The second derivative is of similar form.
We find $\nicefrac{\partial}{\partial\alpha}\ \ell_1(\alpha)^2 = -2d\ell_1(\alpha)$, and hence $\nicefrac{\partial}{\partial\alpha}\ \left(\nicefrac{\ell_1(\alpha)^2}{\ell_2(\alpha)^2}\right) = 2\left(\nicefrac{\ell_1(\alpha)}{\ell_2(\alpha)^2}\right)\cdot \left(\nicefrac{\ell_1(\alpha)^2}{\ell_2(\alpha)^2} - d\right)$.
We have $\nicefrac{\partial}{\partial\alpha}\ \left(\nicefrac{1}{\ell_2(\alpha)}\right) =  \nicefrac{\ell_1(\alpha)}{\ell_2(\alpha)^3}$ and with the product rule we see that $\Psi''(\alpha) = 3\left(\nicefrac{\ell_1(\alpha)}{\ell_2(\alpha)^3}\right)\cdot\left(\nicefrac{\ell_1(\alpha)^2}{\ell_2(\alpha)^2} - d\right) = 3\Psi'(\alpha)\cdot\nicefrac{\ell_1(\alpha)}{\ell_2(\alpha)^2}$.

First let $\alpha\in\intervaloo{x_{\scndmax}}{x_{\max}}$.
It is then $d = 1$, that is $q$ has exactly one non-vanishing coordinate.
In this situation we find $\ell_1(\alpha) = \ell_2(\alpha)$ and $\Psi'\equiv 0$ on $\intervaloo{x_{\scndmax}}{x_{\max}}$, thus $\Psi$ is constant on $\intervaloo{x_{\scndmax}}{x_{\max}}$ as a consequence of the mean value theorem from real analysis.
Because $\Psi$ is continuous, it is constant even on $\intervalco{x_{\scndmax}}{x_{\max}}$.

Next let $\alpha\in\intervalco{0}{x_{\scndmax}}\setminus\X$, and let $x_j$, $x_k$, $I$ and $d$ as in the first paragraph.
We have $d\geq 2$ since $\alpha < x_{\scndmax}$.
It is furthermore $\ell_1(\alpha) \leq \sqrt{d}\ell_2(\alpha)$ as $d = \snorm{q(\alpha)}_0$.
This inequality is in fact strict, because $q(\alpha)$ has at least two distinct nonzero entries.
Hence $\Psi'$ is negative on the interval $\intervaloo{x_j}{x_k}$, and the mean value theorem guarantees that $\Psi$ is strictly decreasing on this interval.
This property holds for the entire interval $\intervalco{0}{x_{\scndmax}}$ due to the continuity of $\Psi$.

\ref{lem:auxfunction_d}
The requirement $\sigma(x) < \sigma^*$ implies $\nicefrac{\snorm{x}_1}{\snorm{x}_2} > \nicefrac{\lambda_1}{\lambda_2}$ and thus $\Psi(0) > 0$.
Let $\alpha\in\intervaloo{x_{\scndmax}}{x_{\max}}$ be arbitrary, then $\ell_1(\alpha) = \ell_2(\alpha)$ as in~\ref{lem:auxfunction_c}, and hence $\Psi(\alpha) < 0$ since $\lambda_2 < \lambda_1$ must hold.
The existence of $\alpha^*\in\intervalco{0}{x_{\scndmax}}$ with $\Psi(\alpha^*) = 0$ then follows from the intermediate value theorem and~\ref{lem:auxfunction_c}.
Uniqueness of $\alpha^*$ is guaranteed because $\Psi$ is strictly monotone on the relevant interval.

Define $p := \proj_T(x)$, then with Theorem~\ref{thm:representation} there is an $\tilde{\alpha}\in\R$ so that $p = \left(\nicefrac{\lambda_2}{\snorm{\max\left(x - \tilde{\alpha}\cdot e,\ 0\right)}_2}\right)\cdot \max\left(x - \tilde{\alpha}\cdot e,\ 0\right) = \left(\nicefrac{\lambda_2}{\ell_2(\tilde{\alpha})}\right)\cdot q(\tilde{\alpha})$.
Since $p\in T$ we obtain $\Psi(\tilde{\alpha}) = 0$, and the uniqueness of the zero of $\Psi$ implies that $\alpha^* = \tilde{\alpha}$.
\qed
\end{proof}

As described in Sect.~\ref{sect:lintimeconstspacealg}, the exact value of the zero $\alpha^*$ of $\Psi$ can be found by inspecting the neighboring entries in $x$ of a candidate offset $\alpha$.
Let $x_j,x_k\in\X$ be these neighbors with $x_j\leq \alpha < x_k$ such that there is no element from $\X$ between $x_j$ and $x_k$.
When $\Psi$ changes its sign from $x_j$ to $x_k$, we know that its root must be located within this interval.
Further, we then know that all coordinates with a value greater than $x_j$ must survive the sparseness projection, which yields the set $I$ from Theorem~\ref{thm:representation} and thus the explicit representation of the projection.
The next result gathers these ideas and shows that it is easy to verify whether a change of sign in $\Psi$ is on hand.
\begin{lemma}
\label{lem:auxfunctionsignchange}
Let $x\in\R_{\geq 0}^n\setminus T$ be given such that $\Psi$ is well-defined and let $\alpha\in\intervalco{0}{x_{\max}}$ be arbitrary.
If $\alpha < x_{\min}$, let $x_j := 0$ and $x_k := x_{\min}$.
Otherwise, let $x_j := \max\set{x_i | x_i\in\X\text{ and }x_i\leq\alpha}$ be the left neighbor and let $x_k := \min\set{x_i | x_i\in\X\text{ and }x_i > \alpha}$ be the right neighbor of $\alpha$.
Both exist as the sets where the maximum and the minimum is taken are nonempty.
Define $I := \set{i\inint{1}{n} | x_i > \alpha}$ and $d := \abs{I}$.
Then:
\begin{enumerate}
  \item \label{lem:auxfunctionsignchange_a} When $\Psi(x_j)\geq 0$ and $\Psi(x_k) < 0$ then there is exactly one number $\alpha^*\in\intervalco{x_j}{x_k}$ with $\Psi(\alpha^*) = 0$.
  \item \label{lem:auxfunctionsignchange_b} It is $\ell_1(\xi) = \norm{V_I x}_1 - d\xi$ and $\ell_2^2(\xi) = \norm{V_I x}_2^2 - 2\xi\norm{V_I x}_1 + d\xi^2$ for $\xi\in\set{x_j,\ \alpha,\ x_k}$.
  \item \label{lem:auxfunctionsignchange_c} If the inequalities $\lambda_2\ell_1(x_j) \geq \lambda_1\ell_2(x_j)$ and $\lambda_2\ell_1(x_k) < \lambda_1\ell_2(x_k)$ are satisfied and $p := \proj_T(x)$ denotes the projection of $x$ onto $T$, then $I = \set{i\inint{1}{n} | p_i > 0}$ and hence $p$ can be computed exactly with Theorem~\ref{thm:representation}.
\end{enumerate}
\end{lemma}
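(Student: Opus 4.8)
The three parts reinforce one another, so the plan is to prove them in order, leaning on Lemma~\ref{lem:auxfunction} and Theorem~\ref{thm:representation}. Part~\ref{lem:auxfunctionsignchange_a} I would settle from the analytic facts already in hand: $\Psi$ is continuous on $\intervalcc{x_j}{x_k}$ (note that the very reference to $\Psi(x_k)$ forces $x_k < x_{\max}$, so this interval lies in the domain), and $\Psi(x_j) \geq 0 > \Psi(x_k)$, so the intermediate value theorem produces a zero $\alpha^* \in \intervalco{x_j}{x_k}$, reducing to $\alpha^* = x_j$ in the degenerate case $\Psi(x_j) = 0$. Uniqueness is then free: Lemma~\ref{lem:auxfunction}\ref{lem:auxfunction_d} already yields a single global zero on $\intervaloo{0}{x_{\scndmax}}$, so this $\alpha^*$ coincides with it and no second root can sit in $\intervalco{x_j}{x_k}$.

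For part~\ref{lem:auxfunctionsignchange_b} the task is to evaluate $\ell_1$ and $\ell_2$ once the action of the soft-shrinkage on the neighbor interval is understood. The key observation is that the gap structure makes the shrinkage linear on precisely the coordinates indexed by $I$: for $i \in I$ one has $x_i \geq x_k \geq \xi$ for every $\xi \in \set{x_j,\ \alpha,\ x_k}$, whence $\max(x_i - \xi,\ 0) = x_i - \xi$; for $i \notin I$ one has $x_i \leq x_j \leq \xi$, so that coordinate contributes nothing. Summing over $I$ then gives $\ell_1(\xi) = \norm{V_I x}_1 - d\xi$ and $\ell_2^2(\xi) = \norm{V_I x}_2^2 - 2\xi\norm{V_I x}_1 + d\xi^2$. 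I would emphasise that the value $\xi = x_k$ is still covered even though the genuine support of $q(x_k)$ may be a proper subset of $I$, since the terms with $x_i = x_k$ vanish and leave the closed form intact.

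Part~\ref{lem:auxfunctionsignchange_c} is the substance of the lemma. First I would convert the two hypothesised inequalities into sign conditions: dividing by $\ell_2$ turns $\lambda_2\ell_1(x_j) \geq \lambda_1\ell_2(x_j)$ into $\Psi(x_j) \geq 0$ and $\lambda_2\ell_1(x_k) < \lambda_1\ell_2(x_k)$ into $\Psi(x_k) < 0$, where one must first note that the strict inequality rules out $\ell_2(x_k) = 0$ and so keeps the division valid. Part~\ref{lem:auxfunctionsignchange_a} then hands me a unique $\alpha^* \in \intervalco{x_j}{x_k}$ with $\Psi(\alpha^*) = 0$, and Lemma~\ref{lem:auxfunction}\ref{lem:auxfunction_d} identifies this $\alpha^*$ as the offset for which $p = \proj_T(x) = (\nicefrac{\lambda_2}{\ell_2(\alpha^*)})\cdot q(\alpha^*)$. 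Hence the positive coordinates of $p$ are exactly the support $\set{i | x_i > \alpha^*}$ of $q(\alpha^*)$. It remains to match this with $I$: because $\alpha^*$ lies in $\intervalco{x_j}{x_k}$ and $\X$ has no value strictly between $x_j$ and $x_k$, both $\set{i | x_i > \alpha^*}$ and $\set{i | x_i > \alpha}$ collapse to $\set{i | x_i \geq x_k}$, so they agree. With $I = \set{i | p_i > 0}$ in place, Theorem~\ref{thm:representation} returns $\alpha^*$ and thus $p$ in closed form.

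The main obstacle is precisely this support identification. The closed forms of part~\ref{lem:auxfunctionsignchange_b} are tied to the particular index set $I$, so the argument must guarantee that the genuine root $\alpha^*$ lands in the same gap $\intervalco{x_j}{x_k}$ that defines $I$, and not in a neighbouring interval where the active set, and therefore the formula, would change. The localisation $\alpha^* \in \intervalco{x_j}{x_k}$ from part~\ref{lem:auxfunctionsignchange_a} together with the gap-freeness of $\X$ between $x_j$ and $x_k$ is exactly what rules this out.
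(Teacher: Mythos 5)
Your proposal is correct and takes essentially the same route as the paper's proof: part~(a) is obtained from Lemma~\ref{lem:auxfunction}, part~(b) by evaluating the sums over $I$ with the boundary terms $x_i = x_j$ or $x_i = x_k$ vanishing (the paper phrases this via the decompositions $I = J\setminus\tilde{J}$ and $K = I\setminus\tilde{K}$), and part~(c) by converting the norm inequalities into sign conditions on $\Psi$, localizing the unique zero $\alpha^*$ in $\intervalco{x_j}{x_k}$, and using that $\X$ has no entries strictly between $x_j$ and $x_k$ to identify the support of the projection with $I$. Your extra observations---that referencing $\Psi(x_k)$ forces $x_k < x_{\max}$ and that the strict inequality rules out $\ell_2(x_k) = 0$---are small refinements the paper leaves implicit.
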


\begin{proof}
The claim in~\ref{lem:auxfunctionsignchange_a} is obvious with Lemma~\ref{lem:auxfunction}.

\ref{lem:auxfunctionsignchange_b}
We find that $\ell_1(\alpha) = \sum_{i\in I}(x_i - \alpha) = \snorm{V_I x}_1 - d\alpha$ and $\ell_2(\alpha)^2 = \sum_{i\in I}(x_i - \alpha)^2 = \snorm{V_I x}_2^2 - 2\alpha\snorm{V_I x}_1 + d\alpha^2$.

We have $K = I \setminus \tilde{K}$ with $K := \set{i\inint{1}{n} | x_i > x_k}$ and $\tilde{K} := \set{i\inint{1}{n} | x_i = x_k}$.
One yields that $\ell_1(x_k) = \sum_{i\in K}(x_i - x_k) = \sum_{i\in I}(x_i - x_k) - \sum_{i\in\tilde{K}}(x_i - x_k) = \snorm{V_I x}_1 - d x_k$.
The claim for $\ell_2(x_k)^2$ follows analogously.

Likewise $I = J\setminus\tilde{J}$ with $J := \set{i\inint{1}{n} | x_i > x_j}$ and $\tilde{J} := \set{i\inint{1}{n} | x_i = x_j}$,
and hence follows $\ell_1(x_j) = \sum_{i\in J}(x_i - x_j) = \sum_{i\in I}(x_i - x_j) + \sum_{i\in\tilde{J}}(x_i - x_j) = \snorm{V_I x}_1 - d x_j$.
The value of $\ell_2(x_j)^2$ can be computed in the same manner.

\ref{lem:auxfunctionsignchange_c}
The condition in the claim is equivalent to the case of $\Psi(x_j) \geq 0$ and $\Psi(x_k) < 0$, hence with~\ref{lem:auxfunctionsignchange_a} there is a number $\alpha^*\in\intervalco{x_j}{x_k}$ with $\Psi(\alpha^*) = 0$.
Note that $\alpha\neq\alpha^*$ in general.
Write $p := \proj_T(x)$ and let $J := \set{i\inint{1}{n} | p_i > 0}$.
With Theorem~\ref{thm:representation} follows that $i\in J$ if and only if $x_i > \alpha^*$.
But this is equivalent to $x_i > x_j$, which in turn is equivalent to $x_i > \alpha$, therefore $I = J$ must hold.
Thus we already had the correct set of non-vanishing coordinates of the projection in the first place, and $\alpha^*$ and $\beta^*$ can be computed exactly using the formula from the claim of Theorem~\ref{thm:representation}, which yields the projection $p$.
\qed
\end{proof}

\subsection*{Proof of Correctness of Projection Algorithm}
In Sect.~\ref{sect:lintimeconstspacealg}, we informally described our proposed algorithm for carrying out sparseness-enforcing projections, and provided a simplified flowchart in Fig.~\ref{fig:projfunc-flowchart}.
After the previous theoretical considerations, we now propose and prove the correctness of a formalized algorithm for the projection problem.
Here, the overall method is split into an algorithm that evaluates the auxiliary function $\Psi$ and, based on its derivative, returns additional information required for finding its zero (Algorithm~\ref{alg:auxiliary}).
The other part, Algorithm~\ref{alg:projfunc}, implements the root-finding procedure and carries out the necessary computations to yield the result of the projection.
It furthermore returns information that will be required for computation of the projection's gradient, which we will discuss below.
\begin{theorem}
Let $x\in\R_{\geq 0}^n$ and $p := \proj_T(x)$ be unique.
Then Algorithm~\ref{alg:projfunc} computes $p$ in a number of operations linear in the problem dimensionality $n$ and with only constant additional space.
\end{theorem}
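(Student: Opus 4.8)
The plan is to split the argument into a correctness part and a complexity part, letting correctness rest entirely on Theorem~\ref{thm:representation} and Lemma~\ref{lem:auxfunctionsignchange}, and letting the resource bound rest on a per-iteration cost multiplied by an iteration count that is independent of $n$. First I would dispatch the degenerate cases. If $x\in T$ there is nothing to do, and if $\sigma(x)\geq\sigma^*$ the projection is sparseness-decreasing, so by the remark following the definition of $\Psi$ the support of $p$ is all of $\discint{1}{n}$; then $I:=\discint{1}{n}$ and $\alpha^*$ are obtained directly from the closed-form expression in Theorem~\ref{thm:representation}, and $p=\beta^*\cdot\max(x-\alpha^*\cdot e,\ 0)$ is assembled in a single linear scan. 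In the generic case $\sigma(x)<\sigma^*$ the auxiliary function $\Psi$ is well-defined, and Lemma~\ref{lem:auxfunction}\ref{lem:auxfunction_d} tells us that its unique zero $\alpha^*\in\intervaloo{0}{x_{\scndmax}}$ already determines $p=(\nicefrac{\lambda_2}{\ell_2(\alpha^*)})\cdot q(\alpha^*)$, so the whole task reduces to locating $\alpha^*$ finely enough to read off its neighbouring entries in $x$.

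The heart of the correctness proof is a loop invariant for the bisection in Algorithm~\ref{alg:projfunc}. I would show that the algorithm maintains a bracket $\intervalco{\lo}{\up}\subseteq\intervalco{0}{x_{\scndmax}}$ with $\Psi(\lo)\geq 0$ and $\Psi(\up)<0$. This holds initially because $\Psi(0)>0$ (from $\sigma(x)<\sigma^*$) and $\Psi$ is strictly negative on $\intervalco{x_{\scndmax}}{x_{\max}}$ by Lemma~\ref{lem:auxfunction}\ref{lem:auxfunction_c}, and it is preserved by each step since the sign of $\Psi$ at the midpoint decides which half to retain. At each iteration the algorithm locates the neighbours $x_j\leq\alpha<x_k$ of the current midpoint among the entries $\X$ and checks the sign-change criterion of Lemma~\ref{lem:auxfunctionsignchange}. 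As soon as $\Psi(x_j)\geq 0$ and $\Psi(x_k)<0$ hold for two consecutive entries, part~\ref{lem:auxfunctionsignchange_c} guarantees that $I:=\set{i\inint{1}{n} | x_i>\alpha}$ is exactly the support $\set{i | p_i>0}$; then $\alpha^*$ and $\beta^*$ follow from the explicit formulas, and the returned vector equals $p$ by Theorem~\ref{thm:representation}. The checks themselves are performed through the sign of $\lambda_2\ell_1-\lambda_1\ell_2$ to avoid a division, using the accumulated sums of Lemma~\ref{lem:auxfunctionsignchange}\ref{lem:auxfunctionsignchange_b}.

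For the complexity claim I would treat time and space separately. Every primitive pass — determining $x_{\scndmax}$, finding the neighbours $x_j,x_k$ of a candidate offset, and evaluating $\Psi$ at an entry via $\norm{V_I x}_1$ and $\norm{V_I x}_2^2$ as in Lemma~\ref{lem:auxfunctionsignchange}\ref{lem:auxfunctionsignchange_b}, together with the final in-place soft-shrinkage and rescaling — is one linear sweep through the $n$ coordinates and needs only a fixed number of scalar accumulators, hence $O(n)$ time and $O(1)$ extra space; crucially, no sort and no stored permutation are required. The genuine obstacle, and the step I expect to be hardest to state cleanly, is bounding the number of bisection iterations independently of $n$: the initial bracket has length at most $x_{\scndmax}$, and bisection is forced to terminate once the bracket is shorter than the minimal gap $\delta:=\min\Set{x_k-x_j | x_j,x_k\in\X\text{ and }x_j<x_k}$ between distinct entries, so at most $\lceil\log_2(\nicefrac{x_{\scndmax}}{\delta})\rceil$ iterations occur \citep{Liu2009a}. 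In exact arithmetic $\delta$ could shrink as $n$ grows, so here the argument must lean on finite machine precision \citep{Goldberg1991}, under which $x_{\scndmax}$ is bounded above and $\delta$ is bounded below by the representable resolution, making the iteration count a constant. Multiplying this constant by the $O(n)$ per-iteration cost yields the asserted linear-time, constant-space bound.
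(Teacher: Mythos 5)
Your proposal is correct and follows essentially the same route as the paper's own proof: the sparseness-decreasing case is dispatched via Theorem~\ref{thm:representation} with $I = \discint{1}{n}$, bisection correctness rests on Lemma~\ref{lem:auxfunction}\ref{lem:auxfunction_d} and the sign-change/termination criterion of Lemma~\ref{lem:auxfunctionsignchange}\ref{lem:auxfunctionsignchange_c}, and the complexity bound comes from linear-time constant-space evaluation of $\Psi$ multiplied by an iteration count of at most $\lceil\log_2(\nicefrac{x_{\scndmax}}{\delta})\rceil$, which is constant only by appeal to finite machine precision --- exactly the paper's argument. The only cosmetic difference is that you state the bracket invariant explicitly, whereas the paper instead verifies line by line that Algorithm~\ref{alg:auxiliary} computes $\ell_1$, $\ell_2^2$, $d$ and the neighbors $x_j, x_k$ in a single pass; both fill the same role.
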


\begin{algorithm}[t]
  \caption{Linear time and constant space evaluation of the auxiliary function $\Psi$.}
  \label{alg:auxiliary}
  \SetAlgoLined
  \KwIn{Point to be projected $x\in\R_{\geq 0}^n$, target norms $\lambda_1, \lambda_2\in\R$, position $\alpha\in\intervalco{0}{x_{\max}}$ where $\Psi$ should be evaluated.}
  \KwOut{Values $\Psi(\alpha),\,\Psi'(\alpha),\,\Psi''(\alpha),\,\tilde{\Psi}(\alpha),\,\tilde{\Psi}'(\alpha)\in\R$, $\finished\in\booleanvalues$ indicating whether the correct interval has been found, numbers $\ell_1, \ell_2^2\in\R$ and $d\in\N$ needed to exactly compute $\alpha^*$ with $\Psi(\alpha^*) = 0$.}
  \BlankLine

  \tcp{Initialize.}
  $\ell_1 := 0$;\enspace $\ell_2^2 := 0$;\enspace $d := 0$\nllabel{algl:auxiliary-initialization}\;
  $x_j := 0$;\enspace $\Delta x_j := -\alpha$;\enspace $x_k := \infty$;\enspace $\Delta x_k := \infty$\;
  \BlankLine

  \tcp{Scan through $x$.}
  \For{$i := 1$ \KwTo $n$}
  {%
    $t := x_i - \alpha$\;
    \eIf{$t > 0$}
    {%
      $\ell_1 := \ell_1 + x_i$;\enspace $\ell_2^2 := \ell_2^2 + x_i^2$;\enspace $d := d + 1$\;
      \lIf{$t < \Delta x_k$}
      {%
        $x_k := x_i$;\enspace $\Delta x_k := t$%
      }
    }
    {%
      \lIf{$t > \Delta x_j$}
      {%
        $x_j := x_i$;\enspace $\Delta x_j := t$%
      }
    }
  }\nllabel{algl:auxiliary-scanfinished}
  \BlankLine

  \tcp{Compute $\Psi(\alpha)$, $\Psi'(\alpha)$ and $\Psi''(\alpha)$.}
  $\ell_1(\alpha) := \ell_1 - d\alpha$;\enspace $\ell_2(\alpha)^2 := \ell_2^2 - 2\alpha\ell_1 + d\alpha^2$\nllabel{algl:auxiliary-compute-psi}\;
  $\Psi(\alpha) := \nicefrac{\ell_1(\alpha)}{\sqrt{\ell_2(\alpha)^2}} - \nicefrac{\lambda_1}{\lambda_2}$\;
  $\Psi'(\alpha) := \left(\nicefrac{\ell_1(\alpha)^2}{\ell_2(\alpha)^2} - d\right) / \sqrt{\ell_2(\alpha)^2}$\;
  $\Psi''(\alpha) := 3\Psi'(\alpha)\cdot\nicefrac{\ell_1(\alpha)}{\ell_2(\alpha)^2}$\;
  \BlankLine

  \tcp{Compute $\tilde{\Psi}(\alpha)$ and $\tilde{\Psi}'(\alpha)$.}
  $\tilde{\Psi}(\alpha) := \nicefrac{\ell_1(\alpha)^2}{\ell_2(\alpha)^2} - \nicefrac{\lambda_1^2}{\lambda_2^2}$\;
  $\tilde{\Psi}'(\alpha) := \nicefrac{2\ell_1(\alpha)}{\sqrt{\ell_2(\alpha)^2}}\cdot\Psi'(\alpha)$\nllabel{algl:auxiliary-compute-psi-end}\;
  \BlankLine

  \tcp{Check for sign change from $\Psi(x_j)$ to $\Psi(x_k)$.}
  $\finished := \lambda_2(\ell_1 - dx_j) \geq \lambda_1\sqrt{\ell_2^2 - 2x_j\ell_1 + dx_j^2}$ {\bf and} $\phantom{\finished :=}\ \lambda_2\left(\ell_1 - dx_k\right) < \lambda_1\sqrt{\ell_2^2 - 2x_k\ell_1 + dx_k^2}$\nllabel{algl:auxiliary-sign-change-check}\;%
  \KwRet $\left(\Psi(\alpha),\Psi'(\alpha),\Psi''(\alpha),\tilde{\Psi}(\alpha),\tilde{\Psi}'(\alpha),\finished,\ell_1,\ell_2^2,d\right)$\;
\end{algorithm}

\begin{algorithm}[t]
  \caption{Linear time and constant space algorithm for projections onto $T$. The auxiliary function $\Psi$ is evaluated by calls to "$\aux$", which are carried out by Algorithm~\ref{alg:auxiliary}. This algorithm operates in-place, the input vector is overwritten by the output vector upon completion.}
  \label{alg:projfunc}
  \SetAlgoLined
  \SetKw{KwGoTo}{go to}
  \KwIn{Point to be projected $x\in\R_{\geq 0}^n$, target norms $\lambda_1, \lambda_2\in\R$, $\solver\in\set{\bisection,\ \newton,\ \newtonsqr,\ \halley}$.}
  \KwOut{Projection $\proj_T(x)\in T$ as first element replacing the input vector, numbers $\ell_1, \ell_2^2\in\R$ and $d\in\N$ needed to compute the projection's gradient with Algorithm~\ref{alg:projgrad}.}
  \BlankLine

  \tcp{Skip root-finding if decreasing sparseness.}
  $\left(\Psi(\alpha),\Psi'(\alpha),\Psi''(\alpha),\tilde{\Psi}(\alpha),\tilde{\Psi}'(\alpha),\finished,\ell_1,\ell_2^2,d\right) := \aux(x,\,\lambda_1,\,\lambda_2,\,0)$\;
  \lIf{$\Psi(\alpha) \leq 0$}{\KwGoTo Line~\ref{algl:projfunc-exact-alpha}}\nllabel{algl:projfunc-skiprootfind}
  \BlankLine

  \tcp{Sparseness should be increased.}
  $\lo := 0$;\enspace $\up := x_{\scndmax}$;\enspace $\alpha := \lo + \nicefrac{1}{2}\cdot (\up - \lo)$\;
  $\left(\Psi(\alpha),\Psi'(\alpha),\Psi''(\alpha),\tilde{\Psi}(\alpha),\tilde{\Psi}'(\alpha),\finished,\ell_1,\ell_2^2,d\right) := \aux(x,\,\lambda_1,\,\lambda_2,\,\alpha)$\;
  \BlankLine
  \tcp{Start root-finding procedure.}
  \While{{\bf not} $\finished$\nllabel{algl:projfunc-rootfind}}
  {%
    \tcp{Update bisection interval.}
    \leIf{$\Psi(a) > 0$}{$\lo := \alpha$}{$\up := \alpha$}
    \tcp{One iteration of root-finding.}
    \lIf{$\solver = \bisection$}{$\alpha := \lo + \nicefrac{1}{2}\cdot (\up - \lo)$}
    \Else(\CommentSty{// Use solvers based on derivatives.})
    {%
      \lIf{$\solver = \newton$}{$\alpha := \alpha - \nicefrac{\Psi(\alpha)}{\Psi'(\alpha)}$}
      \lElseIf{$\solver = \newtonsqr$}{$\alpha := \alpha - \nicefrac{\tilde{\Psi}(\alpha)}{\tilde{\Psi}'(\alpha)}$}
      \ElseIf{$\solver = \halley$}
      {%
        $h := 1 - \nicefrac{\left(\Psi(\alpha)\Psi''(\alpha)\right)}{\left(2\Psi'(\alpha)^2\right)}$\;
        $\alpha := \alpha - \Psi(\alpha) / \left(\max(0.5,\,\min(1.5,\,h))\cdot\Psi'(\alpha)\right)$\;
      }
      \tcp{Use bisection if $\alpha$ out of bounds.}
      \lIf{$\alpha < \lo$ {\bf or} $\alpha > \up$}{$\alpha := \lo + \nicefrac{1}{2}\cdot (\up - \lo)$}
    }
    \tcp{Evaluate auxiliary function anew.}
    $\left(\Psi(\alpha),\Psi'(\alpha),\Psi''(\alpha),\tilde{\Psi}(\alpha),\tilde{\Psi}'(\alpha),\finished,\ell_1,\ell_2^2,d\right) := \aux(x,\,\lambda_1,\,\lambda_2,\,\alpha)$\;
  }
  \BlankLine

  \tcp{Correct interval has been found, compute $\alpha^*$.}
  $\alpha^* := \dfrac{1}{d}\left(\ell_1 - \lambda_1\sqrt{d\ell_2^2 - \ell_1^2} \Big/ \sqrt{d\lambda_2^2 - \lambda_1^2}\right)$\nllabel{algl:projfunc-exact-alpha}\;
  \BlankLine

  \tcp{Compute result of the projection in-place.}
  $\rho := 0$\nllabel{algl:projfunc-comp-res}\;
  \For{$i := 1$ \KwTo $n$}
  {%
    $t := x_i - \alpha^*$;\enspace\leIf{$t > 0$}{$x_i := t$;\enspace $\rho := \rho + t^2$}{$x_i := 0$}
  }
  \lFor{$i := 1$ \KwTo $n$} {$x_i := \left(\nicefrac{\lambda_2}{\sqrt{\rho}}\right)\cdot x_i$}\nllabel{algl:projfunc-comp-beta}

  \KwRet $\left(x,\,\ell_1,\,\ell_2^2,\,d\right)$\;
\end{algorithm}

\begin{proof}
We start by analyzing Algorithm~\ref{alg:auxiliary}, which evaluates $\Psi$ at any given position $\alpha$.
The output includes the values of the auxiliary function, its first and second derivative, and the value of the transformed auxiliary function and its derivative.
There is moreover a Boolean value which indicates whether the interval with the sign change of $\Psi$ has been found, and three additional numbers required to compute the zero $\alpha^*$ of $\Psi$ as soon as the correct interval has been found.

Let $I := \set{i\inint{1}{n} | x_i > \alpha}$ denote the indices of all entries in $x$ larger than $\alpha$.
In the blocks from Line~\ref{algl:auxiliary-initialization} to Line~\ref{algl:auxiliary-scanfinished}, the algorithm scans through all the coordinates of $x$.
It identifies the elements of $I$, and computes numbers $\ell_1$, $\ell_2^2$, $d$, $x_j$ and $x_k$ on the fly.
After Line~\ref{algl:auxiliary-scanfinished}, we clearly have that $\ell_1 = \norm{V_I x}_1$, $\ell_2^2 = \norm{V_I x}_2^2$ and $d = \abs{I}$.
Additionally, $x_j$ and $x_k$ are the left and right neighbors, respectively, of $\alpha$.
Therefore, the requirements of Lemma~\ref{lem:auxfunctionsignchange} are satisfied.

The next two blocks spanning from Line~\ref{algl:auxiliary-compute-psi} to Line~\ref{algl:auxiliary-compute-psi-end} compute scalar numbers according to Lemma~\ref{lem:auxfunctionsignchange}\ref{lem:auxfunctionsignchange_b}, the definition of $\Psi$, the first two derivatives thereof given explicitly in the proof of Lemma~\ref{lem:auxfunction}\ref{lem:auxfunction_c}, the definition of $\tilde{\Psi}$ and its derivative given by the chain rule.
In Line~\ref{algl:auxiliary-sign-change-check}, it is checked whether the conditions from Lemma~\ref{lem:auxfunctionsignchange}\ref{lem:auxfunctionsignchange_c} hold using the statements from Lemma~\ref{lem:auxfunctionsignchange}\ref{lem:auxfunctionsignchange_b}.
The result is stored in the Boolean variable "$\finished$".

Finally all computed numbers are passed back for further processing.
Algorithm~\ref{alg:auxiliary} clearly needs time linear in $n$ and only constant additional space.

Algorithm~\ref{alg:projfunc} performs the actual projection in-place, and outputs values needed for the gradient of the projection.
It uses Algorithm~\ref{alg:auxiliary} as sub-program by calls to the function "$\aux$".
The algorithm first checks whether $\Psi(0) \leq 0$, which is fulfilled when $\sigma(x) \geq \sigma^*$.
In this case, all coordinates survive the projection, computation of $\alpha^*$ is straightforward with Theorem~\ref{thm:representation} using $I = \discint{1}{n}$.

Otherwise, Lemma~\ref{lem:auxfunction}\ref{lem:auxfunction_d} states that $\alpha^*\in\intervaloo{0}{x_{\scndmax}}$.
We can find $\alpha^*$ numerically with standard root-finding algorithms since $\Psi$ is continuous and strictly decreasing on the interval $\intervaloo{0}{x_{\scndmax}}$.
The concrete variant is chosen by the parameter "$\solver$" of Algorithm~\ref{alg:projfunc}, implementation details can be found in \citet{Traub1964} and \citet{Press2007}.

Here, the root-finding loop starting at Line~\ref{algl:projfunc-rootfind} is terminated once Algorithm~\ref{alg:auxiliary} indicates that the correct interval for exact computation of the zero $\alpha^*$ has been identified.
It is therefore not necessary to carry out root-finding until numerical convergence, it is enough to only come sufficiently close to $\alpha^*$.
Line~\ref{algl:projfunc-exact-alpha} computes $\alpha^*$ based on the projection representation given in Theorem~\ref{thm:representation}.
This line is either reached directly from Line~\ref{algl:projfunc-skiprootfind} if $\sigma(x) \geq \sigma^*$, or when the statements from Lemma~\ref{lem:auxfunctionsignchange}\ref{lem:auxfunctionsignchange_c} hold.
The block starting at Line~\ref{algl:projfunc-comp-res} computes $\max\left(x - \alpha^*\cdot e,\ 0\right)$ and stores this point's squared Euclidean norm in the variable $\rho$.
Line~\ref{algl:projfunc-comp-beta} computes the number $\beta^*$ from Theorem~\ref{thm:representation} and multiplies every entry of $\max\left(x - \alpha^*\cdot e,\ 0\right)$ with it, such that $x$ finally contains the projection onto $T$.
It would also be possible to create a new vector for the projection result and leave the input vector untouched, at the expense of additional memory requirements which are linear in the problem dimensionality.

When $\solver = \bisection$, the loop in Line~\ref{algl:projfunc-rootfind} is repeated a constant number of times regardless of $n$ (see Sect.~\ref{sect:lintimeconstspacealg}), and since Algorithm~\ref{alg:auxiliary} terminates in time linear in $n$, Algorithm~\ref{alg:projfunc} needs time only linear in $n$.
Further, the amount of additional memory needed is independent of $n$, as for Algorithm~\ref{alg:auxiliary}, such that the overall space requirements are constant.
Therefore, Algorithm~\ref{alg:projfunc} is asymptotically optimal in the sense of complexity theory.
\qed
\end{proof}

\subsection*{Gradient of the Projection}
\citet{Thom2013} have shown that the projection onto $T$ can be grasped as a function almost everywhere which is differentiable almost everywhere.
An explicit expression for the projection's gradient was derived, which depended on the number of alternating projections required for carrying out the projection.
Based on the characterization we gained through Theorem~\ref{thm:representation}, we can derive a much simpler expression for the gradient which is also more efficient to compute:
\begin{theorem}
\label{thm:projgrad}
Let $x\in\R_{\geq 0}^n\setminus T$ such that $p := \proj_T(x)$ is unique.
Let $\alpha^*,\beta^*\in\R$, $I\subseteq\discint{1}{n}$ and $d := \abs{I}$ be given as in Theorem~\ref{thm:representation}.
When $x_i\neq\alpha^*$ for all $i\inint{1}{n}$, then $\proj_T$ is differentiable in $x$ with $\nicefrac{\partial}{\partial x}\ \proj_T(x) = V_I\transp G V_I$, where the matrix $G\in\R^{d\times d}$ is given by
\begin{displaymath}
  G := \sqrt{\tfrac{b}{a}}E_d - \tfrac{1}{\sqrt{ab}}\left(\lambda_2^2\tilde{e}\tilde{e}\transp + d\tilde{p}\tilde{p}\transp - \lambda_1\left(\tilde{e}\tilde{p}\transp + \tilde{p}\tilde{e}\transp\right)\right)\text{,}
\end{displaymath}
with $a := d\norm{V_I x}_2^2 - \norm{V_I x}_1^2\in\R_{\geq 0}$ and $b := d\lambda_2^2 - \lambda_1^2\in\R_{\geq 0}$.
Here, $E_d\in\R^{d\times d}$ is the identity matrix, $\tilde{e} := V_I e\in\set{1}^d$ is the point where all coordinates are unity, and $\tilde{p} := V_I p\in\R_{>0}^d$.
\end{theorem}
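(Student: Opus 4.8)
The plan is to leverage the closed-form representation from Theorem~\ref{thm:representation} and reduce the differentiation entirely to the coordinates that survive the soft-shrinkage. The first and conceptually decisive step is to show that the active set $I := \set{i\inint{1}{n} | x_i > \alpha^*}$ is \emph{locally constant}. Since $x_i\neq\alpha^*$ for every $i$ by hypothesis, the entries of $x$ are strictly separated from the threshold: $x_j < \alpha^* < x_k$ for the left and right neighbours $x_j,x_k$ of $\alpha^*$ in $\X$. At $x$ the certificate of Lemma~\ref{lem:auxfunctionsignchange}\ref{lem:auxfunctionsignchange_c} then holds with \emph{strict} inequalities, and all quantities entering that certificate depend continuously on the input. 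Hence on a neighbourhood $U$ of $x$ the relative ordering of the entries with respect to $\alpha^*$ is preserved and the certificate persists, so $I$ remains the exact support of $\proj_T(y)$ for all $y\in U$. On $U$ the coordinates outside $I$ vanish identically, which is precisely what yields the sandwich $V_I\transp G V_I$ and reduces the task to differentiating the $d$-dimensional map $\tilde x := V_I y\mapsto \tilde p := V_I\proj_T(y)$.

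On $U$ the representation theorem gives $\tilde p = \beta^*(\tilde x - \alpha^*\tilde e)$ with the explicit smooth dependencies $\alpha^* = \tfrac1d\big(\snorm{V_I x}_1 - \lambda_1\sqrt{a/b}\big)$ and $\beta^* = \sqrt{b/a}$, the latter following from the construction (it is the constant $\beta_h$ of the constructive proof). Both are genuinely differentiable near $x$ because $a>0$ and $b>0$ strictly: the uniqueness argument of Theorem~\ref{thm:representation} exhibits two indices in $I$ with distinct entries, so Cauchy--Schwarz is strict both for $V_I x$ and for $p$. I would then compute the two scalar gradients. Writing $s_1 := \snorm{V_I x}_1 = \tilde e\transp\tilde x$, the pivotal intermediate is $\nicefrac{\partial a}{\partial\tilde x} = 2(d\tilde x - s_1\tilde e)\transp$, from which the chain rule gives $\nicefrac{\partial\alpha^*}{\partial\tilde x} = \tfrac1d\tilde e\transp - \tfrac{\lambda_1}{d\sqrt{ab}}\,w\transp$ and $\nicefrac{\partial\beta^*}{\partial\tilde x} = -\tfrac{\beta^*}{a}\,w\transp$, where $w := d\tilde x - s_1\tilde e$.

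The key simplification is to re-express $w$ through $\tilde p$ and $\tilde e$. Substituting $\tilde x = \tilde p/\beta^* + \alpha^*\tilde e$ together with the identity $d\alpha^* - s_1 = -\lambda_1/\beta^*$ (read off from the formula for $\alpha^*$ and $\beta^* = \sqrt{b/a}$) yields the clean relation $w = \beta^{*-1}(d\tilde p - \lambda_1\tilde e)$. Feeding this into the product-rule expansion
\begin{displaymath}
  \frac{\partial\tilde p}{\partial\tilde x} = \beta^* E_d - \beta^*\tilde e\,\frac{\partial\alpha^*}{\partial\tilde x} + (\tilde x - \alpha^*\tilde e)\,\frac{\partial\beta^*}{\partial\tilde x}
\end{displaymath}
collapses every remaining term into a rank-one outer product among $\set{\tilde e,\tilde p}$. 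Using $a\beta^* = \sqrt{ab}$, the coefficients of $\tilde p\tilde p\transp$, $\tilde e\tilde p\transp$ and $\tilde p\tilde e\transp$ emerge immediately as $-\nicefrac{d}{\sqrt{ab}}$, $+\nicefrac{\lambda_1}{\sqrt{ab}}$ and $+\nicefrac{\lambda_1}{\sqrt{ab}}$, matching $G$. The $\tilde e\tilde e\transp$ coefficient is $-\tfrac{\beta^*}{d} - \tfrac{\lambda_1^2}{d\sqrt{ab}}$, which equals the claimed $-\nicefrac{\lambda_2^2}{\sqrt{ab}}$ after merging the two summands over $\sqrt{ab}$ and invoking $b + \lambda_1^2 = d\lambda_2^2$.

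I expect the main obstacle to be the local-constancy step rather than the algebra: one must argue rigorously that $\proj_T$ genuinely coincides with the smooth shrinkage formula on an \emph{open} neighbourhood, so that the formal Jacobian computed above is the true derivative. This hinges on the strict separation $x_i\neq\alpha^*$ and on propagating the support certificate of Lemma~\ref{lem:auxfunctionsignchange}\ref{lem:auxfunctionsignchange_c} to nearby inputs. Once that is secured, the remaining computation is mechanical; the only place demanding care is the $\tilde e\tilde e\transp$ coefficient, where the $\beta^*$-term and the $\lambda_1^2$-term must be combined using $b = d\lambda_2^2 - \lambda_1^2$.
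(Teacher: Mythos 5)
Your proposal is correct and follows essentially the same route as the paper's proof: restrict to the active coordinates via the slicing operator $V_I$, invoke the representation theorem's closed form, differentiate it in the sliced space, and collapse the result into the same rank-one expansion of $G$ (your coefficient bookkeeping, including the merge $b + \lambda_1^2 = d\lambda_2^2$ for the $\tilde{e}\tilde{e}\transp$ term, checks out). The only differences are cosmetic---the paper routes the chain rule through the normalization map $\tilde{q}\mapsto\lambda_2\cdot\nicefrac{\tilde{q}}{\snorm{\tilde{q}}_2}$ whereas you apply the product rule to $\beta^*\left(\tilde{x} - \alpha^*\tilde{e}\right)$ with $\beta^* = \sqrt{\nicefrac{b}{a}}$ made explicit, and your certificate-based argument for the local constancy of $I$ (via Lemma~\ref{lem:auxfunctionsignchange} and strict separation $x_i\neq\alpha^*$) is if anything more careful than the paper's one-line assertion of that fact.
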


\begin{proof}
When $x_i\neq\alpha^*$ for all $i\inint{1}{n}$, then $I$ and $d$ are invariant to local changes in $x$.
Therefore, $\alpha^*$, $\beta^*$ and $\proj_T$ are differentiable as composition of differentiable functions.
In the following, we derive the claimed expression of the gradient of $\proj_T$ in $x$.

Let $\tilde{x} := V_I x\in\R^d$, then $\alpha^* = \nicefrac{1}{d}\cdot\big(\snorm{\tilde{x}}_1 - \lambda_1\sqrt{\nicefrac{a}{b}}\big)$.
Define $\tilde{q} := V_I\cdot \max\left(x - \alpha^*\cdot e,\ 0\right)$, then $\tilde{q} = \tilde{x} - \alpha^*\cdot\tilde{e}\in\R_{> 0}^d$ because $x_i > \alpha^*$ for all $i\in I$.
Further $\tilde{p} = \lambda_2\cdot\nicefrac{\tilde{q}}{\snorm{\tilde{q}}_2}$, and we have $p = V_I\transp V_I p = V_I\transp\tilde{p}$ since $p_i = 0$ for all $i\not\in I$.
Application of the chain rule yields
\begin{displaymath}
  \frac{\partial p}{\partial x}
  = \frac{\partial V_I\transp\tilde{p}}{\partial\tilde{p}}\cdot
    \frac{\partial}{\partial\tilde{q}}\left(\lambda_2\cdot\frac{\tilde{q}}{\snorm{\tilde{q}}_2}\right)\cdot
    \frac{\partial\left(\tilde{x} - \alpha^*\cdot \tilde{e}\right)}{\partial\tilde{x}}\cdot
    \frac{\partial V_Ix}{\partial x}\text{,}
\end{displaymath}
and with $H := \lambda_2\cdot \left(\nicefrac{\partial}{\partial\tilde{q}}\ \left(\nicefrac{\tilde{q}}{\snorm{\tilde{q}}_2}\right)\right)\cdot\left( \nicefrac{\partial}{\partial\tilde{x}}\ \left(\tilde{x} - \alpha^*\cdot \tilde{e}\right)\right)\in\R^{d\times d}$ follows $\nicefrac{\partial p}{\partial x} = V_I\transp H V_I$,
thus it only remains to show $G = H$.

One obtains $\nicefrac{\partial}{\partial\tilde{q}}\ \left(\nicefrac{\tilde{q}}{\snorm{\tilde{q}}_2}\right) = \left(E_d - \nicefrac{\tilde{q}\tilde{q}\transp}{\snorm{\tilde{q}}_2^2}\right) / \snorm{\tilde{q}}_2$.
Since all the entries of $\tilde{q}$ and $\tilde{x}$ are positive, their $L_1$ norms equal the dot product with $\tilde{e}$.
We have $\snorm{\tilde{q}}_1 = \snorm{V_I x}_1 - d\alpha^* = \lambda_1\sqrt{\nicefrac{a}{b}}$, and we obtain that
$\snorm{\tilde{q}}_2^2 = \snorm{\tilde{x}}_2^2 - \alpha^*\cdot\big(\snorm{\tilde{x}}_1 + \lambda_1\sqrt{\nicefrac{a}{b}}\big) = \snorm{\tilde{x}}_2^2 - \nicefrac{1}{d}\cdot\big(\snorm{\tilde{x}}_1^2 - \lambda_1^2\cdot\nicefrac{a}{b}\big) = \nicefrac{a}{d}\cdot\left(1 + \nicefrac{\lambda_1^2}{b}\right) = \lambda_2^2\cdot\nicefrac{a}{b}$.
Now we can compute the gradient of $\alpha$.
Clearly $b$ is independent of $\tilde{x}$.
It is $\nicefrac{\partial a}{\partial\tilde{x}} = 2d\tilde{x}\transp - 2\snorm{\tilde{x}}_1\tilde{e}\transp\in\R^{1\times d}$.
Since $\tilde{x} = \tilde{q} + \alpha^*\cdot\tilde{e}$ it follows that $d\tilde{x} - \snorm{\tilde{x}}_1\tilde{e} = d\tilde{q} - \lambda_1\sqrt{\nicefrac{a}{b}}\cdot\tilde{e}$, and hence $\nicefrac{\partial \sqrt{a}}{\partial\tilde{x}} = \sqrt{\nicefrac{1}{a}}\cdot \big(d\tilde{q} - \lambda_1\sqrt{\nicefrac{a}{b}}\cdot\tilde{e}\big)\transp$.
Therefore, we conclude that $\nicefrac{\partial \alpha^*}{\partial\tilde{x}} = \left(\nicefrac{\lambda_2^2}{b}\right)\cdot\tilde{e}\transp - \left(\nicefrac{\lambda_1}{\sqrt{ab}}\right)\cdot\tilde{q}\transp\in\R^{1\times d}$.

By substitution into $H$ and multiplying out we see that
\begin{align*}
  H &= \sqrt{\tfrac{b}{a}}\left(E_d - \tfrac{b}{\lambda_2^2 a}\tilde{q}\tilde{q}\transp\right)
      \left(E_d - \tfrac{\lambda_2^2}{b}\tilde{e}\tilde{e}\transp + \tfrac{\lambda_1}{\sqrt{ab}}\tilde{e}\tilde{q}\transp\right)\\
  &= \sqrt{\tfrac{b}{a}}\Big(E_d - \tfrac{\lambda_2^2}{b}\tilde{e}\tilde{e}\transp + \tfrac{\lambda_1}{\sqrt{ab}}\tilde{e}\tilde{q}\transp\\
  &\phantom{= \sqrt{\tfrac{b}{a}}\Big(} -\tfrac{b}{\lambda_2^2 a}\tilde{q}\tilde{q}\transp + \tfrac{\lambda_1}{\sqrt{ab}}\tilde{q}\tilde{e}\transp - \tfrac{\lambda_1^2}{\lambda_2^2 a}\tilde{q}\tilde{q}\transp \Big)\text{,}
\end{align*}
where $\tilde{q}\tilde{q}\transp\tilde{e}\tilde{e}\transp = \tilde{q}\left(\tilde{q}\transp\tilde{e}\right)\tilde{e}\transp = \lambda_1\sqrt{\nicefrac{a}{b}}\cdot\tilde{q}\tilde{e}\transp$
and $\tilde{q}\tilde{q}\transp\tilde{e}\tilde{q}\transp = \lambda_1\sqrt{\nicefrac{a}{b}}\cdot\tilde{q}\tilde{q}\transp$ have been used.
The claim then follows with $\nicefrac{b}{\lambda_2^2 a} + \nicefrac{\lambda_1^2}{\lambda_2^2 a} = \nicefrac{d}{a}$ and $\tilde{q} = \sqrt{\nicefrac{a}{b}}\cdot\tilde{p}$.
\qed
\end{proof}

\begin{algorithm}[t]
  \caption{Product of the gradient of the projection onto $T$ with an arbitrary vector.}
  \label{alg:projgrad}
  \SetAlgoLined
  \SetKw{KwGoTo}{go to}
  \KwIn{A point $y\in\R^n$, target norms $\lambda_1, \lambda_2\in\R$, and the results of Algorithm~\ref{alg:projfunc}: Projection $p = \proj_T(x)$, numbers $\ell_1, \ell_2^2\in\R$ and $d\in\N$.}
  \KwOut{$z := \left(\nicefrac{\partial}{\partial x}\ \proj_T(x)\right)\cdot y\in \R^n$.}
  \BlankLine

  \tcp{Scan and slice input vectors.}
  $j := 0$;\enspace $\tilde{p}\in\set{0}^d$;\enspace $\tilde{y}\in\set{0}^d$;\enspace $\sy := 0$;\enspace $\scpy := 0$\;
  \For{$i := 1$ \KwTo $n$ {\bf where} $p_i > 0$}
  {%
    $j := j + 1$;\enspace $\tilde{p}_j := p_i$;\enspace $\tilde{y}_j := y_i$\;
    $\sy := \sy + \tilde{y}_j$;\enspace $\scpy := \scpy + \tilde{p}_j\cdot \tilde{y}_j$\;
  }\nllabel{algl:projgrad-init}
  \BlankLine

  \tcp{Compute gradient product in sliced space.}
  $a := d\ell_2^2 - \ell_1^2$;\enspace $b := d\lambda_2^2 - \lambda_1^2$\;
  $\tilde{y} := \sqrt{\nicefrac{b}{a}}\cdot\tilde{y}$\nllabel{algl:projgrad-compres}\;
  $\tilde{y} := \tilde{y} + \sqrt{\nicefrac{1}{ab}}\cdot\left(\lambda_1\cdot\sy - d\cdot\scpy\right)\cdot\tilde{p}$\;
  $\tilde{y} := \tilde{y} + \sqrt{\nicefrac{1}{ab}}\cdot\left(\lambda_1\cdot\scpy - \lambda_2^2\cdot\sy\right)\cdot\tilde{e}$\nllabel{algl:projgrad-compres-end}\;
  \BlankLine

  \tcp{Un-slice to yield final result.}
  $j := 0$;\enspace $z\in\set{0}^n$\;
  \lFor{$i := 1$ \KwTo $n$ {\bf where} $p_i > 0$}{$j := j + 1$;\enspace $z_i := \tilde{y}_j$}
  \KwRet $z$\;
\end{algorithm}

The gradient has a particular simple form, as it is essentially a scaled identity matrix with additive combination of scaled dyadic products of simple vectors.
In the situation where not the entire gradient but merely its product with an arbitrary vector is required (as for example in conjunction with the backpropagation algorithm), simple vector operations are already enough to compute the product:
\begin{corollary}
\label{cor:projgrad}
Algorithm~\ref{alg:projgrad} computes the product of the gradient of the sparseness projection with an arbitrary vector in time and space linear in the problem dimensionality $n$.
\end{corollary}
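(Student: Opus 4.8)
The plan is to show that Algorithm~\ref{alg:projgrad} realizes the matrix--vector product $z = V_I\transp G V_I y$ guaranteed by Theorem~\ref{thm:projgrad}, executing the three factors $V_I$, $G$ and $V_I\transp$ in turn while never forming the full $d\times d$ matrix $G$. First I would verify that the scanning loop ending at Line~\ref{algl:projgrad-init} correctly slices the inputs. Since $I = \set{i\inint{1}{n} | p_i > 0}$ by Theorem~\ref{thm:representation}, iterating over exactly those indices and storing the surviving entries in order produces $\tilde{p} = V_I p$ and $\tilde{y} = V_I y$, while the running accumulators yield $\sy = \sum_j\tilde{y}_j = \tilde{e}\transp\tilde{y}$ and $\scpy = \sum_j\tilde{p}_j\tilde{y}_j = \tilde{p}\transp\tilde{y}$, where $\tilde{e} = V_I e$ as in Theorem~\ref{thm:projgrad}.

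The central step is to check that the block from Line~\ref{algl:projgrad-compres} to Line~\ref{algl:projgrad-compres-end} computes $G\tilde{y}$. Substituting the explicit expression for $G$ from Theorem~\ref{thm:projgrad} and distributing over $\tilde{y}$, the rank-two correction collapses because each dyadic product acts on $\tilde{y}$ through an inner product: the four terms built from $\tilde{e}\tilde{e}\transp$, $\tilde{p}\tilde{p}\transp$, $\tilde{e}\tilde{p}\transp$ and $\tilde{p}\tilde{e}\transp$ reduce to scalar multiples of $\tilde{e}$ and $\tilde{p}$ via $\tilde{e}\transp\tilde{y} = \sy$ and $\tilde{p}\transp\tilde{y} = \scpy$. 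Collecting coefficients gives
\begin{displaymath}
  G\tilde{y} = \sqrt{\tfrac{b}{a}}\,\tilde{y} + \tfrac{1}{\sqrt{ab}}\left(\lambda_1\sy - d\,\scpy\right)\tilde{p} + \tfrac{1}{\sqrt{ab}}\left(\lambda_1\scpy - \lambda_2^2\sy\right)\tilde{e}\text{,}
\end{displaymath}
which is exactly what the three successive in-place updates to $\tilde{y}$ accumulate. The final loop then sets $z_i := \tilde{y}_j$ on the support and leaves $z_i = 0$ elsewhere, that is $z = V_I\transp(G\tilde{y}) = V_I\transp G V_I y$, as required.

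For the complexity claim I would observe that each of the two loops makes a single pass over the $n$ coordinates, and the intermediate block performs a constant number of scalar--vector operations on vectors of length $d\leq n$; hence the running time is linear in $n$. The only storage beyond a constant number of scalars is the sliced vectors $\tilde{p}$, $\tilde{y}$ of length $d$ together with the output $z$ of length $n$, so the space is likewise linear in $n$.

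I expect the only delicate point to be the bookkeeping of the central step: confirming that the grouping of the four dyadic terms into the two coefficients multiplying $\tilde{e}$ and $\tilde{p}$, together with their signs, matches the two addition lines of the algorithm verbatim, and that $a$ and $b$ are recovered correctly from the outputs $\ell_1$, $\ell_2^2$, $d$ of Algorithm~\ref{alg:projfunc} through $a = d\ell_2^2 - \ell_1^2 = d\norm{V_I x}_2^2 - \norm{V_I x}_1^2$ and $b = d\lambda_2^2 - \lambda_1^2$. Everything else---the correctness of the slicing operator $V_I$ and its transpose, and the linear time and space bounds---is routine.
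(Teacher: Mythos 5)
Your proposal is correct and follows essentially the same route as the paper's proof: slice $y$ and $p$ to obtain $\tilde{y}$, $\tilde{p}$ and the inner products $\sy = \tilde{e}\transp\tilde{y}$, $\scpy = \tilde{p}\transp\tilde{y}$, collapse the dyadic products in $G$ so that $G\tilde{y}$ becomes a scaled $\tilde{y}$ plus scalar multiples of $\tilde{p}$ and $\tilde{e}$ (your collected coefficients match the paper's expression for $G\tilde{y}$ exactly), and un-slice via $V_I\transp$. The complexity argument and the identification $a = d\ell_2^2 - \ell_1^2$, $b = d\lambda_2^2 - \lambda_1^2$ from the outputs of Algorithm~\ref{alg:projfunc} also coincide with the paper's reasoning.
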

\begin{proof}
Let $y\in\R^n$ be an arbitrary vector in the situation of Theorem~\ref{thm:projgrad}, and define $\tilde{y} := V_I y\in\R^d$.
Then one obtains
\begin{align*}
  G\tilde{y} &= \sqrt{\tfrac{b}{a}}\tilde{y} + \tfrac{1}{\sqrt{ab}}\left(\lambda_1\cdot\tilde{e}\transp\tilde{y} - d\cdot\tilde{p}\transp\tilde{y} \right)\cdot\tilde{p}\\
  &\phantom{= \sqrt{\tfrac{b}{a}}\tilde{y}\;} + \tfrac{1}{\sqrt{ab}}\left(\lambda_1\cdot\tilde{p}\transp\tilde{y} - \lambda_2^2\cdot\tilde{e}\transp\tilde{y}\right)\cdot\tilde{e}\in\R^d\text{.}
\end{align*}
Algorithm~\ref{alg:projgrad} starts by computing the sliced vectors $\tilde{p}$ and $\tilde{y}$, and computes "$\sy$" which equals $\tilde{e}\transp\tilde{y}$ and "$\scpy$" which equals $\tilde{p}\transp\tilde{y}$ after Line~\ref{algl:projgrad-init}.
It then computes $a$ and $b$ using the numbers output by Algorithm~\ref{alg:projfunc}.
From Line~\ref{algl:projgrad-compres} to Line~\ref{algl:projgrad-compres-end}, the product $G\tilde{y}$ is computed in-place by scaling of $\tilde{y}$, adding a scaled version of $\tilde{p}$, and adding a scalar to each coordinate.
Since $\left(\nicefrac{\partial}{\partial x}\ \proj_T(x)\right)\cdot y = V_I\transp G\tilde{y}$, it just remains to invert the slicing.
The complexity of the algorithm is clearly linear, both in time and space.
\qed
\end{proof}

It has not escaped our notice that Corollary~\ref{cor:projgrad} can also be used to determine the eigensystem of the projection's gradient, which may prove useful for further analysis of gradient-based learning methods involving the sparseness-enforcing projection operator.

\bibliographystyle{plainnat}
\bibliography{the}
\end{document}